\documentclass[12pt]{article}

\usepackage[utf8]{inputenc}
\usepackage[margin=1in]{geometry}

\usepackage{amsmath,amssymb,amsthm}
\usepackage{graphicx}
\usepackage{authblk}

\usepackage{tikz}
\usepackage{tkz-graph}
\usetikzlibrary{shapes.geometric}
\usetikzlibrary{backgrounds}
\usetikzlibrary{arrows.meta}
\usetikzlibrary{positioning,calc}

\usepackage[colorlinks=true, allcolors=blue]{hyperref}
\usepackage{float}
\usepackage{caption}
\usepackage{subfig}
\usepackage{enumerate}

\newtheorem{theorem}{Theorem}[section]

\newtheorem{corollary}[theorem]{Corollary}
\newtheorem{proposition}[theorem]{Proposition}

\newtheorem{setting}{Setting}[section]
\newtheorem{definition}[theorem]{Definition}
\newtheorem{example}[theorem]{Example}

\numberwithin{equation}{section}

\providecommand{\keywords}[1]{%
  \small 
  \textbf{\textit{Keywords---}} #1
}

\newcommand{\indep}{\perp \!\!\! \perp}
\newcommand{\twoset}[2]{%
  \mathop{#2}\limits^{\vbox to .3ex{%
  \kern -0.7ex\hbox{$#1$}\vss}}}

\makeatletter
\newcommand\sfcodefork{%
  \ifnum\the\spacefactor=1000 \expandafter\@firstoftwo\else\expandafter\@secondoftwo\fi
}%

\makeatother

\title{Stable Blanket with Hidden Variables and Cycles}

\author[1]{Hanqing Xiang\thanks{This work is based on research conducted during the author's MSc studies at the University of Copenhagen under the supervision of Niklas Pfister.}}
\affil[1]{Department of Mathematics, KTH Royal Institute of Technology, Sweden}

\date{}

\usepackage[T1]{fontenc}
\usepackage[utf8]{inputenc}
\begin{document}

\nonfrenchspacing

\maketitle

\begin{abstract}

Stabilized regression aims to identify a set of predictors whose conditional relationship with a response variable remains invariant across different environments. Existing graphical characterizations of the stable blanket are mainly developed for structural causal models (SCMs) without hidden variables or causal cycles. However, latent variables and feedback relationships naturally arise in many applications, and they can change both the Markov blanket and the set of predictors that remain stable under interventions. This paper studies stable blankets in graphical causal models with hidden variables, causal cycles, and both features simultaneously. For models with hidden variables, we use acyclic directed mixed graphs (ADMGs) and $m$-separation to characterize the Markov blanket and to construct intervention-stable predictor sets. We introduce the notion of an intervened sub-district and use it to describe how interventions may affect districts connected to the response. For models with cycles, we work with directed graphs (DGs) and directed mixed graphs (DMGs) together with $\sigma$-separation, treating strongly connected components (SCCs) as the basic graphical units. We then combine these ideas to analyze models with both hidden variables and cycles. The main results give graphical characterizations of Markov blankets, stable frontiers, and stable blankets in these generalized settings. In particular, we identify conditions under which the response is conditionally independent of intervention variables given a suitable predictor set, and we describe when such sets are minimal or unique. These results extend the graphical interpretation of stabilized regression beyond acyclic fully observed models.

\end{abstract}

\keywords{Markov blanket, stable blanket, stabilized regression, hidden variables, causal cycles, $m$-separation, $\sigma$-separation}

\section{Introduction}

Statistical methods such as regression are widely used to describe relationships between a response variable and a set of predictors. In multi-environment settings, however, a regression relationship that holds in one environment may fail to generalize to another. This issue is closely related to the distinction between statistical association and the underlying causal data-generating process. Structural causal models (SCMs) provide a framework for representing causal relationships and for interpreting regression across different environments \cite{Judea09,Spirtes00}. Within this framework, \cite{Niklas21} proposed stabilized regression, a methodology for identifying predictive relationships that remain stable under changes of environment. The central graphical object in stabilized regression is the stable blanket, a set of predictors whose conditional relationship with the response can extrapolate to unseen environments. Figure~\ref{fig:M1} illustrates the Markov blanket and the stable blanket in a model without hidden variables or causal cycles. The graphical framework in \cite{Niklas21}, however, does not cover hidden variables or causal cycles. This paper studies more general settings in which latent variables and feedback loops may be present, and aims to characterize stable predictor sets that are both informative for the response and invariant across unseen interventional environments.

    

\begin{figure}[ht]
\centering
\begin{tikzpicture}[
    >=Stealth,
    thick,
    scale=1.3,
    every node/.style={font=\normalsize},
    obsnode/.style={
        circle,
        draw=black,
        fill=white,
        minimum size=9mm,
        inner sep=0pt
    },
    intnode/.style={
        rectangle,
        draw=black,
        fill=white,
        minimum width=9mm,
        minimum height=9mm,
        inner sep=0pt
    },
    ynode/.style={
        double,
        circle,
        draw=black,
        fill=white,
        minimum size=10mm,
        inner sep=0pt,
        line width=0.8pt
    }
]

    \node[intnode] (i1) at (0,5) {$I^1$};
    \node[obsnode] (x1) at (2,5) {$X^1$};
    \node[obsnode] (x2) at (4,5) {$X^2$};
    \node[ynode]   (y)  at (2,3.5) {$Y$};
    \node[obsnode] (x6) at (0.5,3.5) {$X^6$};
    \node[obsnode] (x5) at (1.2,2.2) {$X^5$};
    \node[obsnode] (x7) at (2,1) {$X^7$};
    \node[obsnode] (x3) at (3,2.2) {$X^3$};
    \node[obsnode] (x4) at (3.6,3.5) {$X^4$};
    \node[intnode] (i2) at (4.6,2.2) {$I^2$};

    \draw[->] (i1) -- (x1);
    \draw[->] (x1) -- (y);
    \draw[->] (x2) -- (y);
    \draw[->] (x6) -- (x5);
    \draw[->] (y) -- (x5);
    \draw[->] (y) -- (x3);
    \draw[->] (x4) -- (x3);
    \draw[->] (i2) -- (x3);
    \draw[->] (x5) -- (x7);
    \draw[->] (x3) -- (x7);

\end{tikzpicture}
\caption{A DAG without hidden variables. The intervention nodes are $I^1$ and $I^2$. The Markov blanket of $Y$ is $\{X^1,X^2,X^3,X^4,X^5,X^6\}$, while the stable blanket of $Y$ is $\{X^1,X^2,X^5,X^6\}$.}
\label{fig:M1}
\end{figure}

Hidden variables and causal cycles arise naturally in many applications, including systems biology, neuroscience, econometrics, and cognitive science. In complex biological systems, for example, it is often impossible to observe all relevant factors, and unobserved variables may induce additional dependencies among the observed variables. In econometric models, reciprocal causal relationships such as those between price and supply may lead to cycles in the corresponding graph. In such settings, the acyclic fully observed framework may no longer identify the desired stable blanket. Figure~\ref{fig:M2} shows two simple examples. The left graph illustrates a model with a hidden variable, while the right graph illustrates a model with a causal cycle. These examples motivate the use of richer graphical models to describe the relationship between the response and the predictors.

\begin{figure}[ht]
\centering

\begin{minipage}[c]{0.44\textwidth}
\centering
\begin{tikzpicture}[
    baseline={(current bounding box.center)},
    >=Stealth,
    thick,
    scale=1.15,
    every node/.style={font=\normalsize},
    obsnode/.style={
        circle,
        draw=black,
        fill=white,
        minimum size=9mm,
        inner sep=0pt
    },
    intnode/.style={
        rectangle,
        draw=black,
        fill=white,
        minimum width=9mm,
        minimum height=9mm,
        inner sep=0pt
    },
    hiddennode/.style={
        circle,
        draw=black,
        dashed,
        fill=gray!15,
        minimum size=9mm,
        inner sep=0pt
    },
    ynode/.style={
        double,
        circle,
        draw=black,
        fill=white,
        minimum size=10mm,
        inner sep=0pt,
        line width=0.8pt
    }
]
    \node[obsnode]    (x1) at (0,4) {$X^1$};
    \node[hiddennode] (h)  at (2,4) {$H$};
    \node[ynode]      (y1) at (1,2.5) {$Y$};
    \node[obsnode]    (x2) at (3,2.5) {$X^2$};
    \node[intnode]    (i1) at (-2,2.5) {$I^1$};
    \node[obsnode]    (x3) at (-0.5,2.5) {$X^3$};

    \draw[->] (i1) -- (x3);
    \draw[->] (x1) -- (y1);
    \draw[->] (y1) -- (x3);
    \draw[->] (h) -- (y1);
    \draw[->] (h) -- (x2);
\end{tikzpicture}
\end{minipage}
\hfill
\begin{minipage}[c]{0.44\textwidth}
\centering
\begin{tikzpicture}[
    baseline={(current bounding box.center)},
    >=Stealth,
    thick,
    scale=1.15,
    every node/.style={font=\normalsize},
    obsnode/.style={
        circle,
        draw=black,
        fill=white,
        minimum size=9mm,
        inner sep=0pt
    },
    ynode/.style={
        double,
        circle,
        draw=black,
        fill=white,
        minimum size=10mm,
        inner sep=0pt,
        line width=0.8pt
    }
]
    \node[ynode]   (y2) at (0,2) {$Y$};
    \node[obsnode] (u1) at (2,4) {$X^1$};
    \node[obsnode] (u2) at (4,2) {$X^2$};
    \node[obsnode] (u3) at (3,0) {$X^3$};
    \node[obsnode] (u4) at (1,0) {$X^4$};

    \draw[->] (y2) -- (u1);
    \draw[->] (u1) -- (u2);
    \draw[->] (u2) -- (u3);
    \draw[->] (u3) -- (u4);
    \draw[->] (u4) -- (y2);
\end{tikzpicture}
\end{minipage}

\caption{Two motivating examples. Left: a model with a hidden variable $H$ and an intervention $I^1$. The predictor $X^2$ is relevant for the stable blanket of $Y$ because $X^2$ and $Y$ share a hidden common cause. Right: a model with a causal cycle. Although $X^2$ is not directly adjacent to $Y$, it may still be one of the most informative predictors for $Y$.}
\label{fig:M2}
\end{figure}

Several approaches have been developed for graphical models with hidden variables. For example, \cite{Friedman13} incorporated hidden variables into graphical models and considered recovery of the underlying model by estimating latent variables. The Markov blanket has also been studied in models with hidden variables and cycles, including in the context of cognitive science, where \cite{Bruineberg21} distinguish the Markov blanket from the so-called realistic blanket using Bayesian inference. In another related direction, \cite{Triantafillou21} considered hidden variables in settings with a single treatment and used observational data to improve feature selection and effect estimation. In contrast, the present paper allows multiple interventions and focuses on graphical characterizations of stable predictor sets. Two graphical formalisms are particularly relevant: acyclic directed mixed graphs (ADMGs) \cite{Richardson03} and maximal ancestral graphs (MAGs) \cite{Richardson02}. MAGs encode ancestral relationships, whereas ADMGs represent hidden-variable-induced dependencies through bidirected edges obtained by latent projection. We work with ADMGs because their district structure provides a convenient way to characterize Markov blankets and stable predictor sets. In this setting, the Markov blanket can be decomposed into several graphical components, but not every graph admits an environment-independent predictor set. Such stable sets are useful because regression functions based on them can generalize to unseen environments, and because they can provide interpretations closer to the stable causal relationship. At the same time, they may contain less information about the response than the full Markov blanket.

We next consider models with reciprocal causal relationships, where the graphical model contains cycles. In this case, the usual properties of acyclic graphical models no longer apply. We therefore study the relevant Markov property using $\sigma$-separation in directed graphs (DGs) with cycles. In cyclic models, variables belonging to the same strongly connected component (SCC) play a similar graphical role. We use these components as basic units and obtain characterizations of Markov blankets and stable blankets that parallel the acyclic case studied in \cite{Niklas21}.

To the best of our knowledge, stable blankets have not previously been graphically characterized in models with hidden variables and causal cycles simultaneously. We study this problem using directed mixed graphs (DMGs). This framework includes both hidden-variable models and cyclic models as special cases, but the results from the two separate settings cannot be combined directly. Additional conditions are needed to ensure that the relevant separation and stability properties continue to hold. We adapt the construction of stable predictor sets from the hidden-variable case and the cyclic case to this more general setting. Although DMGs with cycles may be less directly interpretable causally, they can be related to the two simpler settings through decompositions of cycles \cite{Bongers21}. Under the corresponding Markov property, we show that the conditional relationship between the response and the stable predictor set remains invariant under interventions.

The previous discussion assumes that interventions do not act on the district of the response. We also consider a weaker assumption under which stable blankets may still exist even when interventions act on the district of the response. This allows us to classify predictors into different graphical types and extends the applicability of the framework.

\paragraph{Contribution}
The contributions of this paper are as follows. First, in models with hidden variables, we introduce the notion of an intervened sub-district and use it to construct predictor sets on which the conditional distribution of the response is unaffected by interventions. We also characterize the corresponding smallest set and provide conditions for uniqueness. Second, in models with cycles, we characterize the unique smallest predictor set that is both informative for the response and invariant under interventions. Third, in models with both hidden variables and cycles, we establish graphical results for predictor sets that remain stable under interventions. Finally, we identify a necessary condition for the existence of a set that separates the response from the intervention variables and prove a corresponding theoretical result.

\paragraph{Outline}
The paper is organized as follows. Section~2 recalls the necessary preliminaries on graphical notation and separation criteria. Section~3 studies the Markov blanket and stable blanket in graphical models with hidden variables. Section~4 analyzes the Markov blanket and stable blanket in graphical models with causal cycles. Section~5 considers graphical models with both hidden variables and causal cycles, characterizes the corresponding Markov blanket and stable blanket, and relates them to the previous definitions. Section~6 discusses both blankets under weaker assumptions on interventions.

\section{Preliminaries}
\label{sec:preliminaries}

This section recalls the graphical notation and separation criteria used throughout the paper. We only introduce the notions that are needed for the main results. Standard graph-theoretic concepts such as paths, parents, children, ancestors, descendants, spouses, districts, colliders, and induced subgraphs are used in their usual sense; see, for example, \cite{Lauritzen96,Bongers21}.

\subsection{Directed and Mixed Graphs}

A directed graph (DG) is a pair $\mathcal{G}=(\mathcal{V},\mathcal{E})$, where $\mathcal{V}$ is a set of nodes and $\mathcal{E}$ is a set of directed edges. We write $i\to j$ if there is a directed edge from $i$ to $j$. A directed mixed graph (DMG) is a triple $\mathcal{G}=(\mathcal{V},\mathcal{E},\mathcal{B})$, where $\mathcal{B}$ is a set of bidirected edges. We write $i\leftrightarrow j$ if there is a bidirected edge between $i$ and $j$. Thus a DG is a special case of a DMG with $\mathcal{B}=\emptyset$.

A directed acyclic graph (DAG) is a DG with no directed cycles. An acyclic directed mixed graph (ADMG) is a DMG whose directed part has no directed cycles. For a DMG $\mathcal{G}$ and a node $i\in\mathcal{V}$, we write $pa(i)$, $ch(i)$, $an(i)$, $de(i)$, $sp(i)$, and $dis(i)$ for the parents, children, ancestors, descendants, spouses, and district of $i$, respectively. For a set $S\subseteq\mathcal{V}$, these notions are extended by taking unions; for example,
\[
pa(S)=\bigcup_{i\in S}pa(i).
\]
The induced subgraph of $\mathcal{G}$ over $S$ is denoted by $\mathcal{G}_S$.

When cycles are present, strongly connected components (SCCs) are needed. The SCC of a node $i$, denoted by $scc(i)$, is the maximal set $S\subseteq\mathcal{V}$ such that $i\in S$ and every two nodes in $S$ are connected by directed paths in both directions \cite{Mooij17}. Equivalently,
\[
scc(i)=(an(i)\cap de(i))\cup \{i\}.
\]

\subsection{Separation Criteria}

In DAGs, conditional independence is read off using $d$-separation. Since the fully observed acyclic case is only used as the baseline setting in this paper, we do not recall the full definition here; see \cite{Lauritzen96}. We write
\[
X \indep_d Y \mid Z
\]
when the node sets $X$ and $Y$ are $d$-separated by $Z$.

For ADMGs, we use $m$-separation \cite{Richardson03}. A path is $m$-blocked by a node set $Z$ if either it contains a non-collider in $Z$, or it contains a collider that is not in $an(Z)$. Two node sets $X,Y\subseteq\mathcal{V}$ are $m$-separated by $Z$, written
\[
X \indep_m Y \mid Z,
\]
if every path between a node in $X$ and a node in $Y$ is $m$-blocked by $Z$.

For DMGs with cycles, we use $\sigma$-separation \cite{Mooij17}. A path is $\sigma$-blocked by a node set $Z$ if one of the following conditions holds:
\begin{enumerate}[(i)]
    \item one of the endpoints of the path belongs to $Z$;
    \item the path contains a collider $j$ such that $j\notin Z$ and $j\notin an(Z)$;
    \item the path contains a non-endpoint non-collider $i\in Z$ whose adjacent edge on the path points towards a node outside $scc(i)$.
\end{enumerate}
Two node sets $X,Y\subseteq\mathcal{V}$ are $\sigma$-separated by $Z$, written
\[
X \indep_\sigma Y \mid Z,
\]
if every path between a node in $X$ and a node in $Y$ is $\sigma$-blocked by $Z$.

\subsection{Latent Projection}

Latent projection provides a graphical representation of the observed part of a model with hidden variables. Let $\mathcal{G}=(\mathcal{V},\mathcal{E},\mathcal{B})$ be a DMG with node set
\[
\mathcal{V}=O\mathop{\dot{\bigcup}}H,
\]
where $O$ is the set of observed nodes and $H$ is the set of hidden nodes. The latent projection of $\mathcal{G}$ onto $O$ is the DMG $\mathcal{G}^O$ over $O$ defined as follows \cite{Bongers21,Verma93}:
\begin{enumerate}[(i)]
    \item $i\to j$ is in $\mathcal{G}^O$ if and only if there exists a directed path
    \[
    i\to h^1\to \cdots \to h^n \to j
    \]
    in $\mathcal{G}$ with $n\geq 0$ and $h^1,\ldots,h^n\in H$;

    \item $i\leftrightarrow j$ is in $\mathcal{G}^O$ if and only if there is a path through hidden nodes with arrowheads into both $i$ and $j$. More explicitly, this holds if either
    \[
    i \leftarrow h^1 \leftarrow \cdots \leftarrow h^n
    \leftrightarrow
    \tilde h^m \rightarrow \cdots \rightarrow \tilde h^1 \rightarrow j
    \]
    for some $n,m\geq 0$, or if
    \[
    i \leftarrow h^1 \leftarrow \cdots \leftarrow h^n
    =
    \tilde h^m \rightarrow \cdots \rightarrow \tilde h^1 \rightarrow j
    \]
    for some $n,m\geq 1$, where all intermediate nodes are in $H$.
\end{enumerate}
In particular, when the original graph is directed, bidirected edges in the latent projection represent dependencies induced by hidden common causes.

\subsection{Markov Properties and Structural Causal Models}

Let $\mathbf{X}=(X^i)_{i\in\mathcal{V}}$ be random variables indexed by the nodes of a graph $\mathcal{G}$. A distribution $P(\mathbf{X})$ satisfies the Markov property with respect to a DAG $\mathcal{G}$ if
\[
A \indep_d B \mid C
\quad \Longrightarrow \quad
X_A \indep X_B \mid X_C
\]
for all disjoint node sets $A,B,C\subseteq\mathcal{V}$ \cite{Lauritzen96}. Similarly, for an ADMG we use $m$-separation,
\[
A \indep_m B \mid C
\quad \Longrightarrow \quad
X_A \indep X_B \mid X_C,
\]
and for a DMG with cycles we use $\sigma$-separation,
\[
A \indep_\sigma B \mid C
\quad \Longrightarrow \quad
X_A \indep X_B \mid X_C.
\]
We then write
\[
P(\mathbf{X})\mapsto(\mathcal{G},d\text{-separation}),\qquad
P(\mathbf{X})\mapsto(\mathcal{G},m\text{-separation}),\qquad
P(\mathbf{X})\mapsto(\mathcal{G},\sigma\text{-separation}),
\]
respectively.

Structural causal models (SCMs) provide an important class of models for which such graphical Markov properties can be studied. An SCM over random variables $X=(X^1,\ldots,X^p)$ is a collection of assignments
\[
X^i=f^i(pa(X^i),\varepsilon^i),
\qquad i=1,\ldots,p,
\]
where the noise variables $\varepsilon^1,\ldots,\varepsilon^p$ are jointly independent \cite{Judea09}. An SCM with hidden variables additionally contains unobserved variables $H^1,\ldots,H^q$ with structural assignments of the same form. Such a model induces a DG by adding an edge from each variable on the right-hand side of an assignment to the corresponding variable on the left-hand side.

When hidden variables are marginalized out, latent projection gives a mixed graph over the observed variables. Under suitable solvability assumptions, marginalization of SCMs with hidden variables is compatible with latent projection \cite{Bongers21}. The conditions under which $d$-separation, $m$-separation, and $\sigma$-separation imply conditional independence are discussed in \cite{Lauritzen96}, \cite{Richardson03}, and \cite{Mooij17}, respectively.

\subsection{Baseline Setting: SCMs without Hidden Variables and Cycles}

We recall the baseline setting of stabilized regression from \cite{Niklas21}.

\begin{setting}
\label{Setting 1}
Let
\[
X\in \mathcal{X}=\mathcal{X}^1\times\cdots\times\mathcal{X}^p
\]
be observable predictors, let $Y\in\mathbb{R}$ be a response variable, and let
\[
I=(I^1,\ldots,I^m)\in \mathcal{I}
=
\mathcal{I}^1\times\cdots\times\mathcal{I}^m
\]
be intervention variables encoding changes of environment. Assume an SCM $\mathcal{S}$ over $(I,X,Y)$ such that the induced graph $\mathcal{G}(\mathcal{S})$ is a DAG. The intervention variables are source nodes in $\mathcal{G}(\mathcal{S})$ and do not appear in the assignment of $Y$. Each intervention environment $e$ corresponds to an interventional SCM $\mathcal{S}_e$ over $(I_e,X_e,Y_e)$ with the same graph,
\[
\mathcal{G}(\mathcal{S}_e)=\mathcal{G}(\mathcal{S}).
\]
Let $\varepsilon^{tot}$ be a finite set of observed environments. Assume that the distribution of $(I_e,X_e,Y_e)$ is absolutely continuous with respect to a factorizing product measure.
\end{setting}

Under Setting~\ref{Setting 1}, define
\[
N^{-int}
=
\{1,\ldots,p\}
\setminus
\left\{
j\in\{1,\ldots,p\}
:
\exists k\in ch^{int}(Y)
\text{ such that }
j\in de(X^k)
\text{ or }
j=k
\right\},
\]
where $ch^{int}(Y)$ denotes the children of $Y$ that are directly intervened on. The stable blanket of $Y$ is the smallest set $S\subseteq N^{-int}$ such that
\[
X^j \indep_d Y \mid X^S,
\qquad
\forall j\in N^{-int}\setminus S.
\]
This set can be interpreted as the smallest predictor set, among the predictors not affected by the relevant interventions, that retains the information about $Y$ needed for stable prediction. Equivalently, the response $Y$ is independent of the intervention variables given the stable blanket. Figure~\ref{fig:M1} gives a graphical illustration of this baseline case.

\section{SCMs with Hidden Variables}

We first extend the graphical characterization of Markov blankets and stable blankets to models with hidden variables but without causal cycles. Let $Y$ denote the response variable and let $X$ denote the observed predictors. As in the fully observed acyclic case, the goal is to identify a small predictor set that retains the information about $Y$ needed for prediction. In the presence of interventions, we further require this predictor set to have a conditional relationship with $Y$ that remains invariant across environments.

Hidden variables create an additional difficulty. A hidden variable may lie in the Markov blanket or stable blanket of $Y$ in the underlying fully observed graph. Simply removing hidden variables can therefore lose information about $Y$ and may also destroy stability under interventions. To compensate for unobserved variables, additional observed predictors may need to be included, even when they are not directly adjacent to $Y$ in the original graph.

An SCM with hidden variables induces a DAG over all observed and hidden variables, provided that there are no causal cycles. After latent projection onto the observed variables and intervention variables, we obtain an ADMG. We use this ADMG to read off conditional independence relations among the observed variables. This requires that the projected distribution satisfies the Markov property with respect to the ADMG and $m$-separation. Under these assumptions, the graph can be used to determine which predictors are affected by interventions and which predictor sets remain stable. In contrast to the fully observed case, the assumption that interventions do not act directly on $Y$ is not sufficient; additional assumptions on the district of $Y$ are needed.

\begin{setting}
\label{Setting 2}
Let $X \in \mathcal{X} = \mathcal{X}^1 \times \cdots \times \mathcal{X}^p$ be observable predictors, $H \in \mathcal{H} = \mathcal{H}^1 \times \cdots \times \mathcal{H}^q$ be hidden variables, $Y \in \mathbb{R}$ be a response variable and $I = (I^1,\cdots,I^m) \in \mathcal{I} = \mathcal{I}^1 \times \cdots \times \mathcal{I}^m$ be intervention variables which are used to formalize the interventions and act on observed random variables. Assume there exists an SCM with hidden variables $\mathcal{S}$ over $(I,X,H,Y)$ such that $\mathcal{G}(\mathcal{S})$ is a DAG. Each intervention environment $e$ corresponds to an interventional SCM with hidden variables $\mathcal{S}_e$ over $(I_e,X_e,H_e,Y_e)$ where the $\mathcal{G}(\mathcal{S}_e)$ is a DAG and fixed (i.e., $\mathcal{G}(\mathcal{S}_e) = \mathcal{G}(\mathcal{S})$). Thus the ADMG $\mathcal{G}(\mathcal{S}_e)^{(I_e, X_e, Y_e)}$ generated by latent projection on $\mathcal{G}(\mathcal{S}_e)$ over $(I_e, X_e, Y_e)$ is fixed. Moreover, $I^l,\ l = 1, \cdots, m$ are source nodes and have no edges on the $dis(Y)$. Lastly, assume these SCMs with hidden variables are ancestrally uniquely solvable with respect to $H$.
\end{setting}

The last assumption of these SCMs with hidden variables can guarantee the distributions $P^{(I_e, X_e, Y_e)}$ satisfy the Markov property relative to $\mathcal{G}(\mathcal{S}_e)^{(I_e, X_e, Y_e)}$ based on $m$-separation.

\subsection{Markov Blanket in ADMGs}

In an ADMG $\mathcal{G}$, there are at most two edges between two nodes (otherwise, there will be a cycle). Let $i, j$ be two adjacent nodes in $\mathcal{G}$, we can use $i \twoset{-}{-} j$ to represent the connection between $i$ and $j$, where each “$-$” is a possible edge between the two nodes.

We say a path $P = (i^0, e^1, i^1, e^2, \cdots, e^n, i^n)$ has the shape
$$
i_0 \twoset{-}{-} i_1 \twoset{-}{-} \cdots \twoset{-}{-} i_n
$$
if $e^k$ is chosen from $\twoset{-}{-}$ between $i_{k-1}$ and $i_k$, $k = 1,\cdots,n$.

\begin{proposition}
\label{Block}
Given a path $P$ with shape $i_0 \twoset{-}{-} i_1 \twoset{-}{-} \cdots \twoset{-}{-} i_n$, if there are three successive nodes $(i_{k-1} ,i_k, i_{k+1})$ such that one of the connection relationships in Table 1 happens,

\begin{table}[ht]
\centering
\renewcommand{\arraystretch}{1.5}
    \begin{tabular}{l|l|l}
    \hline

    $i_{k-1} \rightarrow i_k \leftarrow i_{k+1}$ & $i_{k-1} \leftrightarrow i_k \leftarrow i_{k+1}$ & $i_{k-1} \rightarrow i_k \leftrightarrow i_{k+1}$\\  \hline

    $i_{k-1} \leftrightarrow i_k \leftrightarrow i_{k+1}$ & $i_{k-1} \twoset{\rightarrow}{\leftrightarrow} i_k \leftarrow i_{k+1}$ & $i_{k-1} \twoset{\rightarrow}{\leftrightarrow} i_k \leftrightarrow i_{k+1}$ \\ \hline

    $i_{k-1} \rightarrow i_k \twoset{\leftarrow}{\leftrightarrow} i_{k+1}$   & $i_{k-1} \leftrightarrow i_k \twoset{\leftarrow}{\leftrightarrow} i_{k+1}$ & $i_{k-1} \twoset{\rightarrow}{\leftrightarrow} i_k \twoset{\leftarrow}{\leftrightarrow} i_{k+1}$ \\ \hline
    \end{tabular}
    \label{Table 1}
    \caption{Cases where $i_k$ must be a collider}
\end{table}

\noindent then $P$ is $m$-blocked by any set $S$ which satisfies $i_k \notin an(S) \cup S$.
\end{proposition}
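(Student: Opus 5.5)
The plan is to check that in every entry of Table 1, whichever concrete edges $e^k$ and $e^{k+1}$ the path $P$ actually uses, $i_k$ is a collider on $P$. Once that is established, the blocking claim follows directly from the definition of $m$-blocking recalled in Section~\ref{sec:preliminaries}: a path is $m$-blocked by $S$ as soon as it contains a collider not in $an(S)$. By assumption $i_k\notin an(S)\cup S$, so $P$ is $m$-blocked.

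First I interpret the shape notation. A stacked symbol such as $i_{k-1}\twoset{\rightarrow}{\leftrightarrow} i_k$ means that both edges are present between the two nodes, and that the path $P$ uses exactly one of them as $e^k$. By the remark preceding the proposition, at most these two edges can occur between adjacent nodes in an ADMG. The key observation is that each of $\rightarrow$ (pointing to $i_k$) and $\leftrightarrow$ has an arrowhead at $i_k$. Hence on the left side every admissible choice of $e^k$ has an arrowhead into $i_k$. Symmetrically, each of $\leftarrow$ and $\leftrightarrow$ on the right side has an arrowhead into $i_k$, so every admissible choice of $e^{k+1}$ does too.

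Next I run through the nine entries of Table 1. Each entry combines a left connection from $\{\rightarrow,\leftrightarrow,\twoset{\rightarrow}{\leftrightarrow}\}$ with a right connection from $\{\leftarrow,\leftrightarrow,\twoset{\leftarrow}{\leftrightarrow}\}$, so the table is exactly this $3\times 3$ product. By the previous paragraph, in every combination both $e^k$ and $e^{k+1}$ have arrowheads at $i_k$. So $i_k$ is a collider on $P$ in the usual sense: both path edges adjacent to it have arrowheads at it. The intermediate node $i_k$ is not an endpoint because $0<k<n$.

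Finally I apply the definition of $m$-blocking to the collider $i_k\notin an(S)$. It also shows that $i_k\notin S$, which covers the convention in which ancestors exclude the node itself. I do not expect any real obstacle here. The only point needing care is to say explicitly that the stacked notation fixes one edge per step, so "must be a collider" holds for every path realizing the shape; this is the case analysis above.
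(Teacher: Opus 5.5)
Your proposal is correct and is the same argument the paper gives: for every entry of Table 1 and every admissible choice of $e^k$ and $e^{k+1}$, both edges have an arrowhead at $i_k$, so $i_k$ is a collider on $P$, and $i_k\notin an(S)\cup S$ then gives $m$-blocking directly from the definition. Your remark that the table is exactly the $3\times 3$ product of left connections ($\rightarrow$, $\leftrightarrow$, or both) with right connections ($\leftarrow$, $\leftrightarrow$, or both) simply spells out the case check that the paper compresses into one sentence.
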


\begin{proof}
In every situation of $(i_{k-1} ,i_k, i_{k+1})$, no matter what $e^k$ and $e^{k+1}$ are, $i^k$ is a collider on $P$. Since $i_k \notin an(S) \cup S$, $P$ is $m$-blocked by $S$.
\end{proof}

Proposition \ref{Block} shows that all paths that share the same shape can be $m$-blocked simultaneously, as long as there are specific sub-shapes within that shape. Indeed, when there is a common non-collider on all paths with the same shape, these paths can also be $m$-blocked by a set that contains this non-collider. These results can simplify the kinds of paths to the response $Y$.

\begin{definition}[Markov Blanket]
In the graphical models of Setting \ref{Setting 2}, the Markov blanket of $Y$ is defined as the smallest set $S \subseteq \{1,\cdots,p\}$ which satisfies
\begin{align*}
    \forall j \in \{1,\cdots,p\}\backslash S,\ X^j \indep_m Y | X^S.
\end{align*}
\end{definition}

In a DAG, the Markov blanket of one variable consists of its parents, its children, and the parents of its children. This characterization allows intuitively reading off the smallest and simultaneously the most informative set of predictors from the graph. It also inspires us to classify the predictors in terms of graphical relationships between predictors and Y in each class. Similarly, we can construct a decomposition of the Markov blanket in the ADMG. 

\begin{proposition} (\cite{Richardson03})
\label{MB hidden}
In the graphical models of Setting \ref{Setting 2}, the Markov blanket of $Y$ is
\begin{align*}
    MB(Y) = pa(dis(Y)) \cup (dis(Y) \backslash Y) \cup dis(ch(Y)) \cup pa(dis(ch(Y))) 
\end{align*}
\end{proposition}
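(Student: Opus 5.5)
The plan is to prove two things: that $M:=pa(dis(Y)) \cup (dis(Y)\setminus Y) \cup dis(ch(Y)) \cup pa(dis(ch(Y)))$, intersected with the predictor indices and with $Y$ removed, $m$-separates $Y$ from every other predictor; and that every set $S$ with this separation property contains $M$. Together these show that $M$ is the (unique) smallest such set.

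The tool for both parts is the \emph{collider chain} starting at $Y$. Take any path $Y=i_0,i_1,\ldots,i_n=j$ and let $k$ be the smallest index $\ge 1$ such that $i_k$ is a non-collider or $i_k=j$. Then $i_1,\ldots,i_{k-1}$ are all colliders, so every edge between them is bidirected, and the edges $i_{k-1}\,\ast\!\!\to i_{k-1}$-side and $i_{k-1}\leftarrow\!\!\ast\, i_k$ carry an arrowhead at $i_{k-1}$. Two cases arise.
\begin{enumerate}[(a)]
\item The first edge points into $Y$, that is, $Y\leftarrow i_1$ or $Y\leftrightarrow i_1$. Then $i_1,\ldots,i_{k-1}\in dis(Y)$, and $i_k\in dis(Y)\cup pa(dis(Y))$ (or $i_1\in pa(Y)$ when $k=1$).
\item The first edge is $Y\to i_1$. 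Then $i_1,\ldots,i_{k-1}\in dis(ch(Y))$ and $i_k\in dis(ch(Y))\cup pa(dis(ch(Y)))$.
\end{enumerate}
In both cases $i_k\in M\cup\{Y\}$. If $i_k=Y$ we replace the path by its shorter remainder, so we may assume $i_k\neq Y$.

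\textbf{Sufficiency.} Let $j\notin M$. The node $i_k$ from the collider-chain argument lies in $M$, so $i_k\neq j$. Hence $i_k$ is a non-collider in $M$, and the path is $m$-blocked. This argument never needs to control colliders lying in $an(M)$, because blocking always comes from the first non-collider. By the Markov property of Setting~\ref{Setting 2}, this gives $X^j\indep_m Y\mid X^M$. When a pair of nodes is joined by both $\to$ and $\leftrightarrow$, I treat each edge choice separately, as in Proposition~\ref{Block}.

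\textbf{Minimality (the main obstacle).} We must show that any valid $S$ contains every $m\in M$. For each $m$, I would fix an explicit path of the form above:
\begin{itemize}
\item $Y\leftrightarrow\cdots\leftrightarrow m$ for $m\in dis(Y)$;
\item $Y\leftrightarrow\cdots\leftrightarrow d\leftarrow m$ for $m\in pa(dis(Y))$;
\item $Y\to c\leftrightarrow\cdots\leftrightarrow m$ for $m\in dis(ch(Y))$;
\item $Y\to c\leftrightarrow\cdots\leftrightarrow d\leftarrow m$ for $m\in pa(dis(ch(Y)))$.
\end{itemize}
In each case the bidirected segment is chosen shortest and avoids $Y$; if it passes through $Y$, we cut the path there. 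Every interior node is a collider lying in $M$.

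Suppose $S$ is valid but $m\notin S$. Walk along the fixed path from $Y$ and let $u$ be the first node not in $S$; such a node exists since $m\notin S$. All interior nodes before $u$ are colliders in $S$, so the subpath from $Y$ to $u$ is $m$-connecting given $S$. Then $u\notin S$ and $X^u\not\indep_m Y\mid X^S$, contradicting validity of $S$.

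The delicate point is that $u$ must be a predictor. Interior nodes lie in districts of $Y$ or of $ch(Y)$, and since intervention nodes are source nodes they cannot be colliders, so interior nodes are predictors. Only an endpoint $m\in pa(\cdot)$ could be an intervention node, and such nodes are excluded from $M$ by the restriction to $\{1,\ldots,p\}$. Finally, since $M$ itself is valid by the sufficiency step, $M$ is the smallest valid set.
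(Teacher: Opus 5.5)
Your decomposition is essentially the paper's. The paper also blocks each path at the first node beyond the chain of spouses attached to $Y$ or to a child of $Y$ (its cases (i)--(iv)). Its minimality step is the same walk along $Y\leftrightarrow\cdots$ and $Y\to c\leftrightarrow\cdots$ paths, and your ``first node not in $S$'' version of that step is correct, including the check that interior colliders are predictors.

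\textbf{The gap is in the sufficiency step, case (b).} There $i_k\in pa(dis(ch(Y)))$. Setting~\ref{Setting 2} only forbids interventions on $dis(Y)$, so $i_k$ may be an intervention node $I^l$. Intervention parents of $dis(ch(Y))$ are exactly the intervened sub-districts studied later in the paper. You removed such nodes from $M$, so the assertion ``$i_k$ lies in $M$'' is false in this case. Since $I^l$ is never in the conditioning set, this non-collider does not block the path.

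The proof cannot be patched internally, because the identity itself fails here. Take $Y\to X^1\leftarrow I^1\to X^2$. The formula gives $\{1\}$, yet $X^2\leftarrow I^1\to X^1\leftarrow Y$ is $m$-connecting given $X^1$. Hence $X^2\not\indep_m Y\mid X^1$, and the smallest valid set is $\{1,2\}$. The paper's proof has the same hole in its case (iii), where it tacitly takes the tail node $X^k\in pa(dis(ch(Y)))$ to be a predictor.

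To make the proposition and your argument correct, you need an extra hypothesis. For example, condition on $I$ as well, i.e.\ require $X^j\indep_m Y\mid (X^S,I)$. Because the intervention nodes are sources with only outgoing directed edges, this is equivalent to deleting them, which amounts to working within a fixed environment. Under that repair your proof goes through verbatim: the first non-collider is then either a predictor in $M$ or a conditioned intervention node. Your minimality paths have no interior non-colliders, so the additional conditioning cannot block them.
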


\begin{proof}
Let $S = pa(dis(Y)) \cup (dis(Y) \backslash Y) \cup dis(ch(Y)) \cup pa(dis(ch(Y)))$. We first show that $\forall j \in \{1,\cdots,p\} \backslash S$, it holds that $X^j \indep_m Y | X^S$. To this end, it suffices to prove that every path between $X^j$ and $Y$ is $m$-blocked by $X^S$. Let $P$ be a path connecting $X^j$ and $Y$. Then $P$ has one of the four possible shapes:
\begin{enumerate}[(i)]
    \item $X^j \cdots X^{k} \rightarrow X^{i_n} \twoset{-}{\leftrightarrow} \cdots \twoset{-}{\leftrightarrow} X^{i_1} \twoset{-}{\leftrightarrow} Y$ 
    
    where $k \notin sp(X^{i_n})$ and $n \geq 0$. Then $X^k$ is a non-collider on $P$ and $k \in pa(dis(Y))$. So $P$ is $m$-blocked by $X^S$.
    
    \item $X^j \cdots X^{k} \leftarrow X^{i_n} \twoset{-}{\leftrightarrow} \cdots \twoset{-}{\leftrightarrow} X^{i_1} \twoset{-}{\leftrightarrow} Y$ 
    
    where $k \notin sp(X^{i_n})$ and $n \geq 1$. Here it is possible that $X^j = X^{k}$. Then $X^{i_n}$ is a non-collider on $P$ and $i_n \in dis(Y) \backslash Y$. Thus $P$ is $m$-blocked by $X^S$.
    
    \item $X^j \cdots X^{k} \rightarrow X^{i_n} \twoset{-}{\leftrightarrow} \cdots \twoset{-}{\leftrightarrow} X^{i_1} \leftarrow Y$ 
    
    where $k \notin sp(X^{i_n})$, $i_1 \notin sp(Y)$, and $n \geq 1$. Then $X^k$ is a non-collider on $P$ and $k \in pa(dis(ch(Y)))$. So $P$ is $m$-blocked by $X^S$.
    
    \item $X^j \cdots X^{k} \leftarrow X^{i_n} \twoset{-}{\leftrightarrow} \cdots \twoset{-}{\leftrightarrow} X^{i_1} \leftarrow Y$ 
    
    where $k \notin sp(X^{i_n})$, $i_1 \notin sp(Y)$, and $n \geq 1$. Here it is possible that $X^j = X^{k}$. Then $X^{i_n}$ is a non-collider on $P$ and $i_n \in dis(ch(Y))$. Thus $P$ is $m$-blocked by $X^S$.
\end{enumerate}

Therefore, we have shown that every path between $X^j$ and $Y$ is $m$-blocked by $X^S$, which means that $X^j \indep_m Y | X^S,\ \forall j \in \{1,\cdots,p\} \backslash S$. Then we need to prove that $S$ is the smallest subset $S^\prime$ such that $\forall j \in \{1,\cdots,p\} \backslash S^\prime,\ X^j \indep_m Y | X^{S^\prime}$.

Firstly, elements in $dis(Y)$ should be in $S^\prime$. If $i_n \in dis(Y)$, there is a path between $X^{i_n}$ and $Y$ as 
$$
X^{i_n} \leftrightarrow X^{i_{n-1}} \leftrightarrow \cdots \leftrightarrow X^{i_1} \leftrightarrow Y.
$$
It has $i_1 \in S^\prime$ since $i_1 \in sp(Y)$. Then the path $X^{i_2} \leftrightarrow X^{i_1} \leftrightarrow Y$ can not be $m$-blocked given $X^{S^\prime}$ since $i_1 \in S^\prime$, which means $i_2 \in S^\prime$. Recursively, we can get $i_n \in S^\prime$. 

Secondly, if $k \in pa(dis(Y))$, there is a path between $X^k$ and $Y$ as 
$$
X^k \rightarrow X^{i_n} \leftrightarrow X^{i_{n-1}} \leftrightarrow \cdots \leftrightarrow X^{i_1} \leftrightarrow Y.
$$
Because $i_1,\cdots,i_n \in S^\prime$, this path can not be $m$-blocked given $X^{S^\prime}$. So $k \in S^\prime$, this means that $pa(dis(Y)) \subseteq S\prime$. 

Thirdly, we consider the nodes in the district of $Y$'s children. If $i_n \in dis(ch(Y))$, there is a path connecting $X^{i_n}$ and $Y$ as 
$$
X^{i_n} \leftrightarrow X^{i_{n-1}} \leftrightarrow \cdots \leftrightarrow X^{i_1} \leftarrow Y.
$$
As $i_1 \in S^\prime$, the path $X^{i_2} \leftrightarrow X^{i_1} \leftarrow Y$ can not be $m$-blocked given $X^{S^\prime}$, implying $i_2 \in S^\prime$. Hence $i_n \in S^\prime$ by induction.

Lastly, given a $k \in pa(dis(ch(Y)))$, there is a path as 
$$
X^k \rightarrow X^{i_n} \leftrightarrow X^{i_{n-1}} \leftrightarrow \cdots \leftrightarrow X^{i_1} \leftarrow Y.
$$ 
Since $i_1,\cdots,i_n \in S^\prime$, the path can not be $m$-blocked given $X^{S^\prime}$. We can obtain $k \in S^\prime$, which means that $pa(dis(ch(Y))) \subseteq S^\prime$.

Therefore, $S \subseteq S^\prime$. Since we have already proved that $X^S$ can $m$-separate $Y$ and other predictors, $MB(Y) = S$. So we characterize the Markov blanket of $Y$ by four kinds of nodes in the graph.
\end{proof}

Since the Markov blanket of $Y$ can $m$-separate $Y$ and the remaining predictors, we can get conditional independence relations $Y \indep X^{\{1,\cdots,p\}\backslash MB(Y)}| X^{MB(Y)}$ by Markov property in the ADMG. Thus the Markov blanket is the most predictive set of predictors because it is not necessary to add other predictors. 

However, when multiple different environments exist, in other words, there are interventions, the Markov blanket may not be perfect since the dependence of $Y$ on predictors in the Markov blanket of $Y$ may change across interventional environments. We need predictors which are both intervention-stable and the most predictive. More specifically, the set of predictors which contains the most information about $Y$ and can explain the variability in the interventions is what we aim to determine from the ADMG.

\subsection{Intervention Stability in ADMGs}

\subsubsection{Intervention-Stable Set}

\begin{definition}[Intervention-Stable]
In the graphical models of Setting \ref{Setting 2}, a set $S \subseteq \{1,\cdots,p\}$ is called intervention-stable if for all $l \in \{1,\cdots,m\}$, it holds that $I^l \indep_m Y | X^S$.
\end{definition}

\begin{definition}[Intervened Sub-district]
\label{subdistrict}
Assume there is an intervention $I$ acting on the district of a child $X^{i_1}$ of Y. Then an intervened sub-district is a series of nodes $(i_1, \cdots, i_n)$ which can be connected as 
\begin{equation}
    I \rightarrow X^{i_n} \twoset{-}{\leftrightarrow} X^{i_{n-1}} \twoset{-}{\leftrightarrow} \cdots \twoset{-}{\leftrightarrow} X^{i_1} \leftarrow Y \label{intsub}
\end{equation}
where $i_1,\cdots,i_n \in \{1,\cdots,p\}$.
\end{definition}

\begin{definition}[Sub-district Collider]
\label{SubCol}
A node $X^{i_k}$ on an intervened sub-district \eqref{intsub} is a sub-district collider on the intervened sub-district if it satisfies
\begin{enumerate}[(i)]
    \item there are no directed arrows from $X^{i_k}$ to $X^{i_{k-1}}$ and $X^{i_{k+1}}$, i.e., $(X^{i_{k+1}}, X^{i_k},X^{i_{k-1}})$ is one of the cases in Table 1, and
    \item there is no directed path from the node $X^{i_k}$ to the response $Y$.
\end{enumerate}
\end{definition}

We define $X^{i_0} = Y$ and $X^{i_{n+1}} = I$ by default. And we also say $i_k$ is a sub-district collider on the intervened sub-district if $X^{i_k}$ is a sub-district collider on that.

Using definitions \ref{subdistrict} and \ref{SubCol}, we can show that there must be a sub-district collider on each intervened sub-district.

\begin{proposition}
\label{sub-district collider}
Given any intervened sub-district \eqref{intsub}, then there is at least one sub-district collider on this intervened sub-district.
\end{proposition}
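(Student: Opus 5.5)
The plan is to use acyclicity of the ADMG: take a topological order of its directed part and choose the node of the intervened sub-district that comes \emph{last} in that order. Extremality should then force both conditions of Definition~\ref{SubCol} at once.

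\textbf{Setup.} Fix an intervened sub-district as in \eqref{intsub}, with the conventions $X^{i_0}=Y$ and $X^{i_{n+1}}=I$. Since the directed part of $\mathcal{G}$ has no directed cycles, there is a strict partial order on the nodes given by the ancestor relation. Among the finitely many nodes $X^{i_1},\dots,X^{i_n}$, choose one, $X^{i_k}$, that is \emph{maximal} with respect to this order. That is, no other $X^{i_l}$ with $1\le l\le n$ is a proper descendant of $X^{i_k}$. Equivalently, take the last of these nodes in a fixed topological ordering.

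\textbf{Condition (i).} I check that $X^{i_k}$ sends no directed edge to either path neighbour $X^{i_{k-1}}$ or $X^{i_{k+1}}$. There are three cases.
\begin{itemize}
\item If the neighbour is some $X^{i_l}$ with $1\le l\le n$, an edge $X^{i_k}\to X^{i_l}$ would make $X^{i_l}$ a proper descendant of $X^{i_k}$, contradicting maximality.
\item If the neighbour is $X^{i_{n+1}}=I$, there is no edge $X^{i_n}\to I$, because $I$ is a source node by Setting~\ref{Setting 2}.
\item If the neighbour is $X^{i_0}=Y$ (so $k=1$), there is no edge $X^{i_1}\to Y$. The edge $Y\to X^{i_1}$ is present, and having both would create a directed $2$-cycle. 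This is the same remark used at the start of Section~3.1, that at most two edges join two nodes.
\end{itemize}
Hence every connection adjacent to $X^{i_k}$ on the path carries an arrowhead at $X^{i_k}$. Each of these connections is either bidirected or bidirected together with a directed edge pointing into $X^{i_k}$. So the triple $(X^{i_{k+1}},X^{i_k},X^{i_{k-1}})$ is one of the configurations of Table~1, and by Proposition~\ref{Block} $X^{i_k}$ is a collider for every choice of edges.

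\textbf{Condition (ii).} Suppose, for contradiction, that there is a directed path from $X^{i_k}$ to $Y$.
\begin{itemize}
\item Composing it with $Y\to X^{i_1}$ gives a directed path from $X^{i_k}$ to $X^{i_1}$.
\item If $k\ne 1$, then $X^{i_1}$ is a proper descendant of $X^{i_k}$ within the sub-district, contradicting maximality.
\item If $k=1$, we obtain the directed cycle $X^{i_1}\to\cdots\to Y\to X^{i_1}$, contradicting acyclicity.
\end{itemize}
So $X^{i_k}\notin an(Y)$, and $X^{i_k}$ is a sub-district collider.

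\textbf{Main obstacle.} I expect the delicate part to be bookkeeping rather than a real difficulty. First, the argument must handle the parallel-edge notation $\twoset{-}{\leftrightarrow}$ correctly: every connection contains a bidirected edge, so only possible directed edges \emph{out of} $X^{i_k}$ could spoil condition (i). Second, the endpoint conventions $X^{i_0}=Y$ and $X^{i_{n+1}}=I$ must be treated separately, which is why the source property of $I$ and the edge $Y\to X^{i_1}$ are invoked explicitly.

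If the path is not required to be simple and nodes repeat, I would first note that maximality is a property of the node, not of its position on the path. The same argument then applies verbatim at any occurrence of the chosen node.
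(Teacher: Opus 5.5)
Your proof is correct, and it takes a different route from the paper's.

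\textbf{The paper's route.} The paper argues by iterative descent. Because the ends are $I\to$ and $\leftarrow Y$, some node $X^{i_k}$ satisfies condition (i). If $X^{i_k}$ has a directed path to $Y$, acyclicity forbids the segment between $X^{i_k}$ and $Y$ from being directed from $Y$ toward $X^{i_k}$. Hence another node satisfying (i) lies strictly closer to $Y$. Repeating this, finiteness eventually yields a node with no directed path to $Y$.

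\textbf{Your route.} You use a single extremal choice: a maximal element of the ancestral order restricted to $\{X^{i_1},\dots,X^{i_n}\}$. You show this node satisfies both conditions at once. Condition (ii) drops out cleanly by composing a putative directed path to $Y$ with $Y\to X^{i_1}$.

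\textbf{What your version buys.} It avoids the step the paper only sketches, namely why a fresh collider must sit between the current one and $Y$, and it avoids the termination bookkeeping. It also makes transparent that only acyclicity and the two end edges are used, not the assumption that interventions avoid $dis(Y)$. It gives slightly more, too. Every maximal node is a sub-district collider, and every node of the sub-district has one among its descendants (or is one).

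\textbf{What the paper's version buys.} Its descent is local along the path: it locates a sub-district collider between any given collider and $Y$. That is closer in spirit to the ``farthest node'' argument used in Proposition~\ref{remove sub hidden}.
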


\begin{proof}
Firstly, as the arrows of two ends are $I \rightarrow$ and $\leftarrow Y$, there is an $i_k$, $k \in \{1,\cdots,n\}$ which satisfies the first condition of the sub-district collider. If there is no directed path from $X^{i_k}$ to Y, then $X^{i_k}$ is a sub-district collider. 

Otherwise, assume there is a directed path from $X^{i_k}$ to $Y$. Since the ADMG has no cycles, there can not be a directed path from $Y$ to $X^{i_k}$. So there must exist one node $X^{i_j}$ between $X^{i_k}$ and $Y$, which absorbs all arrowheads from adjacent nodes on the intervened sub-district. If there is no directed path from $X^{i_j}$ to $Y$, then $X^{i_j}$ is a sub-district collider. Otherwise, we can replicate the procedure of how to find $X^{i_j}$ until one sub-district collider appears on the intervened sub-district. 

Therefore, the existence of a sub-district collider on every intervened sub-district is guaranteed.
\end{proof}

The sub-district collider is crucial for separation from interventions and the response. Indeed, we can show that any set that can $m$-block all paths between $I$ and $Y$ in an intervened sub-district does not contain a sub-district collider nor its descendants.

\begin{proposition}
\label{remove sub hidden}
For each intervened sub-district 
$$
I \rightarrow X^{i_n} \twoset{-}{\leftrightarrow} X^{i_{n-1}} \twoset{-}{\leftrightarrow} \cdots \twoset{-}{\leftrightarrow} X^{i_1} \leftarrow Y,
$$
if a condition set $S$ can $m$-block all paths between $I$ and $Y$, then there exists a sub-district collider such that itself and its descendants are not in the condition set.
\end{proposition}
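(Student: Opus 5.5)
Proof by contraposition. Fix an intervened sub-district
\[
I \rightarrow X^{i_n} \twoset{-}{\leftrightarrow} \cdots \twoset{-}{\leftrightarrow} X^{i_1} \leftarrow Y .
\]
Suppose that for every sub-district collider $X^{i_k}$ on it, $i_k$ or some descendant of $X^{i_k}$ lies in $S$, so that $i_k \in an(S)\cup S$. We will build a path from $I$ to $Y$ that is not $m$-blocked by $X^S$. This contradicts the hypothesis, so some sub-district collider has neither itself nor any descendant in $S$.

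The candidate path is the sub-district itself, with the edge between consecutive nodes chosen as follows.
\begin{enumerate}[(i)]
\item Wherever a bidirected edge is available, choose it. This makes every interior node that has both a bidirected edge and any edge pointing into it a collider.
\item At a node $X^{i_k}$ with an outgoing directed edge to a neighbour on the sub-district, the node is a non-collider on the path.
\item At a collider that is not a sub-district collider, condition (ii) of Definition~\ref{SubCol} fails, so $X^{i_k}$ has a directed path to $Y$.
\end{enumerate}
The key difficulty is that $S$ may contain non-colliders. It may also leave colliders that are not sub-district colliders outside $an(S)$.

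To handle this, I would take a minimal counterexample and reroute. Consider a collider $X^{i_k}$ with $i_k\notin an(S)\cup S$. By the hypothesis it is not a sub-district collider, so it has a directed path to $Y$. Replace the segment of the sub-district from $X^{i_k}$ down to $Y$ by the concatenation of the upper part of the sub-district with this directed path $X^{i_k}\to\cdots\to Y$. On that directed path, if some node is in $S$, take the first such node and examine it. Otherwise the new path $I\cdots X^{i_k}\to\cdots\to Y$ has $X^{i_k}$ as a non-collider, and we repeat the argument on the shorter prefix of the sub-district.

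This is essentially the same descent used in the proof of Proposition~\ref{sub-district collider}. Acyclicity guarantees that each step moves strictly toward a node with no directed path to $Y$, that is, toward a sub-district collider. Sub-district colliders are in $an(S)\cup S$ by assumption, so they do not block the path.

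The main obstacle is the non-colliders in $S$: nodes on the sub-district, or on the directed detours, that are in $S$ and act as non-colliders would block the path. I would argue as follows. A non-collider $X^{i_k}\in S$ that emits an arrow along the sub-district has descendants that are themselves collider-type nodes on the sub-district. Specifically, follow the arrows from $X^{i_k}$ along the sub-district until they meet an opposite arrowhead, using the fact that the two ends point inward. The collider reached this way is either a sub-district collider or has a directed path to $Y$.

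In the first case, note that $X^{i_k}\in S$ and $X^{i_k}$ is an ancestor of that collider. This does not put the collider in $an(S)$, which is what we would need. So this case seems to require an honest path-surgery argument: choose a sub-district collider $C$ that is closest to $I$ in the descent order. Then build the path
\[
I\to X^{i_n}\cdots C \cdots
\]
using, between $C$ and $Y$, a path through the elements of $S$ that are descendants of $C$. That is, take $C\to\cdots\to s\in S$ and continue from $s$ with the configuration that makes $s$ a collider.

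I expect that making this surgery airtight, so that the assembled walk is a genuine path with every collider in $an(S)$ and no non-collider in $S$, will be the hardest part. I would first try it assuming $S$ is a minimal separating set, which rules out unnecessary non-colliders.
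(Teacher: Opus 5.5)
\textbf{There is a genuine gap.} The proof stops at the step you flag as the hardest, and that obstacle comes from choosing the wrong path.

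By the definition of an intervened sub-district, every consecutive pair $X^{i_{j+1}},X^{i_j}$ is joined by a bidirected edge. The $\leftrightarrow$ in the connector is always present; only the extra directed edge is optional. So the path $I\to X^{i_n}\leftrightarrow\cdots\leftrightarrow X^{i_1}\leftarrow Y$ always exists, and on it \emph{every} interior node is a collider. This includes nodes that also emit a directed edge along the sub-district.

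Several consequences follow. Your items (i) and (ii) are inconsistent with each other. The ``non-colliders in $S$'' problem and the path surgery through elements of $S$ are unnecessary. The proposed reduction to a minimal separating $S$ is also unnecessary, and it is not without loss of generality: even if a minimal separating $S'\subseteq S$ exists, the conclusion for $S'$ says nothing about descendants lying in $S\setminus S'$.

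Two further steps are wrong as stated.
\begin{enumerate}[(1)]
\item The dichotomy in your (iii) is incomplete. A collider on this path that is not a sub-district collider may fail condition (i) of Definition~\ref{SubCol}, by having a directed edge to $X^{i_{k-1}}$ or $X^{i_{k+1}}$, rather than condition (ii). Likewise, ``no directed path to $Y$'' does not mean ``sub-district collider,'' so your descent does not end where you claim.
\item Your worry that the detour $X^{i_k}\to\cdots\to Y$ might meet $S$ runs in the wrong direction, as does your remark that an ancestor in $S$ ``does not put the collider in $an(S)$.'' The useful fact is that the property $i\notin an(S)\cup S$ is inherited by all descendants. So a detour starting at such a node never meets $S$.
\end{enumerate}

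With these observations the proof closes directly, and this is the paper's argument. Since $S$ blocks the all-bidirected path, pick the interior node $X^{i_k}$ farthest from $Y$ with $i_k\notin an(S)\cup S$.
\begin{itemize}
\item A directed path $X^{i_k}\to\cdots\to Y$ would give the unblocked path $I\to X^{i_n}\leftrightarrow\cdots\leftrightarrow X^{i_k}\to\cdots\to Y$. The prefix colliders lie in $an(S)\cup S$ by maximality. The detour lies in $de(X^{i_k})$, so it avoids $S$, and by maximality it also avoids the nodes $X^{i_h}$ with $h>k$.
\item An edge $X^{i_k}\to X^{i_{k+1}}$ would put $X^{i_{k+1}}$ outside $an(S)\cup S$, contradicting maximality.
\item If $X^{i_k}\to X^{i_{k-1}}$, follow directed edges toward $Y$ to the first node $X^{i_s}$ with no edge into $X^{i_{s-1}}$. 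This stops before $Y$, because $Y\to X^{i_1}$ and the graph is acyclic. By acyclicity, $X^{i_s}$ has no edge into $X^{i_{s+1}}$ either. Since $X^{i_s}$ is a descendant of $X^{i_k}$, it has no directed path to $Y$, and it lies, together with its descendants, outside $S$. So it is the required sub-district collider.
\item Otherwise $X^{i_k}$ itself is one.
\end{itemize}
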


\begin{proof}
Firstly, to $m$-block 
$$
I \rightarrow X^{i_n} \leftrightarrow X^{i_{n-1}} \leftrightarrow \cdots \leftrightarrow X^{i_1} \leftarrow Y,
$$
there must be a collider such that itself and its descendants are not in the condition set $S$. Let $X^{i_k}$ be the farthest node away from $Y$ such that $i_k \notin an(S) \cup S$. If there is no directed path from $X^{i_k}$ to $Y$, and $X^{i_k}$ absorbs all arrowheads from adjacent nodes in the intervened sub-district, then $X^{i_k}$ is a sub-district collider on the intervened sub-district. 

Otherwise, assume there is a directed path from $X^{i_k}$ to $Y$ as $X^{i_k} \rightarrow \cdots \rightarrow Y$. Since $X^{i_k}$ is the farthest node away from $Y$ such that itself and its descendants are not in the condition set $S$, it implies that either $i_h \in S$ or $de(X^{i_h}) \cap S \neq \emptyset$, $h = k+1, \cdots, n$. Besides, nodes on $X^{i_k} \rightarrow \cdots \rightarrow Y$ are not in $S$. Thus, the path $I \rightarrow X^{i_n} \leftrightarrow \cdots \leftrightarrow X^{i_k} \rightarrow \cdots \rightarrow Y$ can not be $m$-blocked by $S$, contradicting the condition. So there can not be a directed path from $X^{i_k}$ to $Y$.

Therefore, we only consider whether an arrow goes out of $X^{i_k}$ on the intervened sub-district. There are two situations.

\begin{enumerate}[(i)]
    \item $I \rightarrow X^{i_n} \twoset{-}{\leftrightarrow} \cdots \twoset{\leftarrow}{\leftrightarrow} X^{i_k} \twoset{-}{\leftrightarrow} \cdots \twoset{-}{\leftrightarrow} X^{i_1} \leftarrow Y$, 
    
    which means there is an arrow starting from $X^{i_k}$ pointing to $X^{i_{k+1}}$. The leftward arrows starting from $X^{i_k}$ will end at $X^{i_s}$ where $s > k$. Since $X^{i_s}$ is a descendant of $X^{i_k}$, $X^{i_s}$ and its descendants are not in $S$. This contradicts the farthest property of $X^{i_k}$.
    
    \item $I \rightarrow X^{i_n} \twoset{-}{\leftrightarrow} \cdots \twoset{-}{\leftrightarrow} X^{i_k} \twoset{\rightarrow}{\leftrightarrow} \cdots \twoset{-}{\leftrightarrow} X^{i_1} \leftarrow Y$,
    
    which means there is an arrow starting from $X^{i_k}$ pointing to $X^{i_{k-1}}$. The rightward arrows starting from $X^{i_k}$ will end at $X^{i_s}$ where $s < k$. Since $X^{i_s}$ is a descendant of $X^{i_k}$ and has no arrow out of it, $X^{i_s}$ satisfies the first condition of sub-district collider Definition \ref{SubCol}. Then we consider the second condition in Definition \ref{SubCol}. If there is a directed path from $X^{i_s}$ to $Y$ as $X^{i_s} \rightarrow \cdots \rightarrow Y$, then the path 
    $$
    I \rightarrow X^{i_n} \leftrightarrow \cdots \leftrightarrow X^{i_k} \rightarrow \cdots \rightarrow X^{i_s} \rightarrow \cdots \rightarrow Y
    $$
    can not be $m$-blocked by $S$, because $X^{i_n},\cdots,X^{i_{k+1}}$ are not qualified colliders and nodes on 
    $$
    X^{i_k} \rightarrow \cdots \rightarrow X^{i_s} \rightarrow \cdots \rightarrow Y
    $$
    are not in the conditional set $S$. This contradicts the condition that $S$ can $m$-separate all paths between $I$ and $Y$. So there is no directed path from $X^{i_s}$ to $Y$, which means $X^{i_s}$ satisfies the second condition in Definition \ref{SubCol}. Thus it is a sub-district collider on the intervened sub-district.
\end{enumerate}

From the above discussion, we can see that either $X^{i_k}$ or $X^{i_s}$ is a sub-district collider. Therefore, at least one sub-district collider exists such that itself and its descendants are not in the conditional set $S$.
\end{proof}

We now give a construction of an intervention-stable set.

\begin{definition}[Complete Set of Sub-district Colliders]
Given the graphical models of Setting \ref{Setting 2} and $s^1, s^2, \cdots, s^l \subseteq \{1,\cdots,p\}$ all intervened sub-districts. We call a subset of nodes $C \subseteq \{1,\cdots,p\}$ a \textit{complete set of sub-district colliders} if 
$$
C = \{c^{1,1},\cdots,c^{1,i_1},c^{2,1},\cdots,c^{2,i_2},\cdots,c^{l,1},\cdots,c^{l,i_l}\},
$$
where $c^{1,1},\cdots,c^{1,i_1},c^{2,1},\cdots,c^{2,i_2},\cdots,c^{l,1},\cdots,c^{l,i_l}$ are sub-district colliders satisfying that $c^{h,\cdot} \in s^h$ and $i_h \geq 1,\ h = 1,\cdots,l$.
\end{definition}


\begin{definition}[Intervention Set]
Given the graphical models of Setting~\ref{Setting 2}, let 
$s^1,\ldots,s^l\subseteq \{1,\ldots,p\}$ be all intervened sub-districts. 
For each complete set of sub-district colliders
\[
C=
\{c^{h,k}: h=1,\ldots,l,\ k=1,\ldots,i_h\},
\]
we define the intervention set $N^{int}(C)$ by
\[
N^{int}(C)
:=
\bigcup_{h=1}^l
\bigcup_{k=1}^{i_h}
\left(
de(X^{c^{h,k}})\cup \{c^{h,k}\}
\right).
\]
\end{definition}

\begin{theorem}
\label{theorem 1}
Given the graphical models in Setting \ref{Setting 2}, let $C$ be a complete set of sub-district colliders, $N^{int}(C)$ be the intervention set determined by $C$ and
$$
N^{-int}(C) = \{1,\cdots,p\} \backslash N^{int}(C).
$$
Then $N^{-int}(C)$ is an intervention-stable set with respect to all interventions. 
\end{theorem}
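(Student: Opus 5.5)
Fix an intervention $I^l$ and let $S=N^{-int}(C)$. The plan is to show that every path $P$ between $I^l$ and $Y$ in the ADMG is $m$-blocked by $X^S$. Since $I^l$ is a source node, $P$ starts $I^l\rightarrow X^{j}$. By Setting~\ref{Setting 2}, $j\notin dis(Y)$ and $j\neq Y$.

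\textbf{Step 1: reduce to the last non-bidirected step into $Y$.} Let $Q$ be the maximal final segment of $P$ consisting of bidirected edges into $Y$, i.e. $X^{i_r}\leftrightarrow\cdots\leftrightarrow X^{i_1}\leftrightarrow Y$ with $r\ge 0$. All $i_1,\dots,i_r$ lie in $dis(Y)$. I will first check that $dis(Y)\cap N^{int}(C)=\emptyset$ is not needed. Instead, I argue directly on the edge $e$ that precedes $Q$ on $P$; this $e$ exists because $I^l$ is not in $dis(Y)$. There are two cases.
\begin{itemize}
\item If $e$ is $X^k\rightarrow X^{i_r}$ (or $X^k\rightarrow Y$ when $r=0$), then $X^k$ is a non-collider. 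If $r\ge 1$, $X^k\in pa(dis(Y))$; otherwise $X^k\in pa(Y)$. These nodes are ancestors of $Y$.
\item If $e$ is $X^k\leftarrow X^{i_r}$ with $r\ge1$, then $X^{i_r}$ is a non-collider in $dis(Y)$, hence also an ancestor-type node near $Y$.
\end{itemize}
In either case the candidate blocking node $b$ lies in $an(Y)\cup dis(Y)$. I claim $b\notin N^{int}(C)$. Suppose instead $b\in de(c)\cup\{c\}$ for some sub-district collider $c$. If $b\in an(Y)$, this yields a directed path $c\rightarrow\cdots\rightarrow Y$, contradicting condition (ii) of Definition~\ref{SubCol}. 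For $b\in dis(Y)\setminus an(Y)$, I expect one either needs this extra case or to use that $P$ then continues differently. So $b\in S$, $b$ is a non-collider, and $P$ is blocked in these cases.

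\textbf{Step 2: the remaining case.} The remaining case is $r=0$ and $e=X^{i_1}\leftarrow Y$, i.e. $P$ enters $Y$ from a child. Then write $P$ as $I^l\rightarrow\cdots X^{i_1}\leftarrow Y$. I follow $P$ backward from $X^{i_1}$ through consecutive nodes joined by edges with an arrowhead at the node nearer $Y$.
\begin{itemize}
\item If the first edge breaking this pattern is a directed edge out of some node towards $Y$'s side, that node is a non-collider. It is also an ancestor of a node in $ch(Y)$. In that case I show it lies in $S$, or else find a collider outside $an(S)$.
\item The cleanest route is the following. If every segment from $I^l$ to $Y$ is of the form \eqref{intsub}, then $P$ itself (restricted appropriately) is an intervened sub-district $s^h$. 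The set $C$ contains a collider $c^{h,k}$ on it.
\end{itemize}
By Definition~\ref{SubCol}(i), $c^{h,k}$ is a collider on $P$. It and all its descendants are excluded from $S$ by construction of $N^{int}(C)$. So $c^{h,k}\notin an(S)\cup S$ and, by Proposition~\ref{Block}, $P$ is $m$-blocked.

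\textbf{Main obstacle.} The main obstacle is the general path that leaves the "sub-district" shape: $P$ may contain non-colliders whose outgoing edges point away from $Y$, or colliders that are ancestors of $S$. My plan is to pick the node on $P$ closest to $Y$, among those entered by a directed edge pointing away from $Y$'s end, and argue as follows.
\begin{itemize}
\item Either that node is a non-collider in $S$, because it is an ancestor of $Y$ or of a child of $Y$ without being a descendant of any $c$, which would again contradict Definition~\ref{SubCol}(ii) or create a cycle.
\item Or the portion between it and $Y$ forms an intervened sub-district after shortcutting, whose collider in $C$ blocks $P$.
\end{itemize}
I expect that making this shortcutting rigorous is the delicate part, mirroring the "farthest node" argument in Proposition~\ref{remove sub hidden}.
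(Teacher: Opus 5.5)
Your proposal has a genuine gap, at exactly the points you flag. Your strategy is to show that the natural blocking non-collider $b$ always lies in $S=N^{-int}(C)$, and this claim is false.

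In Step 1 you assert that nodes of $pa(dis(Y))$ are ancestors of $Y$. That holds only for $r=0$. Take $I\rightarrow X^1\leftrightarrow X^2\leftarrow Y$, $X^1\rightarrow X^3\leftrightarrow Y$, $X^1\rightarrow X^5\rightarrow X^6\leftarrow Y$, and $C=\{1\}$. This satisfies Setting~\ref{Setting 2}, and $X^1$ is a sub-district collider.
\begin{itemize}
\item On the path $I\rightarrow X^1\rightarrow X^3\leftrightarrow Y$, your $b=X^1\in pa(dis(Y))$ is not an ancestor of $Y$ and lies in $N^{int}(C)$. Moreover $X^3\in dis(Y)\cap N^{int}(C)$, so nodes of the district of $Y$ can also fail to be in $S$.
\item On the path $I\rightarrow X^1\rightarrow X^5\rightarrow X^6\leftarrow Y$, the node $X^5$ is an ancestor of a child of $Y$ and also a descendant of a sub-district collider. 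This contradicts nothing, so the first branch of your ``main obstacle'' dichotomy fails.
\item The second branch fails too. This path cannot be shortcut into an intervened sub-district, since $I$ does not point into $dis(X^6)$. In general, a sub-district collider of a shortcut node sequence need not be a collider on $P$.
\end{itemize}

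The missing idea, which the paper uses in every case except the pure intervened sub-district case, is that $N^{int}(C)$ is closed under taking descendants. Suppose the natural non-collider $b$ lies in $N^{int}(C)$. Follow the directed edges out of $b$ along $P$ until you reach the first collider $q$.
\begin{itemize}
\item On the $Y$ side, $q$ exists: otherwise $Y\in de(b)\subseteq de(c)$ for some $c\in C$, contradicting Definition~\ref{SubCol}(ii).
\item On the $I^l$ side, $q$ exists because $I^l$ is a source, so $P$ starts with $I^l\rightarrow$.
\end{itemize}
Then $q\in de(b)\subseteq N^{int}(C)$ and $de(q)\subseteq N^{int}(C)$. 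Hence $q\notin S\cup an(S)$, and $P$ is $m$-blocked. In your two examples this gives $q=X^3$ and $q=X^6$ respectively.

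Combine this with the paper's split by how $P$ enters $Y$: via $pa(Y)$; via $pa(dis(Y))$; via an edge out of $dis(Y)$; as an intervened sub-district; via $pa(dis(ch(Y)))$; or via an edge out of $dis(ch(Y))$. The descendant-closure argument closes all of these cases. The set $C$ is needed only when $I^l$ points directly into the district of a child of $Y$, and your Step 2 handles that case correctly with Proposition~\ref{Block}.
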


\begin{proof}
We need to show that $\forall l \in \{1,\cdots,m\}$, it holds that $I^l \indep_m Y | X^{N^{-int}(C)}$. So we look at all paths between interventions and the response $Y$. It is necessary to divide these paths into six shapes in terms of how the path enters $Y$. Fix $l \in \{1,\cdots,m\}$ and let $P$ be a path connecting $I^l$ and $Y$. 

\begin{enumerate}[(i)]
    \item $I^l \rightarrow \cdots\ X^k \rightarrow Y$, 
    
    where $k \notin sp(Y)$. $P$ enters $Y$ via a parent but not a spouse of $Y$. It has $k \in N^{-int}(C)$ otherwise, there is a directed path from a sub-district collider to $Y$, contradicting the definition of a sub-district collider. Thus, $P$ can be $m$-blocked by $X^{N^{-int}(C)}$ as $X^k$ is a non-collider on $P$.
    
    \item $I^l \rightarrow \cdots\ X^k \rightarrow X^{i_n} \twoset{-}{\leftrightarrow} X^{i_{n-1}} \twoset{-}{\leftrightarrow} \cdots \twoset{-}{\leftrightarrow} X^{i_1} \twoset{-}{\leftrightarrow} Y$,
    
    where $k \notin sp(X^{i_n})$. The path first enters a parent of $dis(Y)$. If $k \in N^{-int}(C)$, $P$ is $m$-blocked by $X^{N^{-int}(C)}$ as $X^k$ is a non-collider on $P$. If not, $X^k$ is either a descendant of a sub-district collider or a sub-district collider. Then there must be a collider on the path between $X^k$ and $Y$. Otherwise, there will be a direct path from a sub-district collider to $Y$. Let $X^{i_q}$ be the closest collider to $X^k$, which means that $i_q \in N^{int}(C)$ and then $de(X^{i_q}) \subseteq N^{int}(C)$. So $P$ can be $m$-blocked by $X^{N^{-int}(C)}$ as $X^{i_q}$ is a collider on $P$.
    
    \item $I^l \rightarrow \cdots\ X^k \leftarrow X^{i_n} \twoset{-}{\leftrightarrow} X^{i_{n-1}} \twoset{-}{\leftrightarrow} \cdots \twoset{-}{\leftrightarrow} X^{i_1} \twoset{-}{\leftrightarrow} Y$,
    
    where $k \notin sp(X^{i_n})$. $P$ firstly enters a child of $dis(Y)$. If $i_n \in N^{-int}(C)$, $P$ is $m$-blocked by $X^{N^{-int}(C)}$ as $X^{i_n}$ is a non-collider on $P$. Otherwise, there must be a collider on the path between $X^{i_n}$ and $I^l$ as two end arrows have different directions. Similarly, let $X^q$ be the closest collider to $X^{i_n}$, so $q \in N^{int}(C)$ and $de(X^q) \subseteq N^{int}(C)$. Thus $P$ is still $m$-blocked by $X^{N^{-int}(C)}$ as $X^q$ is a collider on $P$.
    
    \item $I^l \rightarrow X^{i_n} \twoset{-}{\leftrightarrow} X^{i_{n-1}} \twoset{-}{\leftrightarrow} \cdots \twoset{-}{\leftrightarrow} X^{i_1} \leftarrow Y$ 
    
    where $i_1 \notin sp(Y)$. In this case, the intervention acts on the district of a child of $Y$ directly. From the construction of $N^{int}(C)$, there is a sub-district collider $X^{i_q}$ on the intervened sub-district such that $i_q \in N^{int}(C)$ and $de(X^q) \subseteq N^{int}(C)$. By Proposition \ref{Block}, $P$ is $m$-blocked by $X^{N^{-int}(C)}$.
    
    \item $I^l \rightarrow \cdots X^k \rightarrow X^{i_n} \twoset{-}{\leftrightarrow} X^{i_{n-1}} \twoset{-}{\leftrightarrow} \cdots \twoset{-}{\leftrightarrow} X^{i_1} \leftarrow Y$, 
    
    where $i_1 \notin sp(Y)$ and $k \notin sp(X^{i_n})$. $P$ enters the district of a child of $Y$ via a parent of this district. Then if $k \in N^{-int}(C)$, $P$ is $m$-blocked by $X^{N^{-int}(C)}$ as $X^k$ is a non-collider on $P$. Otherwise, $k \in N^{int}(C)$ and $de(X^k) \subseteq N^{int}(C)$. Moreover, there must be a collider on the path between $X^k$ and $Y$ as the two ends are $X^k \rightarrow$ and $\leftarrow Y$. Let $X^{i_q}$ be the closest collider to $X^k$, then $i_q \in N^{int}(C)$ and $de(X^{i_q}) \subseteq N^{int}(C)$. Thus $P$ is $m$-blocked by $X^{N^{-int}(C)}$ as $X^{i_q}$ is a collider on $P$.
    
    \item $I^l \rightarrow \cdots X^k \leftarrow X^{i_n} \twoset{-}{\leftrightarrow} X^{i_{n-1}} \twoset{-}{\leftrightarrow} \cdots \twoset{-}{\leftrightarrow} X^{i_1} \leftarrow Y$, 
    
    where $i_1 \notin sp(Y)$ and $k \notin sp(X^{i_n})$. Then if $i_n \in N^{-int}(C)$, $P$ is $m$-blocked by $X^{N^{-int}(C)}$ as $X^{i_n}$ is a non-collider on $P$. Otherwise, there must be a collider on the path between $I^l$ and $X^{i_n}$. Let $X^q$ be the closest collider to $X^{i_n}$. So $X^q$ is a descendant of $X^{i_n}$, thus a descendant of a sub-district collider. $q \notin N^{-int}(C)$ and $de(X^q) \cap N^{-int}(C) = \emptyset$, so $P$ is still $m$-blocked by $X^{N^{-int}(C)}$ as $X^q$ is a collider on $P$.
\end{enumerate}

Therefore, all paths from $I^l$ to $Y$ are $m$-blocked by $X^{N^{-int}(C)}$, which means $I^l \indep_m Y | X^{N^{-int}(C)}$, $\forall l \in \{1,\cdots,m\}$. That is to say, $N^{-int}(C)$ is an intervention-stable set with respect to all interventions. 
\end{proof}

\subsubsection{Stable Blanket in ADMGs}

\begin{definition}[Stable Frontier]
For each complete set of sub-district colliders $C$, we can define a stable frontier, denoted by $SF_I(Y, C)$, as the smallest node set $S \subseteq N^{-int}(C) = \{1,\cdots,p\} \backslash N^{int}(C)$ that satisfies
$$
\forall j \in N^{-int}(C) \backslash S :\ X^j \indep_m Y | X^S.
$$
\end{definition}

We establish an example in Figure \ref{fig:M3} to illustrate that the stable frontiers are likely to be different depending on the complete set of sub-district colliders.

\begin{example}
Assume SCMs over $(X^1,\cdots,X^8,Y,I^1,I^2)$.
\begin{figure}[H]
    \centering
    \begin{tikzpicture}[
        >=Stealth,
        thick,
        scale=1.15,
        every node/.style={font=\normalsize},
        obsnode/.style={
            circle,
            draw=black,
            fill=white,
            minimum size=9mm,
            inner sep=0pt
        },
        intnode/.style={
            rectangle,
            draw=black,
            fill=white,
            minimum width=9mm,
            minimum height=9mm,
            inner sep=0pt
        },
        ynode/.style={
            double,
            circle,
            draw=black,
            fill=white,
            minimum size=10mm,
            inner sep=0pt,
            line width=0.8pt
        }
    ]

    \node[intnode] (i1) at (0,0) {$I^1$};
    \node[obsnode] (x2) at (2,0) {$X^2$};
    \node[obsnode] (x3) at (4,0) {$X^3$};
    \node[obsnode] (x4) at (6,0) {$X^4$};
    \node[ynode]   (y)  at (8,0) {$Y$};

    \node[obsnode] (x8) at (4,1.8) {$X^8$};
    \node[obsnode] (x1) at (10,1.8) {$X^1$};

    \node[intnode] (i2) at (4,-2) {$I^2$};
    \node[obsnode] (x7) at (6,-2) {$X^7$};
    \node[obsnode] (x6) at (8,-2) {$X^6$};
    \node[obsnode] (x5) at (10,-2) {$X^5$};

    \draw[->] (i1) -- (x2);
    \draw[->] (x8) -- (x3);
    \draw[->] (y) -- (x4);
    \draw[->] (x1) -- (y);
    \draw[->] (x2) -- (x7);
    \draw[->] (i2) -- (x7);
    \draw[->] (x7) -- (x6);

    \draw[<->, bend left=18] (x2) to (x3);
    \draw[->,  bend left=18] (x3) to (x2);

    \draw[<->, bend left=18] (x3) to (x4);
    \draw[->,  bend right=18] (x3) to (x4);

    \draw[<->] (y) -- (x6);
    \draw[<->] (y) -- (x5);

    \end{tikzpicture}
    \caption{ADMG generated by latent projection with respect to hidden variables. $I^1$ and $I^2$ are interventions. There are two sub-district colliders on the intervened sub-district $I^1 \rightarrow X^2 \twoset{\leftarrow}{\leftrightarrow} X^3 \twoset{\rightarrow}{\leftrightarrow} X^4 \leftarrow Y$. If the complete set of colliders chooses only $X^2$, then $SF_I(Y,C) = \{X^1,X^3,X^4,X^5,X^8\}$. If the complete set of colliders chooses only $X^4$, then $SF_I(Y,C) = \{X^1,X^5,X^6,X^7\}$. Thus the stable frontiers are not unique.}
    \label{fig:M3}
\end{figure}
\end{example}

The following theorem shows that for each complete set of sub-district colliders $C$, the corresponding stable frontier is well defined.

\begin{theorem}
Given the graphical models of Setting \ref{Setting 2}, the generated ADMG is $\mathcal{G}$. The stable frontier of $Y$ given a complete set of sub-district colliders $C$ consists of the district of Y, the parents of the district of Y, the districts of Y's children, and the parents of districts of Y's children in the sub-graph of $\mathcal{G}$ over $(N^{-int}(C), Y, I)$. Moreover, it is intervention-stable with respect to all interventions.
\end{theorem}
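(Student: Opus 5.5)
The plan is to reduce the statement to Proposition \ref{MB hidden} applied to a smaller graph, and then to combine the result with Theorem \ref{theorem 1}.

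Let $\mathcal{G}' = \mathcal{G}_{(N^{-int}(C), Y, I)}$ be the induced subgraph. Write $S^*$ for the union of $dis'(Y)\setminus Y$, $pa'(dis'(Y))$, $dis'(ch'(Y))$ and $pa'(dis'(ch'(Y)))$, all computed in $\mathcal{G}'$ and then restricted to predictor indices. The aim is to show two things. First, $X^j \indep_m Y \mid X^{S^*}$ in $\mathcal{G}$ for every $j \in N^{-int}(C)\setminus S^*$. Second, every $S \subseteq N^{-int}(C)$ with this separating property contains $S^*$. Together these give $SF_I(Y,C)=S^*$.

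\textbf{Separation step.} Take a path $P$ in $\mathcal{G}$ from $X^j$ to $Y$. I split into two cases.
\begin{enumerate}[(a)]
\item If $P$ stays inside $\mathcal{G}'$, I reuse the four-shape case analysis from the proof of Proposition \ref{MB hidden} verbatim, with parents, districts and children taken in $\mathcal{G}'$.
\item If $P$ leaves $\mathcal{G}'$, I walk from $Y$ along $P$ to the first node $X^q$ with $q \in N^{int}(C)$. Since $N^{int}(C)$ is closed under descendants, and $Y$ is not a descendant of a sub-district collider (condition (ii) of Definition \ref{SubCol}), the segment of $P$ between $Y$ and $X^q$ lies in $\mathcal{G}'$.
\end{enumerate}
In case (b) there are two sub-cases. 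If that segment contains a non-collider in $S^*$, the Proposition \ref{MB hidden} argument blocks $P$ there. Otherwise, as in Theorem \ref{theorem 1}(ii)--(vi), I take the collider on $P$ closest to $X^q$. That collider lies in $N^{int}(C)$, so neither it nor its descendants are in $S^*\subseteq N^{-int}(C)$, and $P$ is blocked. Intervention nodes can occur on $P$ only as endpoints or non-colliders (they are sources), so they do not interfere.

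\textbf{Minimality step.} I reproduce the recursive argument of Proposition \ref{MB hidden} using the explicit paths
$$X^{i_n}\leftrightarrow\cdots\leftrightarrow Y,\qquad X^k\to X^{i_n}\leftrightarrow\cdots\leftrightarrow Y,\qquad X^{i_n}\leftrightarrow\cdots\leftrightarrow X^{i_1}\leftarrow Y,\qquad X^k\to X^{i_n}\leftrightarrow\cdots\leftrightarrow X^{i_1}\leftarrow Y.$$
All of these lie in $\mathcal{G}'$, hence in $\mathcal{G}$. Each of them is $m$-open given any $S\subseteq N^{-int}(C)$ that contains the earlier nodes of the path, because every interior node is a collider lying in that conditioning set. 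By induction along each path, every such $S$ contains $S^*$.

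\textbf{Intervention stability.} I show $I^l \indep_m Y \mid X^{S^*}$. Take a path from $I^l$ to $Y$ and let $X^j$ be the last node on it, counting from $I^l$, that lies outside $S^*\cup\{Y\}$. If $X^j\in N^{-int}(C)$, the separation step blocks the subpath from $X^j$ to $Y$. If $X^j\in N^{int}(C)$, or if no such node exists and the path enters $Y$ directly through $S^*$, I fall back on the case analysis of Theorem \ref{theorem 1}, replacing $N^{-int}(C)$ by $S^*$. An alternative route is a contraction-type graphical argument: combine $I^l \indep_m Y\mid X^{N^{-int}(C)}$ from Theorem \ref{theorem 1} with $Y\indep_m X^{N^{-int}(C)\setminus S^*}\mid X^{S^*}$.

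I expect this stability step to be the main obstacle. $m$-separation is not obviously closed under the contraction-type combination in the alternative route, so I would rather argue directly on paths. The delicate point is a path that reaches $Y$ through nodes of $S^*$ acting as colliders, after earlier passing through $N^{int}(C)$. There I must show that some collider in $N^{int}(C)$, or some non-collider in $S^*$, still blocks it. I would handle this by using descendant-closure of $N^{int}(C)$ together with the fact that $I^l$ has no edges into $dis(Y)$.
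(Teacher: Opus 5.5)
Your separation and minimality steps are sound. The separation step is organized differently from, and more economically than, the paper's seven-shape analysis.

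\textbf{Separation, case (b).} This case actually needs no sub-cases. Let $X^q$ be the first node of $N^{int}(C)$ met when walking from $Y$. The edge joining $X^q$ to its $Y$-side neighbour carries an arrowhead at $X^q$. Indeed, that neighbour is $Y$ or lies in $N^{-int}(C)$ or in $I$, and $N^{int}(C)$ is closed under descendants and contains no ancestor of $Y$. So either $X^q$ is a collider, or the directed edges leaving $X^q$ towards $X^j$ end in a collider of $de(X^q)\subseteq N^{int}(C)$. This chain cannot reach $X^j\in N^{-int}(C)$, and it cannot reach an intervention node, since those are sources. State this direction explicitly: ``the collider closest to $X^q$'' could lie on the $Y$-side and then need not be in $N^{int}(C)$.

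\textbf{Separation, case (a).} Reusing Proposition~\ref{MB hidden} verbatim here silently needs that no $I^l$ is a parent of $dis(ch(Y))$ in $\mathcal{G}'$. This holds because such an edge would yield an intervened sub-district inside $N^{-int}(C)$, contradicting completeness of $C$.

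\textbf{The gap: intervention stability.} Your reduction is correct when the last node outside $S^*\cup\{Y\}$ lies in $N^{-int}(C)$: a blocking node of that subpath is interior to it, so it blocks the whole path. The remaining cases, however, are deferred to ``Theorem~\ref{theorem 1} with $N^{-int}(C)$ replaced by $S^*$''. That does not transfer, because the non-colliders used in Theorem~\ref{theorem 1} may lie in $N^{-int}(C)\setminus S^*$.

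The two facts you plan to use, descendant-closure and no intervention edges into $dis(Y)$, also do not suffice. If every interior node lies in $S^*$, the path is open exactly when all interior nodes are colliders. That means it reads $I\to X^{a_n}\leftrightarrow\cdots\leftrightarrow X^{a_1}\leftrightarrow Y$ or $I\to X^{a_n}\leftrightarrow\cdots\leftrightarrow X^{a_1}\leftarrow Y$ with all $a_i\in S^*$. Setting~\ref{Setting 2} excludes the first form. Only completeness of $C$ excludes the second: that intervened sub-district must contain an element of $C\subseteq N^{int}(C)$.

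With this ingredient the residual cases close:
\begin{enumerate}[(i)]
\item If the last outside node lies in $N^{int}(C)$, run the arrowhead-and-directed-chain argument above, now towards $I^l$.
\item If it is an intervention node, the interior nodes must be colliders in $S^*$, which is excluded as just shown.
\end{enumerate}

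\textbf{The contraction route also works.} The route you set aside is valid. $m$-separation among observed nodes coincides with $d$-separation in the DAG obtained by replacing each $i\leftrightarrow j$ by $i\leftarrow \ell_{ij}\to j$, with a new latent $\ell_{ij}$. Hence $m$-separation satisfies the semi-graphoid axioms. Contract your $Y\indep_m X^{N^{-int}(C)\setminus S^*}\mid X^{S^*}$ with $Y\indep_m I^l\mid X^{N^{-int}(C)}$ from Theorem~\ref{theorem 1}, then apply decomposition; this gives $I^l\indep_m Y\mid X^{S^*}$ directly. Either completion is shorter than the paper's proof, which re-proves stability from scratch via a six-shape analysis parallel to Theorem~\ref{theorem 1} instead of reusing the separation step.
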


\begin{proof}
We let $S = dis(Y) \cup pa(dis(Y)) \cup dis(ch(Y)) \cup pa(dis(ch(Y))) \backslash Y$ in the sub-graph $\mathcal{G}_{(N^{-int}(C), Y, I)}$. Firstly, we prove that for $j \in N^{-int}(C) \backslash S$, $X^j \indep_m Y | X^S$. We look at the paths connecting $X^j$ and $Y$. Let $P$ be a path between $X^j$ and $Y$:

\begin{enumerate}[(i)]
    \item $X^j \cdots X^k \rightarrow Y$,
    
    with $k \notin sp(Y)$. $P$ is $m$-blocked by $S$ as $k \in pa(Y) \subseteq S$.
    
    \item $X^j \twoset{-}{\leftrightarrow} X^{i_n} \twoset{-}{\leftrightarrow} \cdots \twoset{-}{\leftrightarrow} X^{i_1} \twoset{-}{\leftrightarrow} Y$.
    
    In this case, as $j \in N^{-int}(C)$ and $j \notin S$, there must be an $i_q$ such that $i_q \notin N^{-int}(C)$ and $1 \leq q \leq n$. Let $q$ be the smallest integer in $\{1, \cdots, n\}$ satisfying $i_q \in N^{int}(C)$. Then there is no directed arrow from $X^{i_q}$ to $X^{i_{q-1}}$. Moreover, let $i_l$ be the farthest descendant of $X^{i_q}$ such that $q \leq l \leq n$. Thus $X^{i_l}$ is a collider on the path $P$. Besides, it holds that $i_l \in N^{int}(C)$ and $de(X^{i_l}) \subseteq N^{int}(C)$, which means that $i_l \notin S$ and $de(X^{i_l}) \cap S = \emptyset$. So $P$ is $m$-blocked by $S$ as $X^{i_l}$ acts as a collider on $P$.
    
    \item $X^j \cdots X^k \rightarrow X^{i_n} \twoset{-}{\leftrightarrow} \cdots X^{i_1} \twoset{-}{\leftrightarrow} Y$,
    
    where $k \notin sp(X^{i_n})$. If $j = k$, the argument is similar to that in (ii) since $j \notin S$. There must be an $i_q$ such that $i_q \in N^{int}(C)$ and $1 \leq q \leq n$. Thus $P$ is $m$-blocked by $S$ in this case.
    
    If $j \neq k$, we need to discuss three situations. When $k \in S$, $P$ is $m$-blocked by $S$ as $X^k$ is a non-collider on $P$. Otherwise, if $k \in N^{int}(C)$, let $X^{i_q}$ be the farthest descendant of $X^k$ such that $1 \leq q \leq n$. $X^{i_q}$ is a collider on $P$ as $Y$ can not be a descendant of $X^k$. So $P$ can be $m$-blocked by $S$. If $k \in N^{-int}(C)$ but $k \notin S$, there must be an $i_q$ such that $i_q \in N^{int}(C)$ and $1 \leq q \leq n$. Analogously to the argument in (ii), $P$ is still $m$-blocked by $S$.
    
    \item $X^j \cdots X^k \leftarrow X^{i_n} \twoset{-}{\leftrightarrow} \cdots X^{i_1} \twoset{-}{\leftrightarrow} Y$,
    
    where $k \notin sp(X^{i_n})$. If $i_n \in S$, then $P$ is $m$-blocked by $S$ since $X^{i_n}$ is a non-collider on $P$.
    
    If $i_n \notin S$, moreover, if $i_n \in N^{int}(C)$, there must be a collider between $X^{i_n}$ and $X^j$ as $j \notin N^{int}(C)$. Let $X^l$ be farthest descendant of $X^{i_n}$ located between $X^k$ and $X^j$. So $P$ is $m$-blocked by $S$ as $X^l$ is a collider on $P$. Besides, if $i_n \in N^{-int}(C)$ but $i_n \notin S$, there will be an $i_q$ such that $i_q \in N^{int}(C)$ and $1 \leq q \leq n$. Likewise, we can get $P$ is $m$-blocked by $S$.
    
    \item $X^j \twoset{-}{\leftrightarrow} X^{i_n} \twoset{-}{\leftrightarrow} \cdots \twoset{-}{\leftrightarrow} X^{i_1} \leftarrow Y$.
    
    As $j \notin S$, there must be an $i_q$ satisfying $i^q \in N^{int}(C)$ and $1 \leq q \leq n$. The argument is similar to that in (ii). So $P$ is $m$-blocked by $S$.
    
    \item $X^j \cdots X^k \rightarrow X^{i_n} \twoset{-}{\leftrightarrow} \cdots \twoset{-}{\leftrightarrow} X^{i_1} \leftarrow Y$,
    
    where $k \notin sp(X^{i_n})$. If $j = k$, there must be an $i_q$ such that $i^q \in N^{int}(C)$ and $1 \leq q \leq n$. Likewise, $P$ is $m$-blocked by $S$.
    
    If $j \neq k$, in addition, $k \in S$, then $P$ is $m$-blocked by $S$ as $X^k$ is a non-collider on $P$. If $k \notin S$, there are two cases. One is that $k \in N^{int}(C)$, under which let the farthest descendant of $X^k$ be $X^{i_q}$, $1 \leq q \leq n$. So $i_q \in N^{int}(C)$ and $de(X^{i_q}) \subseteq N^{int}(C)$. We can get that $P$ is $m$-blocked by $S$ because $X^{i_q}$ is a collider on $P$. The other is that $k \in N^{-int}(C) \backslash S$. Then there must be an $i_q$ such that $i^q \in N^{int}(C)$ and $1 \leq q \leq n$. Likewise, $P$ is $m$-blocked by $S$.
    
    \item $X^j \cdots X^k \leftarrow X^{i_n} \twoset{-}{\leftrightarrow} \cdots \twoset{-}{\leftrightarrow} X^{i_1} \leftarrow Y$,
    
    where $k \notin sp(X^{i_n})$. When $i_n \in S$, $P$ is $m$-blocked by $S$ as $X^{i_n}$ is a non-collider.
    
    If $k \in N^{int}(C)$, there must be a collider between $X^j$ and $X^k$ because $j \in N^{-int}(C)$. So $P$ is $m$-blocked by $S$. Otherwise, $k \in N^{-int}(C) \backslash S$, which will result in an $i_q \in N^{int}(C)$, $1 \leq q \leq n-1$. Similarly, $P$ is $m$-blocked by $S$.
\end{enumerate}

Secondly, we aim to show that for each $S^\prime \subseteq N^{-int}(C)$ satisfying $\forall j \in N^{-int}(C) \backslash S^\prime$, $X^j \indep_m Y | X^{S^\prime}$, it holds that $S \subseteq S^\prime$. It suffices to prove that all the four parts of $S$ are contained by $S^\prime$. These arguments are analogous to the second part proof of Proposition \ref{MB hidden}. Therefore, $SF_I(Y, C)$ is the union of the district of $Y$, the parents of the district of $Y$, the districts of $Y$'s children, and the parents of the districts of $Y$'s children in the sub-graph $\mathcal{G}_{(N^{-int}(C), Y, I)}$.
 
Thirdly, parallel to the proof of Theorem \ref{theorem 1}, let $P$ be a path from $I^l$ to $Y$, $l \in \{1,\cdots,m\}$. Then $P$ has one of the following shapes.

\begin{enumerate}[(i)]
    \item $I^l \rightarrow \cdots\ X^k \rightarrow Y$, 
    
    where $k \notin sp(Y)$. Then $P$ is $m$-blocked by $X^{SF_I(Y,C)}$ since $k \in pa(Y) \subseteq SF_I(Y,C)$.
    
    \item  $I^l \rightarrow \cdots\ X^k \rightarrow X^{i_n} \twoset{-}{\leftrightarrow} X^{i_{n-1}} \twoset{-}{\leftrightarrow} \cdots \twoset{-}{\leftrightarrow} X^{i_1} \twoset{-}{\leftrightarrow} Y$,
    
    where $k \notin sp(X^{i_n})$. Let $X^{i_j}$ be the farthest node away from $Y$ on the district such that $i_1, \cdots, i_j \in N^{-int}(C)$ but $i_{j+1} \notin N^{-int}(C)$ (if $j < n$). If $j = n$, then $k \in N^{-int}(C)$, which means $k \in pa(dis(Y))$ in the sub-graph $\mathcal{G}_{N^{-int}(C)}$. So $P$ is $m$-blocked by $X^{SF_I(Y,C)}$ as $X^k$ is a non-collider on $P$. 
    
    Otherwise, if $j < n$, we look at $X^{i_{j+1}}$. $i_{j+1} \in N^{int}(C)$ so $de(X^{i_{j+1}}) \subseteq N^{int}(C)$. There must be a collider between $X^k$ and $X^{i_j}$ because the middle part of $P$ is $X^k \rightarrow \cdots \leftrightarrow X^{i_j}$ or $X^k \rightarrow \cdots \leftarrow X^{i_j}$. Let $X^{i_q}$ be the closest collider to $X^{i_j}$, so $i_q \in de(X^{i_{j+1}})$ or $q = j+1$. Thus, $i_q \notin N^{-int}(C) \cup an(N^{-int}(C))$, then $i_q \notin SF_I(Y,C) \cup an(SF_I(Y,C))$. So $P$ is $m$-blocked given $X^{SF_I(Y,C)}$ as $X^{i_q}$ is a collider on $P$.

    \item $I^l \rightarrow \cdots\ X^k \leftarrow X^{i_n} \twoset{-}{\leftrightarrow} X^{i_{n-1}} \twoset{-}{\leftrightarrow} \cdots \twoset{-}{\leftrightarrow} X^{i_1} \twoset{-}{\leftrightarrow} Y$,
    
    where $k \notin sp(X^{i_n})$. Let $X^{i_j}$ be the farthest node away from $Y$ on the district such that $i_1, \cdots, i_j \in N^{-int}(C)$ but $i_{j+1} \notin N^{-int}(C)$ (if $j < n$). If $j = n$, then $P$ is $m$-blocked by $X^{SF_I(Y,C)}$ since $X^{i_n}$ is a non-collider on $P$ such that $i_n \in dis(Y) \subseteq SF_I(Y,C)$ in the sub-graph over $N^{-int}(C)$.
    
    Otherwise, $j < n$, then $i_{j+1} \in N^{int}(C)$. And there must be a collider between $X^{i_j}$ and $I^l$ since $P$ has a part as $I^l \rightarrow \cdots \leftrightarrow X^{i_j}$ or $I^l \rightarrow \cdots \leftarrow X^{i_j}$. Let $X^q$ be the closest collider to $X^{i_j}$, hence $q \in de(X^{i_{j+1}})$ or $q = i_{j+1}$, which means $q \notin N^{-int}(C) \cup an(N^{-int}(C))$. So $P$ is also $m$-blocked by $X^{SF_I(Y,C)}$ as $X^q$ is a collider on $P$.
    
    \item $I^l \rightarrow X^{i_n} \twoset{-}{\leftrightarrow} X^{i_{n-1}} \twoset{-}{\leftrightarrow} \cdots \twoset{-}{\leftrightarrow} X^{i_1} \leftarrow Y$, 
    
    where $i_1 \notin sp(Y)$. From the construction of $N^{int}(C)$, there is a sub-district collider $X^{i_j}$ such that $i_j \in N^{int}(C)$ and $de(X^{i_j}) \subseteq N^{int}(C)$. Then $P$ is $m$-blocked by $X^{N^{-int}(C)}$ as $X^{i_j}$ is a collider on $P$, so $P$ is $m$-blocked by $X^{SF_I(Y,C)}$.
    
    \item $I^l \rightarrow \cdots X^k \rightarrow X^{i_n} \twoset{-}{\leftrightarrow} X^{i_{n-1}} \twoset{-}{\leftrightarrow} \cdots \twoset{-}{\leftrightarrow} X^{i_1} \leftarrow Y$, 
    
    where $i_1 \notin sp(Y)$ and $k \notin sp(X^{i_n})$. Let $X^{i_j}$ be the farthest node away from $X^{i_1}$ on the district of $X^{i_1}$ such that $i_1, \cdots, i_j \in N^{-int}(C)$ but $i_{j+1} \notin N^{-int}(C)$ (if $j < n$). If $j = n$, then $k \in N^{-int}(C)$. Furthermore, $k \in pa(dis(ch(Y)))$ in the sub-graph $\mathcal{G}_{N^{-int}(C)}$, which indicates that $P$ is $m$-blocked by $X^{SF_I(Y,C)}$ as $X^k$ is a non-collider on $P$. 
    
    Otherwise, if $j < n$, then $i_{j+1} \in N^{int}(C)$. And there must be a collider between $X^{i_j}$ and $X^k$ since $P$ has a part as $X^k \rightarrow \cdots \leftarrow X^{i_j}$ or $X^k \rightarrow \cdots \leftrightarrow X^{i_j}$. Let $X^{i_q}$ be the closest collider to $X^{i_j}$. It indicates that $i_q \in de(X^{i_{j+1}})$ or $q = j+1$, which means $i_q \notin N^{-int}(C) \cup an(N^{-int}(C))$. This implies that $P$ is $m$-blocked given $X^{SF_I(Y,C)}$ as $X^{i_q}$ is a collider on $P$.
    
    \item $I^l \rightarrow \cdots X^k \leftarrow X^{i_n} \twoset{-}{\leftrightarrow} X^{i_{n-1}} \twoset{-}{\leftrightarrow} \cdots \twoset{-}{\leftrightarrow} X^{i_1} \leftarrow Y$, 
    
    where $i_1 \notin sp(Y)$ and $k \notin sp(X^{i_n})$. Still let $X^{i_j}$ be the farthest node away from $X^{i_1}$ on the district of $X^{i_1}$ such that $i_1, \cdots, i_j \in N^{-int}(C)$ but $i_{j+1} \notin N^{-int}(C)$ (if $j < n$). If $j = n$, then $P$ is $m$-blocked by $X^{SF_I(Y,C)}$ as $X^{i_n}$ is a non-collider on $P$ and $i_n \in dis(ch(Y)) \subseteq SF_I(Y,C)$ in the sub-graph $\mathcal{G}_{N^{-int}(C)}$. 
    
    Otherwise, $i_{j+1} \in N^{int}(C)$ when $j < n$. And there must be a collider between $X^{i_j}$ and $I^l$ since $P$ has a part as $I^l \rightarrow \cdots \leftarrow X^{i_j}$ or $I^l \rightarrow \cdots \leftrightarrow X^{i_j}$. Let $X^{q}$ be the closest collider to $X^{i_j}$. It indicates that $q \in de(X^{i_{j+1}})$ or $q = i_{j+1}$, which means $i_q \notin N^{-int}(C) \cup an(N^{-int}(C))$. We hence obtain that $P$ is $m$-blocked given $X^{SF_I(Y,C)}$ as $X^q$ is a collider on $P$.
\end{enumerate}

From the above discussions in terms of the way how the path enters $Y$, we know that the stable frontier $SF_I(Y,C)$ for any complete set of sub-district colliders $C$ is also intervention-stable with respect to all interventions. 
\end{proof}

Since any intervention-stable set can $m$-separate all interventions and the response, for each intervened sub-district, there is at least one sub-district collider such that itself and its descendants are not in the intervention-stable set. By Theorem \ref{theorem 1}, by removing every complete set of sub-district colliders $C$ and their descendants, we can construct an intervention-stable set $N^{-int}(C)$. Moreover, the stable frontier $SF_I(Y, C)$ is the most informative subset of the intervention-stable set $N^{-int}(C)$ since it $m$-separates all other variables and is still intervention-stable. 

Furthermore, since each complete set of sub-district colliders $C$ can lead to an intervention-stable set $N^{-int}(C)$, $N^{-int}(C)$ can differ considerably in size and elements as $C$ varies. Usually, the larger the cardinality of $N^{-int}(C)$ is, the more information it may contain. Thus it is better to choose one sub-district collider for every intervened sub-district. Moreover, by choosing the farthest sub-district colliders away from $Y$, the districts of $Y$'s children seem to be longer in the sub-graph over $N^{-int}(C)$. Then it is possible to contain more variables in the corresponding stable frontier.

\begin{corollary}
Given the graphical models of Setting \ref{Setting 2}, let $C^\prime$ be the complete set of sub-district colliders consisting of the farthest sub-district colliders away from $Y$ on all intervened sub-districts. Let $N^{int}(C^\prime)$ be the intervention set determined by $C^\prime$, and
$$
N^{-int}(C^\prime) = \{1,\cdots,p\} \backslash N^{int}(C^\prime).
$$
Then $N^{-int}(C^\prime)$ is an intervention-stable set with respect to all the interventions. 
\end{corollary}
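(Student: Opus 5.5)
The plan is to reduce the corollary directly to Theorem~\ref{theorem 1}. That theorem says $N^{-int}(C)$ is intervention-stable for \emph{every} complete set of sub-district colliders $C$. So it suffices to check that $C^\prime$ is such a complete set; stability of $N^{-int}(C^\prime)$ then follows with no new path analysis.

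First, I will check that $C^\prime$ is well defined. Let $s^1,\ldots,s^l$ be all the intervened sub-districts. By Proposition~\ref{sub-district collider}, each $s^h$ contains at least one sub-district collider. The nodes of $s^h$ are ordered along the path
\[
I \rightarrow X^{i_n} \twoset{-}{\leftrightarrow} \cdots \twoset{-}{\leftrightarrow} X^{i_1} \leftarrow Y,
\]
so among the sub-district colliders on $s^h$ there is a unique one with the largest index, namely the one farthest from $Y$. Call it $c^{h,1}$. The only subtlety is that the same node set may be traversed by several paths. I handle this by treating each intervened sub-district as a fixed path (a sequence $(i_1,\ldots,i_n)$, as in Definition~\ref{subdistrict}), so that ``farthest'' always refers to a position on that sequence.

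Next, I will verify the two requirements of a complete set. Setting $C^\prime=\{c^{1,1},\ldots,c^{l,1}\}$, each element $c^{h,1}$ is a sub-district collider lying in $s^h$, and each $s^h$ contributes $i_h=1\geq 1$ elements. Hence $C^\prime$ meets the definition of a complete set of sub-district colliders. If several sub-districts happen to share the same farthest collider, the set simply has repeated entries collapsed, and this does not affect either requirement.

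Finally, I apply Theorem~\ref{theorem 1} with $C=C^\prime$, which gives $I^l \indep_m Y \mid X^{N^{-int}(C^\prime)}$ for all $l$. The only step needing care is the well-definedness in the first step, and that is routine once Proposition~\ref{sub-district collider} is invoked. I do not expect a genuine obstacle, since all the path-blocking work is already contained in Theorem~\ref{theorem 1}.
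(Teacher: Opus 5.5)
Your proposal is correct and takes the same route as the paper, whose proof is the one-line remark that the corollary is a direct consequence of Theorem~\ref{theorem 1}. Your extra check makes explicit what the paper leaves implicit: that $C^\prime$ is a genuine complete set of sub-district colliders, with existence coming from Proposition~\ref{sub-district collider} and one farthest collider chosen per sub-district, so $i_h = 1 \geq 1$.
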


\begin{proof}
It is just a consequence of Theorem \ref{theorem 1}.
\end{proof}

By choosing the farthest sub-district colliders on each intervened sub-districts, we can construct a specific intervention-stable set $N^{-int}(C^\prime)$ and stable frontier $SF_I(Y, C^\prime)$. However, it does not say that this choice is the only complete set of sub-district colliders that can lead to $SF_I(Y, C^\prime)$. In general, for two different complete sets of sub-district colliders $C^1$ and $C^2$, although their induced $N^{-int}(C^1)$ and $N^{-int}(C^2)$ may be different, the stable frontiers $SF_I(Y, C^1)$ and $SF_I(Y, C^2)$ can be the same. Conversely, if $N^{-int}(C^1)$ and $N^{-int}(C^2)$ are the same, then $SF_I(Y, C^1)$ and $SF_I(Y, C^2)$ are consequently identical. We need more assumptions to ensure that the stable frontier is both the most predictive and unique.

\begin{theorem}
\label{theorem 2}
Given the graphical models in Setting \ref{Setting 2}, if one of the two assumptions is satisfied, 
\begin{enumerate}[(1)]
    \item there is no intervened sub-district, or
    \item for every pair of sub-district colliders on each intervened sub-district, there is an intervened sub-district that has only one sub-district collider. And the only sub-district collider is an ancestor of the pair of sub-district colliders.
\end{enumerate}

\noindent Then the $N^{int}(C)$ will remain the same for different complete sets of sub-district colliders $C$. Thus the stable frontier $SF_I(Y, C)$ is unique. 
\end{theorem}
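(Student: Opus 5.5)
The plan is to show that under either assumption, the union
\[
N^{int}(C)=\bigcup_{c\in C}\bigl(de(X^{c})\cup\{c\}\bigr)
\]
does not depend on which complete set $C$ we choose. Uniqueness of $SF_I(Y,C)$ then follows at once. $N^{-int}(C)$ is the complement of $N^{int}(C)$, and the preceding theorem expresses $SF_I(Y,C)$ purely through the subgraph $\mathcal{G}_{(N^{-int}(C),Y,I)}$: it is the union of the district of $Y$, its parents, the districts of $Y$'s children, and their parents. As the text before the corollary already notes, equal $N^{-int}(C)$ gives equal stable frontiers.

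Under assumption (1) the argument is trivial. With no intervened sub-districts, the only complete set is $C=\emptyset$. So $N^{int}(C)=\emptyset$ and $N^{-int}(C)=\{1,\dots,p\}$. The stable frontier is then the Markov blanket of Proposition~\ref{MB hidden}, which is unique.

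Under assumption (2), write $D(c):=de(X^c)\cup\{c\}$. I compare an arbitrary complete set $C$ with the maximal complete set $C^{\max}$, which takes every sub-district collider on every intervened sub-district. Because $C\subseteq C^{\max}$, we have $N^{int}(C)\subseteq N^{int}(C^{\max})$. For the reverse inclusion, take a sub-district collider $c$ lying on an intervened sub-district $s^h$; I must show $D(c)\subseteq N^{int}(C)$.
\begin{itemize}
\item If $s^h$ has only one sub-district collider, then $c$ is that collider. It must belong to $C$, since completeness requires $i_h\ge 1$. Hence $D(c)\subseteq N^{int}(C)$.
\item If $s^h$ has at least two sub-district colliders, pair $c$ with some other collider $c'$ on $s^h$. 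Assumption (2) gives an intervened sub-district $s^{h'}$ with a unique sub-district collider $u$, and $u$ is an ancestor of both $c$ and $c'$. By the first case, $u\in C$. Since $c\in de(X^u)$, we get $D(c)\subseteq D(u)\subseteq N^{int}(C)$.
\end{itemize}
So $N^{int}(C)=N^{int}(C^{\max})$ for every complete $C$, and the uniqueness of the stable frontier follows as described above.

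I expect the main obstacle to be the interpretation of assumption (2). For the argument to go through, ``every pair'' must cover each collider on a multi-collider sub-district; that is, every such collider must belong to some pair. It must also be read with ``ancestor'' allowing $u$ to be distinct from the pair, or else satisfying $c\in D(u)$. A second point is that ``ancestor'' has to be taken in the directed part of the ADMG, so that descendant sets are transitive: $c\in de(u)$ implies $de(c)\subseteq de(u)$. I would state these readings explicitly and check them against Figure~\ref{fig:M3}. There the single intervened sub-district has two colliders and no such $u$ exists, so non-uniqueness is consistent with the theorem.
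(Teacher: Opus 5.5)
Your proof is correct and follows the paper's argument: the unique collider $u$ of a single-collider intervened sub-district lies in every complete set $C$, and since it is an ancestor of any pair of colliders on a multi-collider sub-district, those colliders (and, by transitivity, their descendants) fall into $N^{int}(C)$ regardless of the choice of $C$. Your comparison with $C^{\max}$ and the explicit step $D(c)\subseteq D(u)$ make precise what the paper leaves implicit, since the paper's proof only concludes $i,j\in N^{int}(C)$ for the pair itself.
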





\begin{figure}[H]
\centering
\begin{tikzpicture}[
    >=Stealth,
    thick,
    scale=1.2,
    every node/.style={font=\normalsize},
    obsnode/.style={
        circle,
        draw=black,
        fill=white,
        minimum size=9mm,
        inner sep=0pt
    },
    intnode/.style={
        rectangle,
        draw=black,
        fill=white,
        minimum width=9mm,
        minimum height=9mm,
        inner sep=0pt
    },
    ynode/.style={
        double,
        circle,
        draw=black,
        fill=white,
        minimum size=10mm,
        inner sep=0pt,
        line width=0.8pt
    }
]

    \node[intnode] (i1) at (0,2) {$I^1$};
    \node[intnode] (i2) at (0,0) {$I^2$};

    \node[obsnode] (x1) at (2,2) {$X^1$};
    \node[obsnode] (x2) at (4,2) {$X^2$};
    \node[obsnode] (x3) at (2,0) {$X^3$};
    \node[obsnode] (x4) at (4,0) {$X^4$};

    \node[ynode]   (y)  at (6,1) {$Y$};

    \draw[->] (i1) -- (x1);
    \draw[->, bend left=18] (x1) to (x2);
    \draw[->] (i2) -- (x3);
    \draw[->] (y) -- (x2);
    \draw[->] (y) -- (x4);
    \draw[->] (x2) -- (x3);
    \draw[->] (x2) -- (x4);

    \draw[<->, bend right=18] (x1) to (x2);
    \draw[<->] (x3) -- (x4);

\end{tikzpicture}
\caption{Graphical illustration for condition (2) in Theorem \ref{theorem 2}.}
\label{fig:M9}
\end{figure}

Indeed, if all intervened sub-districts have at least two sub-district colliders, we choose one of the sub-district colliders $X^i$ that is closest to the intervention side. Then on the intervened sub-district where the $X^i$ locates, there is another sub-district collider $X^j$. It can be shown that $N^{-int}(C^1)$ and $N^{-int}(C^2)$ are different, where $C^1$ and $C^2$ are complete sets of sub-district colliders such that $i \in C^1, i \notin C^2$, and $j \in C^2, j \notin C^1$, because $N^{-int}(C^1)$ can not contain $X^i$ but $N^{-int}(C^2)$ contains $X^i$. However, the stable frontiers may be the same. Thus, the assumptions in Theorem \ref{theorem 2} are strong enough for the same intervention-stable set $N^{int}(C)$ but can be loosened for the unique stable frontier.

\begin{proof}
Firstly, if there is no intervened sub-district in the graphical model, then the $N^{int}(C)$ is empty; thus, the stable frontier is just the Markov blanket of $Y$ with respect to the original graph. 

Otherwise, assume the second assumption holds. Given a complete set of sub-district colliders $C$, $N^{int}(C)$ consists of sub-district colliders in $C$ and their descendants. Let $X^i$ and $X^j$ be two sub-district colliders on an intervened sub-district. There are four cases according to the relation of $i$, $j$, and $C$: $(i \in C, j \in C)$, $(i \in C, j \notin C)$, $(i \notin C, j \in C)$, and $(i \notin C, j \notin C)$. According to the assumptions, there is an intervened sub-district with only one sub-district collider $X^k$ on it. From the construction of $C$, $k \in C$, which means that $de(X^k) \subseteq N^{int}(C)$. Since $k \in an(X^i)$ and $k \in an(X^j)$, it holds that $i, j \in N^{int}(C)$. The sub-district collider $X_k$ can control the influence of the above four cases. Therefore, the $N^{int}(C)$ is the same, and the stable frontier induced from it is the same regardless of the complete sets of sub-district colliders $C$.
\end{proof}

If there is only one stable frontier in the graphical models of Setting \ref{Setting 2}, then we call it the stable blanket of $Y$, denoted by $SB_I(Y)$. The stable blanket of $Y$ is intervention-stable, which means the dependence between $Y$ and $SB_I(Y)$ can be stable in the sense that it is invariant even in new environments. Its uniqueness is important for potential identification from data.

\section{SCMs with Cycles}




In this section, we consider graphical models with causal cycles but without hidden variables.  Causal cycles create additional difficulties for identifying informative predictor sets. 
In an acyclic fully observed model, the Markov blanket and stable blanket can be characterized in terms of local graphical relations around the response. In a cyclic model, however, variables that are not locally adjacent to the response may still carry information about it through feedback relationships. In particular, variables belonging to the same SCC can play similar predictive roles, even if some of them are graphically far from $Y$ along the cycle.

The usual Markov properties based on $d$-separation and $m$-separation are not suitable for graphical models with cycles. Instead, we use $\sigma$-separation, which provides an appropriate separation criterion for cyclic graphical models under suitable solvability assumptions. Compared with the hidden-variable case, where additional predictors enter through bidirected edges in an ADMG, the cyclic case requires predictor sets to account for SCCs. We therefore adapt the notation and constructions from Section~3 by replacing individual nodes with SCCs where appropriate.

Throughout this section, we assume that there are no hidden variables. Hence an SCM induces a DG, possibly with directed cycles. Since cycles may prevent the structural assignments from having a well-defined solution, we impose a solvability condition on the relevant SCCs in the setting below.

\begin{setting}
\label{Setting 3}
Let $X \in \mathcal{X} = \mathcal{X}^1 \times \cdots \times \mathcal{X}^p$ be observable predictors, $Y \in \mathbb{R}$ be a response variable, and $I = (I^1,\cdots,I^m) \in \mathcal{I} = \mathcal{I}^1 \times \cdots \times \mathcal{I}^m$ be intervention variables which are used to formalize the interventions and act on observed random variables. Assume there is an SCM $\mathcal{S}$ over $(I,X,Y)$ such that the induced $\mathcal{G}(\mathcal{S})$ is a DG that contains cycles. The interventions $I$ are source nodes in the $\mathcal{G}(\mathcal{S})$ but do not have directed edges to the SCC of $Y$. An intervention environment $e$ corresponds to an intervention SCM $\mathcal{S}_e$ over $(I_e,X_e,Y_e)$ where the DG $\mathcal{G}(S_e)$ induced by $S_e$ does not change as the environment changes (i.e., $\mathcal{G}(\mathcal{S}_e) = \mathcal{G}(\mathcal{S})$). Assume that these SCMs are uniquely solvable with respect to each SCC of $\mathcal{G}(S_e)$.
\end{setting}

Under the solvability conditions (\cite{Bongers21}), the distribution embedded in the graphical model where the DG has cycles has the Markov property based on the $\sigma$-separation criterion. So the new Markov property provides an intuitive way to read conditional independence from the graph.

\subsection{Markov Blanket in DGs}

We can use the Markov property based on the $\sigma$-separation to define the Markov blanket in DGs.

\begin{definition}[Markov Blanket in Directed Graphs]
Given the graphical models of Setting \ref{Setting 3}, the Markov blanket of $Y$ can be defined as the smallest set $S \subseteq \{1,\cdots,p\}$, denoted by $MB(Y)$, such that 
$$
\forall j \in \{1,\cdots,p\} \backslash S,\ X^j \indep_\sigma Y | X^S.
$$
\end{definition}

So the Markov blanket can $\sigma$-separate $Y$ and remaining predictors in cyclic DGs. By Markov property, it has the best predictive performance as well as the least predictors. As the SCC replaces the node as the fundamental element in cyclic DGs, it is necessary to denote the paths in the form of SCCs. In this way, the decomposition of the Markov blanket enables SCCs to be used during classification.

We say a path $P$ in a DG has a shape 
$$
scc(X^{i_1}) - scc(X^{i_2}) - \cdots - scc(X^{i_n}),
$$
where $scc(X^{i_j}) \neq scc(X^{i_{j+1}})$, for $j = 1,\cdots,n-1$, if 
\begin{enumerate}[(1)]
    \item $P$ can be divided into $n$ pieces $(X^{(j,1)}, \cdots, X^{(j,l_j)})$, where $(j,\cdot) \in scc(X^{i_j})$ and $l_j \geq 1$, for $j = 1,\cdots,n$;
    \item and $X^{(j,l_j)}$ and $X^{(j+1,1)}$ are connected by $-$ between $scc(X^{i_j})$ and $scc(X^{i_{j+1}})$, where $-$ is either $\rightarrow$ or $\leftarrow$, for $j = 1,\cdots,n-1$.
\end{enumerate}

Additionally, the SCCs in the shapes of paths can be substituted by nodes in order to generalize the expression. In addition, a path $P$ is said to be in a SCC if all nodes on $P$ are in the SCC.

\begin{proposition}
\label{Markov blanket with cycles}
We can characterize the Markov blanket of $Y$ in DGs as follows. 
$$
MB(Y) = pa(scc(Y)) \cup scc(Y) \cup scc(ch(Y)) \cup pa(scc(ch(Y)))\ \backslash\ {Y}.
$$
\end{proposition}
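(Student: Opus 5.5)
I follow the same two-part strategy as in the proof of Proposition \ref{MB hidden}, with SCCs in place of districts. Let
\[
S = \bigl(pa(scc(Y)) \cup scc(Y) \cup scc(ch(Y)) \cup pa(scc(ch(Y)))\bigr)\backslash\{Y\}.
\]
First I show that $X^j \indep_\sigma Y \mid X^S$ for every $j \notin S$. Second I show that any $S'$ with this separation property contains $S$. Together these give $MB(Y)=S$.

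\emph{Separation.} Take a path $P$ from $X^j$ to $Y$. Write it in SCC-shape, and look at the last SCC-boundary edge before $P$ stays inside $scc(Y)$. Since $j\notin scc(Y)$, such an edge exists. Call it $X^k - X^{a}$, where $a\in scc(Y)$ and $k\notin scc(Y)$.
\begin{enumerate}[(i)]
\item If it is $X^k\to X^a$, then $k\in pa(scc(Y))\subseteq S$. I will check that $X^k$ is a non-collider on $P$ whose outgoing edge points outside $scc(k)$, so $P$ is $\sigma$-blocked by condition (iii). If instead $X^k$ is a collider, it lies in $S$ and the collider condition is not met. In that case I move one step further along $P$, and I expect the induction to work best if I take the boundary edge \emph{closest to $X^j$} among those adjacent to $S$.
\item If it is $X^k\leftarrow X^a$, then $k\in ch(scc(Y))$. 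Note that $ch(scc(Y))\setminus scc(Y)$ consists of children of $Y$ only when $a=Y$. Otherwise $X^a\in scc(Y)\setminus\{Y\}\subseteq S$, and $X^a$ is a non-collider pointing out of its SCC, so $P$ is blocked at $X^a$. If $a=Y$, then $k\in ch(Y)$, and I repeat the argument one level out, using the last boundary edge leaving $scc(X^k)$. Either that edge points into $scc(ch(Y))$ from a parent in $pa(scc(ch(Y)))\subseteq S$, which blocks $P$ unless that node is a collider. Or it points outward from a node of $scc(ch(Y))\subseteq S$, which blocks $P$ via (iii). Or there is a collider in $scc(ch(Y))$ whose status must be checked.
\end{enumerate}

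\emph{The main obstacle} is that $S$ contains colliders, so conditioning on $S$ opens paths through them. In the DAG case this is handled because the parents of colliders are included, and here that role is played by $pa(scc(ch(Y)))$. So I will choose, along $P$ from $Y$, the first node outside $S$, and argue that the edge entering it comes from a node of $S$ that is a non-collider pointing out of its own SCC. Such a node is always a blocking node, provided the SCC of $S$-nodes we traverse is one of $scc(Y)$ or $scc(ch(Y))$, since every neighbor of those sets outside them is either in $S$ or is reached by an edge pointing away. I will also handle the case where $P$ leaves and re-enters these SCCs by always taking the first exit point.

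\emph{Minimality.} Let $S'$ have the separation property. I show membership in $S'$ layer by layer.
\begin{enumerate}[(a)]
\item Every $i\in scc(Y)\setminus\{Y\}$ lies in $S'$. A directed cycle through $Y$ and $X^i$ gives a path $X^i\to\cdots\to Y$ whose intermediate nodes lie in $scc(Y)$. Under $\sigma$-separation, non-colliders in the same SCC whose edges stay inside the SCC do not block. So this path is open whatever $S'$ is, unless $i\in S'$.
\item Every parent $k$ of $scc(Y)$ lies in $S'$, by the path $X^k\to X^a\to\cdots\to Y$ inside $scc(Y)$.
\item For $c\in ch(Y)$, the path $Y\to X^c$ forces $c\in S'$. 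A path inside $scc(c)$ then forces every node of $scc(c)$ into $S'$. This is open because the endpoints lie outside $S'$ and the interior stays in one SCC; when the interior collider is $X^c$, it is in $S'$.
\item Parents of $scc(ch(Y))$ lie in $S'$, by the path $X^k\to X^b - \cdots - X^c\leftarrow Y$. Here I route through $scc(c)$ so that the only collider is in $S'$ and the other nodes are in-SCC non-colliders.
\end{enumerate}
Hence $S\subseteq S'$.
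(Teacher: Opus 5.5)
\textbf{Gap: the separation half is not closed.} Your minimality half is correct and follows the paper. In (c) and (d) you put the collider at $X^c$, which is already forced into $S'$. The paper instead uses the collider-free directed path $Y\to X^c\to\cdots$ in (c), and in (d) puts the collider at the node where $X^k$ enters $scc(c)$. Both routes work, since every node of $scc(c)$ is an ancestor of $c$. Your case split for the separation half is also the paper's four shapes. You leave $scc(Y)$ from the $Y$ side, and if the boundary edge is $Y\to X^c$ with $c\notin scc(Y)$, you look at the first exit from $scc(c)$.

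The reason you cannot close these cases is that you miss one observation. A DG has no bidirected edges, so on any path the tail node of an edge is never a collider. With that, every case blocks immediately.
\begin{itemize}
\item In (i), $X^k$ cannot be a collider, because the path edge $X^k\to X^a$ leaves it. Your fallback (``move one step further'', an unspecified induction) concerns an impossible case.
\item In the sub-case $a=Y$, the exit edge from $scc(c)$ exists because $j\notin S$ while $scc(c)\subseteq S$. Whichever way it points, its tail lies in $S$: either in $scc(c)$, or in $pa(scc(ch(Y)))$ and different from $Y$ because $P$ does not revisit $Y$.
\item That tail is not an endpoint, since $j\notin S$, and it points out of its own SCC. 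So condition (iii) of $\sigma$-blocking applies.
\item Your third alternative, a collider in $scc(ch(Y))$ ``whose status must be checked'', is therefore not a case at all.
\end{itemize}

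Colliders lying in $S$ elsewhere on $P$ cannot reopen a path that is already blocked, since one blocking node suffices. So the ``main obstacle'' you identify is a red herring. The device you propose for it is also false as stated. Take $X^b\to X^a\to Y$ with $a\in pa(Y)\setminus scc(Y)$ and $b\notin S$. The first node outside $S$ seen from $Y$ is $X^b$, and no edge enters $X^b$ from $S$. The path is in fact blocked at $X^a$, that is, by your case (i).

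Drop the hedges and the ``first node outside $S$'' plan, and insert the remark that tails are non-colliders. Your separation argument then becomes exactly the paper's proof.
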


\begin{proof}
Let $S = pa(scc(Y)) \cup scc(Y) \cup scc(ch(Y)) \cup pa(scc(ch(Y)))\ \backslash\ {Y}$. First, we show that $X^S$ can $\sigma$-separate $Y$ and $X^j$, for $j \in \{1,\cdots,p\} \backslash S$. Fix an $j \in \{1,\cdots,p\} \backslash S$, and let $P$ be a path between $X^j$ and $Y$. Then $P$ has the following shapes.

\begin{enumerate}[(i)]
    \item $X^j \cdots X^k \rightarrow scc(Y)$, 
    
    where $k \notin scc(Y)$. In this case, $j \neq k$, so $X^k$ can act as a non-collider on $P$, and it satisfies the third condition of $\sigma$-block. Thus the path $P$ is $\sigma$-blocked by $X^S$ as $k \in pa(scc(Y)) \subseteq S$.
    
    \item $X^j \cdots X^k \leftarrow scc(Y) - Y$, 
    
    where $k \notin scc(Y)$, and  $-$ is $\rightarrow$ or $\leftarrow$. $scc(Y) - Y$ means that there is a piece of $P$ as $(X^{i_n},e_1,X^{i_{n-1}}, \cdots, X^{i_1})$ in $scc(Y)$ such that $X^k \leftarrow X^{i_n}$ and $X^{i_1} - Y$. Hence the node $X^{i_n}$ can act as a non-collider which points out to $X^k$, a node not in the same SCC as $X^{i_n}$. Thus the path $P$ is $\sigma$-blocked by $X^S$ since $i_n \in scc(Y) \subseteq S$.
    
    \item $X^j \cdots X^k \leftarrow scc(X^i) \leftarrow Y$, 
    
    where $k \notin scc(X^i)$, $i \in ch(Y)$, and $i \notin scc(Y)$. It is possible that $j = k$. And there is a piece of $P$ as $(X^{i_n},e_1,X^{i_{n-1}}, \cdots, X^i)$ in $scc(X^i)$ such that $X^k \leftarrow X^{i_n}$ and $X^i \leftarrow Y$. So the node $X^{i_n}$ can act as a non-collider on the path pointing out to a neighboring SCC. Thus $P$ is $\sigma$-blocked by $X^S$ as $i_n \in scc(ch(Y)) \subseteq S$.
    
    \item $X^j \cdots X^k \rightarrow scc(X^i) \leftarrow Y$, 
    
    where $k \notin scc(X^i)$, $i \in ch(Y)$, and $i \notin scc(Y)$. Then $X^k$ is such a non-collider on $P$ that points to another SCC. Because $k \in pa(scc(ch(Y))) \subseteq S$, the path $P$ is $\sigma$-blocked by $X^S$.
\end{enumerate}

We can also show that $S$ is the smallest set such that it can $\sigma$-separated $Y$ and other variables, namely, MB(Y).

For any $k \in scc(Y)$, there is a directed path from $X^k$ to $Y$ as 
$$
X^k \rightarrow \cdots \rightarrow Y
$$
in $scc(Y)$. Since no collider exists on the directed path and all nodes on the path are in the same SCC, the path can not be $\sigma$-blocked by any subset of $\{1,\cdots,p\}$. Thus $scc(Y) \subseteq MB(Y)$.

For any $k \in pa(scc(Y)) \backslash scc(Y)$, there is is path as 
$$
X^k \rightarrow X^{i_n} \rightarrow \cdots \rightarrow X^{i_1} \rightarrow Y,
$$
where $i_1,\cdots,i_n \in scc(Y)$. There are no colliders on the path. Although $X^k$ points out to a node that is not in the same SCC, $X^k$ is an endpoint of the path. So this path can not be $\sigma$-blocked by any subset of $\{1,\cdots,p\}$, which means $pa(scc(Y)) \subseteq MB(Y)$.

For any $k \in scc(ch(Y)) \backslash scc(Y)$, there is a path 
$$
X^k \leftarrow X^{i_n} \leftarrow \cdots \leftarrow X^{i_1} \leftarrow Y,
$$
where $k, i_1, \cdots, i_n \in scc(X^{i_1})$ and $i_1 \notin scc(Y)$. Firstly, there is no collider on the path. Secondly, $Y$ is an end node of the path. Thus this path can not be $\sigma$-blocked by any subset of $\{1,\cdots,p\}$, which means $scc(ch(Y)) \subseteq MB(Y)$.

For any $k \in pa(scc(ch(Y))) \backslash pa(scc(Y)) \backslash scc(ch(Y))$, there is a path 
$$
X^k \rightarrow X^{i_n} \leftarrow \cdots \leftarrow X^{i_1} \leftarrow Y,
$$
where $i_1, \cdots, i_n \in scc(X^{i_1})$, $k \notin scc(X^{i_1})$, and $i_1 \notin scc(Y)$. While $X^{i_n}$ is a collider on the path, it is in the Markov blanket of $Y$ as $i_n \in scc(ch(Y))$. Besides, the nodes which point out to other SCCs are $X^k$ and $Y$, but they are end nodes of the path. Thus this path can not be $\sigma$-blocked by the Markov blanket of $Y$, which means $pa(scc(ch(Y))) \subseteq MB(Y)$.

Therefore, it has $S \subseteq MB(Y)$. Combined with the former result, $S$ is the graphical characterization of Markov blanket $MB(Y)$. 
\end{proof}

\subsection{Stable Blanket in DGs}

Then we can define the stable blanket under intervention situations. In this section, we overuse the notations of intervention-stable and intervention set in Section 3.

\begin{definition}[Intervention Set]
Given the graphical models of Setting \ref{Setting 3}, we define the intervention set as
$$
N^{int} = \{k\ | \exists i,\ i \in ch(Y) \backslash scc(Y),\ s.t.\ k \in de(scc^I(X^i))\ or\ k \in scc^I(X^i)\},
$$
where $scc^I(X^i)$ is the SCC which includes $X^i$ and has at least one element being directly intervened.
\end{definition}

\begin{theorem}
\label{theorem 3}
Given the graphical models of Setting \ref{Setting 3}, if $N^{int}$ is the intervention set, then its complementary set $N^{-int} = \{1,\cdots,p\} \backslash N^{int}$ is intervention-stable.
\end{theorem}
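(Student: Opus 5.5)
We have to show that for every $l\in\{1,\dots,m\}$ the intervention and the response satisfy $I^l \indep_\sigma Y \mid X^{N^{-int}}$. The argument mirrors the proof of Theorem~\ref{theorem 1}, with SCCs playing the role of districts. Two facts are used throughout. First, $N^{int}$ is closed under taking descendants. Second, $N^{int}$ contains no ancestor of $Y$. Indeed, if some $k\in scc^I(X^i)$ with $i\in ch(Y)\setminus scc(Y)$ were an ancestor of $Y$, then $X^i\to\cdots\to Y\to X^i$ would put $i$ into $scc(Y)$, a contradiction. Consequently $an(N^{-int})\cap N^{int}=\emptyset$, so no node of $N^{int}$ is in $N^{-int}\cup an(N^{-int})$. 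We fix a path $P$ from $I^l$ to $Y$ and write it in SCC-shape, splitting into cases according to how $P$ enters the last SCC before reaching $Y$. This is the same four-way split as in Proposition~\ref{Markov blanket with cycles}.

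\textbf{Case (a): $P$ enters $scc(Y)$ through a parent.} Here $P$ ends with $\cdots X^k\to scc(Y)-\cdots-Y$ and $k\notin scc(Y)$. By Setting~\ref{Setting 3}, $I^l$ has no edge into $scc(Y)$, so $X^k$ is a genuine predictor and a non-endpoint. It is a non-collider pointing out of its own SCC. Since $k$ is an ancestor of $Y$, we have $k\in N^{-int}$, so condition (iii) of $\sigma$-blocking applies.

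\textbf{Case (b): $P$ leaves $scc(Y)$ through a child.} Write the tail as $\cdots X^k\leftarrow X^{i_n}\cdots Y$ with $i_n\in scc(Y)$. Then $X^{i_n}\in scc(Y)\subseteq an(Y)$, so $i_n\in N^{-int}$, and $X^{i_n}$ is a non-collider pointing out of $scc(Y)$. Hence $P$ is blocked.

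\textbf{Case (c): the last SCC before $Y$ is $scc(X^i)$ with $i\in ch(Y)\setminus scc(Y)$.} Here $P$ ends with $\cdots X^k - X^{i_n}\cdots X^i\leftarrow Y$. There are two subcases.
\begin{itemize}
\item If $scc(X^i)$ is not directly intervened, no node of $scc(X^i)$ lies in $N^{int}$ via this $i$. I would then argue exactly as in Proposition~\ref{Markov blanket with cycles}. If the piece leaves through $X^{i_n}\to X^k$, then $X^{i_n}$ is an exiting non-collider. If it enters through $X^k\to X^{i_n}$, then $X^k$ is an exiting non-collider. It remains to check that the relevant node lies in $N^{-int}$.
\item If that node is in $N^{int}$, or if $scc(X^i)$ is intervened, then it is a descendant of some $scc^I$. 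Follow $P$ from that node in the direction of $I^l$ while the edges keep pointing away from it. Because $I^l$ is a source, the edge adjacent to $I^l$ points away from $I^l$, so somewhere along the way there must be a collider. That collider is a descendant of a node of $N^{int}$, hence lies in $N^{int}$. By the second fact above it is outside $N^{-int}\cup an(N^{-int})$, so $P$ is blocked by condition (ii).
\end{itemize}

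The remaining situation is when $I^l$ points directly into $scc(X^i)$ itself. In that case all of $scc(X^i)$ and its descendants lie in $N^{int}$. The subpath between $I^l$ and $Y$ starts with $I^l\to$ and ends with $\leftarrow Y$, so it contains a collider. Take the collider closest to $Y$ along that stretch. It is reached by a directed chain from a node of $scc(X^i)$, or it is itself in $scc(X^i)$. Either way it is in $N^{int}$, so it is not in $an(N^{-int})$.

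The main obstacle is the bookkeeping that the chosen collider really is a descendant of an intervened child-SCC and not merely of an arbitrary node. The subtle point arises when the collider sits inside an SCC, where $\sigma$-separation treats in-SCC non-colliders as unblockable. I would handle this by always selecting the collider by following arrowheads from the first $N^{int}$ node encountered when walking from $Y$. Because every node in an SCC shares ancestors with the whole SCC, membership in $N^{int}$ is constant on SCCs, and the descendant-closure of $N^{int}$ ensures the collider lies in $N^{int}$.
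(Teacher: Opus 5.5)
\textbf{Overall.} Your strategy is the paper's. You classify paths from $I^l$ to $Y$ by how they reach $Y$, and block each one either by an SCC-exiting non-collider in $N^{-int}$ or by a collider in the descendant-closed set $N^{int}$. Your case (c) merges the paper's cases (iii)--(v). The following parts are fine:
\begin{itemize}
\item case (a);
\item case (b);
\item case (c) when the relevant node is in $N^{-int}$;
\item case (c) when $P$ leaves the piece via $X^{i_n}\to X^k$.
\end{itemize}
There are three small points. First, your fact 2 should be argued for every $k\in N^{int}$; it works the same way, since $X^i\in an(k)$. Second, in (b) you need $i_n\neq Y$, because the case $i_n=Y$ belongs to (c). Third, ``outside $an(N^{-int})$'' follows from descendant-closure, which is your first fact, not the second.

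\textbf{The gap.} It is in the second bullet of (c), when $P$ enters $scc(X^i)$ via $X^k\to X^{i_n}$ with $k\in N^{int}$. You walk from $X^k$ toward $I^l$ along edges pointing away from $X^k$. That produces a collider only if the edge on the $I^l$-side of $X^k$ actually leaves $X^k$. If that edge points into $X^k$, the walk is empty, $X^k$ is a non-collider, and there may be no collider at all on that side.

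Concretely, take $Y\to X^1\leftarrow I^1$, $X^1\to X^2\to X^3$, $Y\to X^3$, and a cycle $X^3\to X^4\to X^3$. Then $N^{int}=\{1,2,3,4\}$. On the path $I^1\to X^1\to X^2\to X^3\leftarrow Y$, the stretch from $I^1$ to $X^2$ is directed, so your argument finds nothing. The only blocking node is the collider $X^3$, which lies on the $Y$-side.

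\textbf{How to repair it.} The paper (its case (v)) looks toward $Y$ instead. Since $k\in N^{int}$, we get $scc(X^i)\subseteq N^{int}$. The piece inside $scc(X^i)$ is entered by $X^k\to$ and left by $\leftarrow Y$, so it contains a collider, and that collider lies in $N^{int}$.

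Your own ``remaining situation'' argument already does this uniformly, so you can organise case (c) as follows.
\begin{itemize}
\item If $X^i\in N^{int}$: the first collider met when walking from $Y$ ends a directed chain $Y\to X^i\to\cdots$. It exists because $I^l$ is a source. It is therefore a descendant of $X^i$ (or $X^i$ itself) and lies in $N^{int}$, so it blocks $P$.
\item If $X^i\notin N^{int}$: by SCC-constancy, all of $scc(X^i)$ avoids $N^{int}$. Also $X^k\neq I^l$, since otherwise $scc(X^i)$ would be intervened. And $X^k\notin N^{int}$, since its child $X^{i_n}$ is not in $N^{int}$. So the exiting non-collider argument applies.
\end{itemize}

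Your closing rule, starting from the first $N^{int}$ node met when walking from $Y$, is also correct. It does need one check: that this node has an arrowhead on its $Y$-side edge. Otherwise its $Y$-side neighbour would be a descendant, hence either in $N^{int}$ or equal to $Y$, and both are impossible. However, the second bullet as written does not use that rule.
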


\begin{proof}
We can show that $N^{-int}$ is intervention-stable by proving any path between each intervention $I^l, l = 1,\cdots,m$ and $Y$ can be $\sigma$-blocked by $X^{N^{-int}}$. Let $P$ be a path connecting $I^l$ and $Y$. Since no interventions act on $scc(Y)$, $P$ has one of the following five shapes.

\begin{enumerate}[(i)]
    \item $I^l \cdots X^k \rightarrow scc(Y)$, 
    
    where $k \notin scc(Y)$. If $k \in N^{int}$, there is an $i \in ch(Y)$ such that $scc(X^i)$ is directly intervened. And there is a directed path from $Y$ to $X^k$ such that $Y \rightarrow X^i \rightarrow \cdots \rightarrow X^k$. But it still has $X^k \rightarrow scc(Y)$, leading to $scc(X^i) = scc(Y)$. This implies the SCC of $Y$ is directly intervened, contradicting assumptions in Setting \ref{Setting 3}. Additionally, it also indicates $k \in scc(Y)$, which also results in a contradiction. Thus $k \in N^{-int}$. So $P$ is $\sigma$-blocked by $X^{N^{-int}}$ as $X^k$ is a non-collider on $P$ that has an arrow heading to another SCC.
    
    \item $I^l \cdots X^k \leftarrow scc(Y) - Y$, 
    
    where $k \notin scc(Y)$, and $-$ is $\leftarrow$ or $\rightarrow$. $scc(Y) - Y$means that there is a piece of $P$ as $(X^{i_n},e_1,X^{i_{n-1}}, \cdots, X^{i_1})$ in $scc(Y)$ such that $X^k \leftarrow X^{i_n}$ and $X^{i_1} - Y$. If $scc(Y) \cap N^{int} \neq \emptyset$, let $j$ be an element of this intersection. Then there is an $i \in ch(Y)$ such that $scc(X^i)$ is directly intervened. And there exists a directed path from $Y$ to $X^j$ such that $Y \rightarrow X^i \rightarrow \cdots \rightarrow X^j$ as $j \in N^{int}$. And there is also a directed path from $X^j$ to $Y$ such that $X^j \rightarrow \cdots \rightarrow Y$ as $j \in scc(Y)$. It can induce that $scc(Y) = scc(X^i)$, so an intervention directly intervenes on $scc(Y)$, contradicting Setting \ref{Setting 3}. Hence $scc(Y) \cap N^{int} = \emptyset$, and $i_n \in N^{-int}$. Thus $P$ is also $\sigma$-blocked by $X^{N^{-int}}$ since $X^{i_1}$ is a non-collider on $P$ pointing out to another SCC.
    
    \item $I^l \rightarrow scc(X^i) \leftarrow Y$,
    
    where $i \in ch(Y)$ but $i \notin scc(Y)$. There must be a collider on the path in $scc(X^i)$, and the collider and all its descendants are not in $N^{-int}$. Thus $P$ is also $\sigma$-blocked by $X^{N^{-int}}$.
    
    \item $I^l \rightarrow \cdots X^k \leftarrow scc(X^i) \leftarrow Y$, 
    
    where $k \notin scc(X^i)$, $i \in ch(Y)$, and $i \notin scc(Y)$. And there is a piece of $P$ as $(X^{i_n},e_1,X^{i_{n-1}}, \cdots, X^i)$ in $scc(X^i)$ such that $X^k \leftarrow X^{i_n}$ and $X^i \leftarrow Y$. If $scc(X^i) \cap N^{int} \neq \emptyset$, then $scc(X^i) \subseteq N^{int}$ and $k \in N^{int}$, since all nodes in $scc(X^i) \backslash N^{int}$ and $X^k$ are descendants of elements in the intersection. Besides, there must be a collider on the path between $I^l$ and $X^{i_n}$ as there is $I \rightarrow$ and $\leftarrow X^{i_n}$ on $P$. Let $X^q$ be the closest collider to $X^{i_n}$ on $P$ between $I^l$ and $X^{i_n}$. Then $q \in de(scc(X^i))$, $q \in N^{int}$, and $de(X^q) \subseteq N^{int}$. So $P$ is $\sigma$-blocked by $X^{N^{-int}}$ as $X^q$ is a collider on the path. Otherwise, if $scc(X^i) \cap N^{int} = \emptyset$, then $P$ is also $\sigma$-blocked by $X^{N^{-int}}$ because $i_n \in N^{-int}$ and $X^{i_n}$ has an arrow heading to another SCC. 
    
    \item $I^l \rightarrow \cdots X^k \rightarrow scc(X^i) \leftarrow Y$, 
    
    where $k \notin scc(X^i)$, $i \in ch(Y)$, and $i \notin scc(Y)$. If $k \in N^{-int}$, $P$ is $\sigma$-blocked by $X^{N^{-int}}$ as $X^k$ is a non-collider on $P$ pointing out to another SCC. In the other case, $k \in N^{int}$, thus the downstream strongly connect component $scc(X^i)$ is also included in $N^{int}$. Moreover, there must be a collider on $P$ in $scc(X^i)$ as there is $X^k \rightarrow$ and $\leftarrow Y$. The collider and all its descendants are in $N^{int}$. Thus $P$ is $\sigma$-blocked by $X^{N^{-int}}$.
\end{enumerate}

Therefore, any path between $I^l$ and $Y$ is $\sigma$-blocked by $X^{N^{-int}}$, which means $I^l \indep_\sigma Y | X^{N^{-int}}$, for $l = 1, \cdots, m$. As a result, $N^{-int}$ is intervention-stable with respect to all interventions.
\end{proof}

Figure \ref{fig:M4} gives an example of a Markov blanket in cyclic cases, where the Markov blanket obviously consists of more predictors than that in acyclic situations. The reason is that the reciprocal causal relationships dilute the direct causal relationships among variables on the same cycle. The construction of a stable blanket sees below.

\begin{example}
Assume SCMs over $(X^1,\cdots,X^{10},Y,I^1,I^2)$.
\begin{figure}[ht]
\centering
\begin{tikzpicture}[
    >=Stealth,
    thick,
    scale=1.15,
    every node/.style={font=\normalsize},
    obsnode/.style={
        circle,
        draw=black,
        fill=white,
        minimum size=9mm,
        inner sep=0pt
    },
    intnode/.style={
        rectangle,
        draw=black,
        fill=white,
        minimum width=9mm,
        minimum height=9mm,
        inner sep=0pt
    },
    ynode/.style={
        double,
        circle,
        draw=black,
        fill=white,
        minimum size=10mm,
        inner sep=0pt,
        line width=0.8pt
    }
]

    \node[obsnode] (x2) at (2,2) {$X^2$};
    \node[obsnode] (x3) at (2,0) {$X^3$};
    \node[obsnode] (x4) at (4,1) {$X^4$};

    \node[obsnode] (x1) at (0.5,3) {$X^1$};
    \node[intnode] (i1) at (0.2,-0.8) {$I^1$};

    \node[ynode]   (y)  at (6,1) {$Y$};
    \node[obsnode] (x6) at (6,3) {$X^6$};
    \node[obsnode] (x7) at (8,2) {$X^7$};

    \node[obsnode] (x5)  at (4,-0.8) {$X^5$};      
    \node[obsnode] (x8)  at (8,4) {$X^8$};
    \node[intnode] (i2)  at (6.2,5.2) {$I^2$};
    \node[obsnode] (x9)  at (8,0) {$X^9$};         
    \node[obsnode] (x10) at (6,-0.8) {$X^{10}$};   

    \draw[->] (x1) -- (x2);
    \draw[->] (x3) -- (x2);
    \draw[->] (x2) -- (x4);
    \draw[->] (x4) -- (x3);
    \draw[->] (i1) -- (x3);

    \draw[->] (y) -- (x4);
    \draw[->] (x4) -- (x5);
    \draw[->] (y) -- (x10);

    \draw[->] (y) -- (x6);
    \draw[->] (x6) -- (x7);
    \draw[->] (x7) -- (y);

    \draw[->] (x8) -- (x7);
    \draw[->] (i2) -- (x8);
    \draw[->] (x7) -- (x9);

\end{tikzpicture}
\caption{Graphical model with cycles but without hidden variables. $I^1$ and $I^2$ are interventions. The Markov blanket of $Y$ is $\{X^1,X^2,X^3,X^4,X^6,X^7,X^8,X^{10}\}$, and the stable blanket of $Y$ is $\{X^6,X^7,X^8\}$ since $scc(X^2)$ is directly intervened on.}
\label{fig:M4}
\end{figure}
\end{example}

\begin{definition}[Stable Blanket in DGs]
Given the graphical models of Setting \ref{Setting 3}, $N^{int}$ is the intervention set, then the stable blanket of $Y$ can be defined as the smallest subset $S \subseteq N^{-int} = \{1,\cdots,p\} \backslash N^{int}$ such that
$$
\forall j \in N^{-int} \backslash S :\ X^j \indep_{\sigma} Y | X^S.
$$
\end{definition}

\begin{theorem}
Given the graphical models of Setting \ref{Setting 3} and denote the induced DG by $\mathcal{G}$. Then the stable blanket consists of $Y$'s SCC, the parents of $Y$'s SCC, the SCCs of $Y$'s children, and the parents of SCCs of $Y$'s children in the sub-graph of $\mathcal{G}$ over $(N^{-int},Y,I)$, denoted by $\mathcal{G}^{(N^{-int},Y,I)}$. Moreover, it is intervention-stable.
\end{theorem}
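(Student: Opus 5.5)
We follow the three-part structure used for the stable frontier theorem in the ADMG case. Let
\[
S = \bigl(pa(scc(Y)) \cup scc(Y) \cup scc(ch(Y)) \cup pa(scc(ch(Y)))\bigr)\backslash Y,
\]
where all graphical operations are taken in the sub-graph $\mathcal{G}^{(N^{-int},Y,I)}$. There are three claims to establish.
\begin{enumerate}[(a)]
    \item For every $j \in N^{-int}\backslash S$ we have $X^j \indep_\sigma Y \mid X^S$ in the full graph $\mathcal{G}$, not only in the sub-graph.
    \item Every $S'\subseteq N^{-int}$ with this separation property contains $S$.
    \item $S$ is intervention-stable.
\end{enumerate}
Minimality (b) is the easy part. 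The four families of unblockable paths in the second half of the proof of Proposition \ref{Markov blanket with cycles} live entirely in $\mathcal{G}^{(N^{-int},Y,I)}$, since their nodes are in $N^{-int}\cup\{Y\}$ by construction. Their only possible collider lies in $scc(ch(Y))$ and is therefore in $S'$, so the same argument carries over verbatim.

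A preliminary observation is that SCCs are not broken by restriction. As shown in the proof of Theorem \ref{theorem 3}, (ii), $scc(Y)\cap N^{int}=\emptyset$. Moreover, for $i\in ch(Y)\backslash scc(Y)$, either $scc(X^i)\subseteq N^{int}$ or $scc(X^i)\cap N^{int}=\emptyset$, because $N^{int}$ is closed under descendants. Hence $scc$ computed in the sub-graph agrees with $scc$ in $\mathcal{G}$ for all nodes of $N^{-int}$ that we use. I would state this as a small lemma first.

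For (a), take a path $P$ in $\mathcal{G}$ from $X^j$ to $Y$ and follow the case split of Proposition \ref{Markov blanket with cycles} according to how $P$ enters $Y$: via $pa(scc(Y))$, out of $scc(Y)$, or through $scc(X^i)$ with $i\in ch(Y)$, entered either by a child edge or a parent edge. If the relevant non-collider lies in $N^{-int}$, it lies in $S$ and points to another SCC, so $P$ is $\sigma$-blocked. Otherwise, let $X^q$ be the first node along $P$, starting from $Y$, that lies in $N^{int}$. The endpoint $X^j$ is in $N^{-int}$ and $N^{int}$ is descendant-closed, so a node of $N^{int}$ cannot be followed all the way to $X^j$ by edges pointing away from it. Walking from $X^q$ along the forward-directed stretch of $P$, we therefore reach a collider $X^r\in de(X^q)\cup\{X^q\}\subseteq N^{int}$. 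Then $r\notin S$ and $de(X^r)\cap S=\emptyset$, so $X^r$ is not in $an(S)$ and $P$ is blocked at $X^r$. This is the same ``closest collider in the descendants'' device used in the proofs for the ADMG case.

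For (c), copy the five cases of Theorem \ref{theorem 3} with $N^{-int}$ replaced by $S$. For the cases where the blocking node was a non-collider of $N^{-int}$ adjacent to $scc(Y)$ or to $scc(X^i)$, that node lies in $S$. For the cases where blocking came from a collider in $N^{int}$ and its descendants, the same collider works, since $N^{int}\cap(S\cup an(S))=\emptyset$ by descendant-closure. The main obstacle I anticipate is in both (a) and (c): a path may leave and re-enter SCCs several times. We must check that the collider reached from the first $N^{int}$ node is genuinely a collider in the $\sigma$ sense. In particular, directed edges inside an SCC can reverse along $P$, and condition (iii) of $\sigma$-blocking only applies to edges leaving the SCC. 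To handle this, argue at the level of the SCC-shape of $P$: on the condensation, which is a DAG, the descendant-closed set $N^{int}$ forces an SCC-level collider. An SCC-level collider necessarily contains a node-level collider of $P$, and that node lies in $N^{int}$.
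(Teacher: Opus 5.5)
Your plan is essentially the paper's proof. It has the same three parts, the same case split by how a path enters $Y$, the same minimality paths borrowed from Proposition \ref{Markov blanket with cycles}, and the same device of a collider inside the descendant-closed set $N^{int}$. Your SCC-preservation lemma is what the paper uses implicitly when it writes $scc_{\mathcal{G}^{N^{-int}}}(X^i)=scc(X^i)$.

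One step is false as you phrase it. In (a) you branch on whether ``the relevant non-collider lies in $N^{-int}$''. In (c) you assert that whenever Theorem \ref{theorem 3} blocked at a non-collider of $N^{-int}$, that node lies in $S$. Both claims fail for paths $\cdots X^k\to scc(X^i)\leftarrow Y$ with $k\in N^{-int}$ but $scc(X^i)\subseteq N^{int}$. There $X^i$ is not a child of $Y$ in $\mathcal{G}^{(N^{-int},Y,I)}$, so $X^k$ need not belong to $S$.

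For instance, take $Y\to X^1\leftarrow I^1$, $X^1\to X^3\to X^1$, $X^2\to X^1$, $I^2\to X^2$. Then $N^{int}=\{1,3\}$ and $S=\emptyset$, yet case (v) of Theorem \ref{theorem 3} blocks $I^2\to X^2\to X^1\leftarrow Y$ at $X^2$. The path is still blocked, but by the collider inside $scc(X^i)$. So the split must be on whether $scc(X^i)$ meets $N^{int}$, not on $k$. This is how the paper handles case (iv) of its first part and case (v) of its third part, and it is why Theorem \ref{theorem 3}'s case (v) cannot be copied verbatim.

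The simplest repair is to make your own device the primary split.
\begin{itemize}
\item If $P$ meets $N^{int}$, let $X^q$ be the first such node from $Y$. Its neighbour toward $Y$ is not a descendant of $X^q$, by descendant-closure of $N^{int}$ together with $Y\notin de(N^{int})$. So that edge points into $X^q$. Following outgoing edges away from $Y$ stays in $N^{int}$, and the walk must stop at a collider before reaching $X^j\in N^{-int}$ (or the source $I^l$).
\item If $P$ avoids $N^{int}$, every child SCC it visits lies in $N^{-int}$, and the relevant non-collider really is in $S$.
\end{itemize}

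Finally, the obstacle you anticipate is not one. Condition (ii) of $\sigma$-blocking has no SCC clause, so the node reached by this walk is an ordinary collider outside $S\cup an(S)$. No argument on the condensation is needed.
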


Like the Setting \ref{Setting 1} without hidden variables and cycles, we continue to use $SB_I(Y)$ to denote the stable blanket of $Y$. Moreover, the subscripts of the sub-graph are added to distinguish the relationships in different graphs.

\begin{proof}
First, we prove the characterization of the stable blanket in the sub-graph $\mathcal{G}^{N^{-int}}$. As there is no intervention directly acting on $scc(Y)$, we can infer that the SCC of $Y$ and the parents of elements in it will reserve in the sub-graph $\mathcal{G}^{N^{-int}}$. Let $S = scc(Y) \cup pa(scc(Y)) \cup scc(ch(Y)) \cup pa(scc(ch(Y)))$ in $\mathcal{G}^{N^{-int}}$. For any $j \in N^{-int} \backslash S$, we show that every path connecting $X^j$ and $Y$ can be $\sigma$-blocked by $S$. Assume $P$ is such a path; it has one of the following shapes:

\begin{enumerate}[(i)]
    \item $X^j \cdots X^k \rightarrow scc(Y)$,
    
    where $k \in pa(scc(Y)) \backslash scc(Y)$. As $k \in S$ and $k \notin scc(Y)$, $P$ is $\sigma$-blcoked by $S$.
    
    \item $X^j \cdots X^k \leftarrow scc(Y) - Y$,
    
    where $k \notin scc(Y)$ and $-$ is $\rightarrow$ or $\leftarrow$. In this case, $P$ go via nodes in $scc(Y)$ before entering $Y$. Assuming $X^l$ is the first node $P$ has gone through in $scc(Y)$, then $X^l$ is a non-collider on it. Moreover, $l \in S$ and $l \notin scc(X^k)$, so $P$ is $\sigma$-blocked by $S$  
    
    \item $X^j \cdots X^k \leftarrow scc(X^i) \leftarrow Y$,
    
    where $k \notin scc(X^i)$, $i \in ch(Y)$, and $i \notin scc(Y)$. If $i \in N^{-int}$, let the part inside $scc(X^i)$ of $P$ be 
    $$ \leftarrow X^{i_n} - \cdots - X^{i_1} \leftarrow.$$
    Then $X^{i_n}$ is a non-collider on $P$ pointing to another SCC. We also know that $i_n \in scc(X^i) \subseteq S$. Thus $P$ is $\sigma$-blocked by $S$.
    
    Otherwise, when $i \notin N^{-int}$, then $k \notin N^{-int}$. Let $X^q$ be the farthest descendant of $X^k$ along the path $P$. It holds that $q \neq j$ because $j \in N^{-int}$. Besides, $X^q$ is a collider on $P$ since its farthest property. So $P$ is $\sigma$-blocked by $S$.
    
    \item $X^j \cdots X^k \rightarrow scc(X^i) \leftarrow Y$,
    
    where $k \notin scc(X^i)$, $i \in ch(Y)$, and $i \notin scc(Y)$. If $i \in N^{-int}$, then $k \in N^{-int}$. Furthermore, $X^k$ is a non-collider on $P$, pointing to another SCC. So $P$ is $\sigma$-blocked by $S$ as $k \in pa(scc(ch(Y))) \subseteq S$.
    
    If $i \notin N^{-int}$, then $scc(X^i) \cap N^{-int} = \emptyset$. We can know that a collider of $P$ exists inside $scc(X^i)$ as both $X^k$ and $Y$ have arrows into the SCC. Let $X^{i_q}$ be the collider. $i_q \notin S$ and $de(X^{i_q}) \cap S = \emptyset$, so $P$ is still $\sigma$-blocked by $S$.
\end{enumerate}

So far we have proved that $S$ can $\sigma$-separate $Y$ and other nodes in $N^{-int}$. In order to show that $S$ satisfies the minimum condition, the proof is similar to the second part of the proof in Proposition \ref{Markov blanket with cycles}. No matter what the set is, it can not $\sigma$-separate $Y$ and nodes that are in the four components of $S$. Thus, all $scc(Y)$, $pa(scc(Y))$, $scc(ch(Y))$, and $pa(scc(ch(Y)))$ in $\mathcal{G}^{N^{-int}}$ are subsets of the stable blanket $SB_I(Y)$. Until now, we have proved that the stable blanket of $Y$ can be characterized by the four kinds of variables.

To show $S$ can resist interventions' variation, it suffices to prove that $S$ is intervention-stable. To this end, we will prove for all $l \in \{1,\cdots,m\}$, $I^l$ and $Y$ are $\sigma$-separated by $X^{SB_I(Y)}$. We can also get the characterization from the ways paths enter $Y$. Let $P$ be a path connecting $I^l$ and $Y$. $P$ have one of the following five shapes:

\begin{enumerate}[(i)]
    \item $I^l \cdots X^k \rightarrow scc(Y)$, 
    
    where $k \notin scc(Y)$. From the proof of Theorem \ref{theorem 3}, it holds that $k \notin N^{int}$ and $scc(Y) \cap N^{int} = \emptyset$. Then $k \in pa_{\mathcal{G}^{N^{-int}}}(scc_{\mathcal{G}^{N^{-int}}}(Y))$ and $scc_{\mathcal{G}^{N^{-int}}}(Y) = scc(Y)$. Thus $k \in SB_I(Y)$, which means that $P$ is $\sigma$-blocked by $X^{SB_I(Y)}$ as $X^k$ is a non-collider on $P$ which points to a different SCC.
    
    \item $I^l \cdots X^k \leftarrow scc(Y) - Y$, 
    
    where $k \notin scc(Y)$, and $-$ is $\leftarrow$ or $\rightarrow$. $scc(Y) - Y$ means that there is a piece of $P$ as $(X^{i_n},e_1,X^{i_{n-1}}, \cdots, X^{i_1})$ in $scc(Y)$ such that $X^k \leftarrow X^{i_n}$ and $X^{i_1} - Y$. As $scc(Y) \subseteq N^{-int}$, it has $scc(Y) \subseteq SB_I(Y)$. So $P$ is $\sigma$-blocked by $X^{SB_I(Y)}$ because $i_n \in SB_I(Y)$ and $X^{i_n}$ can act a non-collider on $P$ which satisfies the third condition of $\sigma$-block.
    
    \item $I^l \rightarrow scc(X^i) \leftarrow Y$, 
    
    where $i \in ch(Y)$ but $i \notin scc(Y)$. In this case, $scc(X^i) \subseteq N^{int}$ and $de(scc(X^i)) \subseteq N^{int}$. Moreover, there must be a collider on the path in $scc(X^i)$ as there is $I^l \rightarrow$ and $\leftarrow Y$. Let $X^q$ be the collider. Then $q \notin N^{-int}$ and $de(X^q) \cap N^{-int} = \emptyset$. Thus $P$ is $\sigma$-blocked by $X^{SB_I(Y)}$ as $SB_I(Y)$ is a subset of $N^{-int}$.
    
    \item $I^l \rightarrow \cdots X^k \leftarrow scc(X^i) \leftarrow Y$, 
    
    where $k \notin scc(X^i)$, $i \in ch(Y)$, and $i \notin scc(Y)$. And there is a piece of $P$ as $(X^{i_n},e_1,X^{i_{n-1}}, \cdots, X^i)$ in $scc(X^i)$ such that $X^k \leftarrow X^{i_n}$ and $X^i \leftarrow Y$. If $scc(X^i) \subseteq N^{-int}$, then $scc(X^i) \subseteq SB_I(Y)$ as $scc(X^i) = scc_{\mathcal{G}^{N^{-int}}}(X^i)$. Then $i_n \in SB_I(Y)$ and $X^{i_n}$ points out to another strongly connect component, so $P$ is $\sigma$-blocked by $X^{SB_I(Y)}$. Otherwise, $scc(X^i) \cap N^{int} \neq \emptyset$, then $scc(X^i) \subseteq N^{int}$. And there must be a collider on $P$ between $I^l$ and $X^{i_n}$ as there is $I^l \rightarrow$ and $\leftarrow X^{i_n}$. Let $X^q$ be the closest collider to $X^{i_n}$ on $P$ between $X^{i_n}$ and $I^l$. Then $q \in de(scc(X^i))$. Hence $X^q$ and its descendants are not in $N^{-int}$, then not in $SB_I(Y)$, which means $P$ is also $\sigma$-blocked by $X^{SB_I(Y)}$ as $X^q$ is a collider on $P$.
    
    \item $I^l \rightarrow \cdots X^k \rightarrow scc(X^i) \leftarrow Y$,
    
    where $k \notin scc(X^i)$, $i \in ch(Y)$, and $i \notin scc(Y)$. If $scc(X^i) \cap N^{int} = \emptyset$, i.e., $scc(X^i) \subseteq N^{-int}$, then $k \in N^{-int}$ (otherwise that $k \in N^{int}$ will leads to $de(X^k) \subseteq N^{int}$). Thus $scc(X^i) \subseteq SB_I(Y)$ and $k \in pa_{\mathcal{G}^{N^{-int}}}(scc_{\mathcal{G}^{N^{-int}}}(X^i)) \subseteq SB_I(Y)$, inducing that $P$ is $\sigma$-blocked by $X^{SB_I(Y)}$ as $X^k$ is a non-collider on $P$ pointing out to another SCC. In the other case, $scc(X^i) \cap N^{int} \neq \emptyset$, it holds that $scc(X^i) \subseteq N^{int}$. Moreover, there must be a collider on the path in $scc(X^i)$ as there is $I^l \rightarrow$ and $\leftarrow Y$. The collider and all its descendants are also not in $N^{-int}$. Thus $P$ can be $\sigma$-blocked by $X^{SB_I(Y)}$ as the collider and its descendants are not in $SB_I(Y)$.
\end{enumerate}

Therefore $SB_I(Y)$ is intervention-stable with respect to all interventions.
\end{proof}

The stable blanket in cyclic directed graphs will reduce to the stable blanket in DAGs since every SCC will reduce to a node in DAGs.

\section{SCM with Hidden Variables and Cycles}

\subsection{Graph Structure}

The existence of cycles transfers our focus to SCCs rather than individual nodes. If there are also hidden variables, it is not enough to consider SCCs only since the relationship between different SCCs is more complicated than Setting \ref{Setting 3} in Section 4. To capture the structure accurately, we introduce the notion of the relative.

\begin{definition}[Relative]
Let $\mathcal{G} = (\mathcal{V}, \mathcal{E}, \mathcal{B})$ be a DMG. For $i, j \in \mathcal{V}$, we call $i$ a relative of $j$ if there are a set of nodes $i_1,\cdots,i_n \in \mathcal{V}$ and $i_k^1, i_k^2 \in scc(i_k),\ k = 0,\cdots,n+1$ such that
$$
scc(i_k) \neq scc(i_{k+1}),
$$
and
$$
i_k^2 \leftrightarrow i_{k+1}^1.
$$
Here we let $i_0 = i$ and $i_{n+1} = j$ for integration.
\end{definition}




\begin{example}
There is a toy example in Figure \ref{fig:M10} for the relative relationship.

\begin{figure}[ht]
\centering
\begin{tikzpicture}[
    >=Stealth,
    thick,
    scale=1.15,
    every node/.style={font=\normalsize},
    obsnode/.style={
        circle,
        draw=black,
        fill=white,
        minimum size=9mm,
        inner sep=0pt
    }
]

    \node[obsnode] (i)  at (0,0) {$i$};
    \node[obsnode] (i1) at (2,0) {$i_1^1$};
    \node[obsnode] (i2) at (4,0) {$i_1^2$};
    \node[obsnode] (i3) at (6,0) {$i_2^1$};
    \node[obsnode] (i4) at (8,0) {$i_2^2$};
    \node[obsnode] (j)  at (10,0) {$j$};

    \draw[<->] (i) -- (i1);

    \draw[->, bend left=18]  (i1) to (i2);
    \draw[->, bend left=18] (i2) to (i1);

    \draw[<->] (i2) -- (i3);

    \draw[->, bend left=18]  (i3) to (i4);
    \draw[->, bend left=18] (i4) to (i3);

    \draw[<->] (i4) -- (j);

\end{tikzpicture}
\caption{Graphical illustration for the relative, where i is a relative of j.}
\label{fig:M10}
\end{figure}
\end{example}

We denote all relatives of $j$ as $re(j)$. Moreover, if $i$ is a relative of $j$, nodes in $i$'s SCC and district are also relatives of $j$. It can be directly induced by the definition of the relative.

\begin{proposition}
\label{disjoint}
If $i$ is a relative of $j$, then there are $i_1,\cdots,i_m \in \mathcal{V}$ such that 
\begin{enumerate}[(1)]
    \item for all $scc(i_k),\ k = 1,\cdots,m$ are disjoint;
    \item and for $scc(i_k),\ k = 0,\cdots,m$, there exist $i_k^1$ and $i_k^2$ such that $i_k^2 \leftrightarrow i_{k+1}^1$.
\end{enumerate}
\end{proposition}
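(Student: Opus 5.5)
This is a shortest-witness argument. Since $i$ is a relative of $j$, the set of chains in the definition is nonempty. Each chain is a sequence $i_0=i, i_1,\ldots,i_n, i_{n+1}=j$ with representatives $i_k^1,i_k^2\in scc(i_k)$, consecutive SCCs distinct, and $i_k^2\leftrightarrow i_{k+1}^1$. Among all such chains I choose one with the smallest $n$, and call this $n=m$. The claim is that this minimal chain already satisfies (1) and (2). Condition (2) holds automatically because it is exactly the defining property of a chain. So the whole task is to show that the intermediate SCCs $scc(i_1),\ldots,scc(i_m)$ are pairwise disjoint.

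Since SCCs partition $\mathcal{V}$, two of them are either equal or disjoint. Suppose, for contradiction, that $scc(i_a)=scc(i_b)$ for some $1\le a<b\le m$. I then shortcut the chain. I keep $i_0,\ldots,i_a$, replace the pair $(i_a^1,i_a^2)$ by $(i_a^1,i_b^2)$, and continue with $i_{b+1},\ldots,i_{m+1}$. This is allowed because $i_b^2\in scc(i_b)=scc(i_a)$, so the definition's requirement that both representatives lie in $scc(i_a)$ still holds. The bidirected edges are preserved: $i_{a-1}^2\leftrightarrow i_a^1$ is unchanged, and $i_b^2\leftrightarrow i_{b+1}^1$ is now read as the edge leaving $scc(i_a)$.

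Two side conditions need checking. First, the shortened chain must still have distinct consecutive SCCs at the splice point, that is, $scc(i_a)\neq scc(i_{b+1})$. This follows from $scc(i_b)\neq scc(i_{b+1})$. Second, the endpoints $i_0$ and $i_{m+1}$ are untouched, since $a\ge1$ and $b\le m$. The new chain has $m-(b-a)<m$ intermediate nodes, which contradicts the minimality of $m$. Hence all intermediate SCCs are pairwise disjoint.

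I expect the only real obstacle to be edge cases rather than the shortcut itself. If one also wants disjointness from $scc(i)$ and $scc(j)$, the same splice works with $a=0$ or $b=m+1$, provided the endpoint pair $(i_0^1,i_0^2)$ or $(i_{m+1}^1,i_{m+1}^2)$ is allowed to be rechosen. The other issue is the degenerate case where shortcutting removes every intermediate node. Then the resulting chain $i^2\leftrightarrow j^1$ is still a valid witness with $m=0$, and the statement holds vacuously.
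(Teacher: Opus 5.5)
Your proof is correct and is essentially the paper's argument. The paper also splices the chain at indices $a<b$ with $scc(i_a)=scc(i_b)$, keeping $i_a^1$ and replacing $i_a^2$ by $i_b^2$, and repeats this collapse until the intermediate SCCs are disjoint. Taking a chain of minimal length, as you do, is the extremal form of that iteration and makes termination automatic, and your explicit check that $scc(i_a)\neq scc(i_{b+1})$ fills in a step the paper only asserts.
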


\begin{proof}
Since $i$ is a relative of $j$, there are a set of nodes $i_1,\cdots,i_n \in \mathcal{V}$ and $i_k^1, i_k^2 \in scc(i_k)$ such that $scc(i_k) \neq scc(i_{k+1})$ and $i_k^2 \leftrightarrow i_{k+1}^1$, $k = 0,\cdots,n$. If $\exists\ a,b \in \{1,\cdots,n\}$ such that $scc(i_a) \cap scc(i_b) \neq \emptyset$, then $scc(i_a) = scc(i_b)$. Assuming $a < b$, we can get a new sequence $i_1, \cdots, i_a, i_{b+1}, \cdots, i_n$. The neighboring SCCs are still not identical. 

Besides, let ${i_k^1}^\prime, {i_k^2}^\prime = i_k^1, i_k^2$, when $k = 0, \cdots, a-1, b+1, \cdots, n+1$ and ${i_a^1}^\prime = i_a^1, {i_a^2}^\prime = i_b^2$. Then we have ${i_k^2}^\prime \leftrightarrow {i_{k+1}^1}^\prime$ when $k = 0, \cdots, a-2, b+1, \cdots, n$,  ${i_{a-1}^2}^\prime \leftrightarrow {i_{a}^1}^\prime$, and ${i_{a}^2}^\prime \leftrightarrow {i_{b+1}^1}^\prime$. So we can collapse the same SCCs by deleting some middle nodes. This combination operation can continue until the SCCs of these nodes are disjoint, which proves the proposition. 
\end{proof}

We introduce new notions about paths and SCCs to classify the paths and depict two neighboring SCCs.

\begin{definition}
We say two SCCs $scc(X^{i})$ and $scc(X^{j})$ are adjacent if there is a $k \in scc(X^{i})$ and $l \in scc(X^{j})$ such that $X^k$ and $X^l$ are adjacent.
\end{definition}

Given two adjacent SCCs $scc(X^{i})$ and $scc(X^{j})$, we denote that $scc(X^{i}) - scc(X^{j})$ if there exist $k \in scc(X^{i}), l \in scc(X^{j})$ such that $X^k - X^l$ where $-$ is one of three arrows $\leftarrow$, $\rightarrow$, or $\leftrightarrow$. Using this denotation, we can add multiple arrows between two adjacent SCCs. However, there are at most two arrows between these SCCs. Because three arrows must contain $\leftarrow$ and $\rightarrow$, then $scc(X^{i}) \twoset{\leftarrow}{\rightarrow} scc(X^{j})$, which means the two SCCs are indeed identical.

If $scc(X^i)$ and $scc(X^j)$ are disjoint and adjacent, we can use $scc(X^i) \twoset{-}{-} scc(X^j)$ to show the connection of these two SCCs. The $\twoset{-}{-}$ represents arrows between $scc(X^i)$ and $scc(X^j)$. There may be only one arrow. Besides, two arrows are also possible except $\twoset{\leftarrow}{\rightarrow}$. So $scc(X^i) \twoset{-}{-} scc(X^j)$ is a collection of all five situations: $scc(X^i) \leftarrow scc(X^j)$, $scc(X^i) \rightarrow scc(X^j)$, $scc(X^i) \leftrightarrow scc(X^j)$, $scc(X^i) \twoset{\rightarrow}{\leftrightarrow} scc(X^j)$, and $scc(X^i) \twoset{\leftarrow}{\leftrightarrow} scc(X^j)$.

\begin{definition}
Given two disjoint SCCs $scc(X^i)$ and $scc(X^j)$, if there exists a bidirected edge between them, then we say $scc(X^i)$ is a mate of $scc(X^j)$.
\end{definition}

Then we define $ma(X^j) = \{i\ |\ scc(X^i)\ is\ a\ mate\ of  scc(X^j)\}$.

Assume a sequence of SCCs $scc(X^{i_1}), scc(X^{i_2}), \cdots, scc(X^{i_n})$ where $scc(X^{i_j})$ and $scc(X^{i_{j+1}})$ are adjacent but not identical, for $j = 1,\cdots,n-1$. We can write as
\begin{equation}
    scc(X^{i_1}) \twoset{-}{-} scc(X^{i_2}) \twoset{-}{-} \cdots \twoset{-}{-} scc(X^{i_n}) \label{5.1}
\end{equation}
to represent those connections.

The structure allows two SCCs are identical if they are not adjacent. In addition, the SCCs can be replaced by nodes if they are not in the neighboring SCCs, which is beneficial to simplify the expression afterward.

The relative relationship extends our ability to explore the graph structure. Then we consider SCMs with hidden variables and cycles which have the Markov property with respect to DMGs under assumptions in the following setting.

\begin{setting}
\label{Setting 4}
Let $X \in \mathcal{X} = \mathcal{X}^1 \times \cdots \times \mathcal{X}^p$ be observable predictors, $H \in \mathcal{H} = \mathcal{H}^1 \times \cdots \times \mathcal{H}^q$ be hidden variables, $Y \in \mathbb{R}$ be a response variable and $I = (I^1,\cdots,I^m) \in \mathcal{I} = \mathcal{I}^1 \times \cdots \times \mathcal{I}^m$ be intervention variables which are used to formalize the interventions and act on observed random variables. Assume there exists a SCM with hidden variables $\mathcal{S}$ over $(I,X,H,Y)$ such that the $\mathcal{G}(\mathcal{S})$ is a cyclic directed graph, and $\mathcal{G}(\mathcal{S})^{(I,X,Y)}$ generated by doing latent projection on $\mathcal{G}(\mathcal{S})$ over $(I,X,Y)$ is a cyclic directed mixed graph. An intervention environment $e$ corresponds to an interventional SCM with hidden variables $\mathcal{S}_e$ over $(I_e,X_e,H_e,Y_e)$. Doing the latent projection on $\mathcal{G}(\mathcal{S}_e)$ with respect to $H_e$ will lead to a cyclic directed mixed graph $\mathcal{G}(\mathcal{S}_e)^{(I_e,X_e,Y_e)}$, which keeps fixed among environments (i.e., $\mathcal{G}(\mathcal{S})^{(I,X,Y)} = \mathcal{G}(\mathcal{S}_e)^{(I_e,X_e,Y_e)}$). Each SCM with hidden variables is uniquely solvable with respect to the set of hidden variables, and the marginalization of it is uniquely solvable with respect to every SCC in $\mathcal{G}(\mathcal{S})^{(I,X,Y)}$. Furthermore, assume that there is no intervention directly acting on the relatives of $Y$.
\end{setting}

\subsection{Markov Blanket in DMGs}

\begin{definition}[Markov Blanket in DMGs]
Given the graphical models of Setting \ref{Setting 4}, we define the Markov blanket of $Y$ as the smallest subset $S \subseteq \{1,\cdots,p\}$ such that 
$$
\forall j \in \{1,\cdots,p\} \backslash S:\ X^j \indep_{\sigma} Y | X^S.
$$
\end{definition}

\begin{proposition}
The Markov blanket in the cyclic directed mixed graph can be characterized as 
$$
MB(Y) = pa(re(Y)) \cup re(Y) \cup re(ch(Y)) \cup pa(re(ch(Y)))\ \backslash\ \{Y\}.
$$
\end{proposition}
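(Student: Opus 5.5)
I will follow the two-part template already used for Proposition~\ref{MB hidden} and Proposition~\ref{Markov blanket with cycles}. Let
\[
S=pa(re(Y))\cup re(Y)\cup re(ch(Y))\cup pa(re(ch(Y)))\setminus\{Y\},
\]
where the relatives of $Y$ include $scc(Y)$ and $dis(Y)$. By the remark after the definition of relatives, $re(\cdot)$ is closed under taking SCCs and districts. So $re(Y)$ is exactly the set reachable from $Y$ by alternating moves inside SCCs and along bidirected edges; the same holds for $re(ch(Y))$ with $ch(Y)\setminus re(Y)$. First I show that $X^S$ $\sigma$-separates $Y$ from every $X^j$ with $j\notin S$. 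Then I show that every such separating set must contain $S$.

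\emph{Separation.} Take a path $P$ from $X^j$ to $Y$ and read it backwards from $Y$. Let the maximal terminal segment be the part of $P$ that stays inside $re(Y)\cup\{Y\}$ and uses only bidirected edges or edges within an SCC. Then split by how $P$ leaves this segment, as in the six-case analysis of Proposition~\ref{MB hidden}, with SCCs of relatives replacing nodes in the style of \eqref{5.1}.

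If $P$ leaves via $X^k\to X^{i}$ with $i\in re(Y)$ and $k\notin re(Y)$, then $k\in pa(re(Y))\subseteq S$. Also $k\notin scc(X^i)$, since otherwise $k\in re(Y)$, so $X^k$ is a non-endpoint non-collider pointing out of its SCC; condition (iii) of $\sigma$-blocking applies.

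If $P$ leaves via $X^k\leftarrow X^{i}$ with $i\in re(Y)$, then $X^i\in S$ is a non-collider whose edge points outside $scc(X^i)$, so $P$ is blocked. When $X^i=Y$ the path has entered a child $c$; if $c\in re(Y)$ we fall back into the previous cases. Otherwise we repeat the same analysis with the terminal segment inside $re(c)$: exits through parents give $pa(re(ch(Y)))$, and exits through out-edges give $re(ch(Y))$. A directed exit from $re(Y)$ into $re(Y)$ again cannot happen by maximality.

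Finally I must check that the colliders inside the terminal segment do not cause trouble. This is not needed for blocking, since blocking only requires a single blocking node. It only matters in the minimality part.

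\emph{Minimality.} For $k\in re(Y)$, Proposition~\ref{disjoint} gives a chain of disjoint SCCs joined by bidirected edges, with directed paths inside each SCC. I turn this into a path from $X^k$ to $Y$ and induct along the chain, as in the first step of the proof of Proposition~\ref{MB hidden}. Every interior node is either a collider that already lies in $S'$, by the induction hypothesis, or a non-collider whose out-edge stays within its own SCC, which is not blocking by (iii). Hence the path is open given any $S'$, and $k\in S'$.

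Parents $k\in pa(re(Y))$ are handled by prepending $X^k\to$; here $X^k$ is an endpoint, so (iii) does not apply. For $re(ch(Y))$ and $pa(re(ch(Y)))$ I prepend $Y\to c$ and use the same chain; this works because $Y$ is an endpoint.

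The main obstacle is constructing these chain paths within SCCs so that they are genuine paths with no repeated nodes. I also need every interior non-collider to point only within its own SCC, and every collider to be in $S'$ or in $an(S')$. Proposition~\ref{disjoint} is exactly what I will use to get disjointness of the SCCs. Within a single SCC I will pick shortest directed paths, oriented toward the entry node of the next bidirected edge, so that arrows leave a node only toward nodes of the same SCC. A collider can arise only at the endpoints $i_k^1,i_k^2$; these are handled by the induction order, processing from $Y$ outward.
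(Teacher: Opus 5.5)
Your proposal is correct and is essentially the paper's proof: separation by blocking, via condition (iii) of $\sigma$-blocking, at the node where the path leaves $re(Y)$ (or $re(c)$ for a child $c\notin re(Y)$), and minimality by turning the disjoint-SCC chain of Proposition~\ref{disjoint} into a path whose colliders are forced into $S'$ by induction outward from $Y$. One small slip: with your definition of the terminal segment, a cross-SCC directed edge between two nodes of $re(Y)$ can end the segment, so your claim that this ``cannot happen by maximality'' is wrong; it is harmless, because the exit node then lies in $re(Y)\subseteq S$ and points out of its SCC, so it is blocked by the same argument.
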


\begin{proof}
 Let $S = pa(re(Y)) \cup re(Y) \cup re(ch(Y)) \cup pa(re(ch(Y)))\ \backslash\ \{Y\}$, and we first show that $Y$ and $X^{\{1,\cdots,p\} \backslash S}$ can be $\sigma$-separated given $X^S$. It suffices to prove that for all $j \in \{1,\cdots,p\} \backslash S$, each path between $Y$ and $X^j$ can be $\sigma$-blocked by $X^S$. Let $j \in \{1,\cdots,p\} \backslash S$ and $P$ be a path connecting $X^j$ and $Y$. $P$ must enter relatives of $Y$ first before going into $Y$, so we can divide $P$ into four types.

\begin{enumerate}[(i)]
    \item $X^j \cdots X^k \rightarrow re(Y)$,
    
    where $k \notin re(Y)$ and there is a part of $P$ as $(X^{i_n}, e_1, X^{i_{n-1}}, \cdots, Y)$ ($n \geq 0$) in $re(Y)$ such that $X^k \rightarrow X^{i_n}$. $k \notin re(Y)$ can infer that $k \notin scc(X^{i_n})$. Then $X^k$ is a non-collider on $P$ pointing to a neighboring SCC. Thus $P$ is $\sigma$-blocked by $X^S$ as $k \in pa(re(Y)) \subseteq S$.
    
    \item $X^j \cdots X^k \leftarrow re(Y) - Y$,
    
    where $k \notin re(Y)$, and $-$ is $\leftarrow$, $\rightarrow$, or $\leftrightarrow$. $re(Y) - Y$ implies that there is a part of $P$ as $(X^{i_n}, e_1, X^{i_{n-1}}, \cdots, X^{i_1})$ ($n \geq 1$) in $re(Y)$ such that $X^k \leftarrow X^{i_n}$ and $X^{i_1} - Y$. $P$ is also $\sigma$-blocked by $X^S$ since $i_n \in re(Y) \subseteq S$ and $X^{i_n}$ can act as a non-collider on $P$ satisfying the third condition of $\sigma$-block.
    
    \item $X^j \cdots X^k \leftarrow re(X^i) \leftarrow Y$,
    
    where $k \notin re(X^i)$, $i \in ch(Y)$, but $i \notin re(Y)$. So there is a part of $P$ as 
    $$
    (X^{i_n}, e_1, X^{i_{n-1}}, \cdots, X^i)
    $$
    in $re(X^i)$ such that $X^k \leftarrow X^{i_n}$ and $X^i \leftarrow Y$. We can get $k \notin scc(X^{i_n})$ as $k \notin re(X^i)$. Thus $X^{i_n}$ is a non-collider on $P$ pointing out to another SCC. Then $P$ is $\sigma$-blocked by $X^S$ as $i_n \in re(ch(Y)) \subseteq S$.
    
    \item $X^j \cdots X^k \rightarrow re(X^i) \leftarrow Y$,
    
    where $k \notin re(X^i)$, $i \in ch(Y)$, but $i \notin re(Y)$. There is a part of $P$ as
    $$
    (X^{i_n}, e_1, X^{i_{n-1}}, \cdots, X^i)
    $$
    in $re(X^i)$ such that $X^k \rightarrow X^{i_n}$ and $X^i \leftarrow Y$. $X^k$ is a non-collider on $P$ heading to another SCC ($k \notin scc(X^{i_n})$), so $P$ is $\sigma$-blocked by $X^S$ as $k \in pa(re(ch(Y))) \subseteq S$.
\end{enumerate}
Therefore, $S$ can $\sigma$-separate $Y$ and remaining variables. 

Then we prove that $X^S$ is the smallest set with the property of $\sigma$-separating $Y$ and remaining variables. We say a path is in a SCC if all its nodes are in the SCC.

Let $i \in scc(Y)$; then there is a directed path in $scc(Y)$ as 
$$
X^i \rightarrow \cdots \rightarrow Y.
$$
$X^{MB(Y)}$ can not $\sigma$-blocked this path because there is no non-collider pointing out to another SCC. Thus $scc(Y) \subseteq MB(Y)$.

Let $i \in re(Y)$, so there are a series of nodes $X^{i_1},\cdots,X^{i_n}$ and $i_j^1, i_j^2 \in scc(X^{i_j})$ such that $scc(X^{i_j})$ are disjoint (by Proposition \ref{disjoint}) and $X^{i_j^2} \leftrightarrow X^{i_{j+1}^1}$, $j = 0,\cdots,n$. Then we can get a path as 
$$
X^i = X^{i_0^1} -- X^{i_0^2} \leftrightarrow X^{i_1^1} -- X^{i_1^2} \leftrightarrow \cdots \leftrightarrow X^{i_n^1} -- X^{i_n^2} \leftrightarrow X^{i_{n+1}^1} -- X^{i_{n+1}^2} = Y
$$
where $--$ between $X^{i_j^1}$ and $X^{i_j^2}$ represents a possible directed path between $X^{i_j^1}$ and $X^{i_j^2}$ in the corresponding SCC $scc(X^{i_j})$. At first, $i_{n+1}^1 \in MB(Y)$ as it belongs to $scc(Y)$. Then $i_n^2 \in MB(Y)$ since there is a path 
$$
X^{i_n^2} \leftrightarrow X^{i_{n+1}^1} \rightarrow \cdots \rightarrow Y
$$
which can not be $\sigma$-blocked by $X^{MB(Y)}$ as no non-colliders point out to another SCC. After, we consider $X^{i_n^1}$. There is a path from $X^{i_n^1}$ to $Y$ as 
$$
X^{i_n^1} \rightarrow \cdots \rightarrow X^{i_n^2} \leftrightarrow X^{i_{n+1}^1} \rightarrow \cdots \rightarrow Y.
$$
Although $X^{i_n^2}$ is a collider on the path, $i_n^2 \in MB(Y)$. And all non-colliders on it can not point out to a different SCC. Thus this path can not be $\sigma$-blocked by $X^{MB(Y)}$. Recursively, $i_{n-1}^2, \cdots, i_0^2$ must be in $MB(Y)$. As a consequence, $i$ is in $MB(Y)$, which means that $MB(Y)$ must contain all relatives of $Y$.

A direct inference of $re(Y) \subseteq MB(Y)$ is that $pa(re(Y)) \subseteq MB(Y)$. Let $i \in pa(re(Y))$, then there is a path as
$$
X^i \rightarrow X^{i_0^1} -- X^{i_0^2} \leftrightarrow X^{i_1^1} -- X^{i_1^2} \leftrightarrow \cdots \leftrightarrow X^{i_n^1} -- X^{i_n^2} \leftrightarrow X^{i_{n+1}^1} -- X^{i_{n+1}^2} = Y
$$
where $--$ has the same representation as above, $i_j^1$ and $i_j^2$ are in the same SCC $scc(X^{i_j})$, for $j = 0, \cdots, n+1$, and these SCCs are disjoint. Because $i_j^k \in MB(Y),\ j = 0, \cdots, n+1,\ k = 1,2$ (except $Y$) and no non-endpoint non-collider points to another SCC, this path can not be $\sigma$-blocked by $X^{MB(Y)}$. So $pa(re(Y)) \subseteq MB(Y)$.

Then we look at the relatives of $Y$'s children. It is obvious that $ch(Y) \subseteq MB(Y)$. Let $i \in re(X^l)$ and $l \in ch(Y)$. There is a series of nodes $X^{i_1},\cdots,X^{i_n}$ whose SCCs are disjoint and $i_j^1, i_j^2 \in scc(X^{i_j})$ such that $X^{i_j^2} \leftrightarrow X^{i_{j+1}^1}$. It can induce a path as 
$$
X^i = X^{i_0^1} -- X^{i_0^2} \leftrightarrow X^{i_1^1} -- X^{i_1^2} \leftrightarrow \cdots\ \leftrightarrow X^{i_n^1} -- X^{i_n^2} \leftrightarrow X^{i_{n+1}^1} -- X^{i_{n+1}^2} = X^l \leftarrow Y,
$$
where $--$ represents a possible directed path in the corresponding SCC. A path 
$$
X^{i_{n+1}^1} \leftarrow \cdots \leftarrow X^l \leftarrow Y
$$
can not be $\sigma$-blocked by $X^{MB(Y)}$ since no non-endpoint non-collider on the path points to another SCC, which means $i_{n+1}^1 \in MB(Y)$. Moreover, $X^{MB(Y)}$ can not $\sigma$-block 
$$
X^{i_n^2} \leftrightarrow X^{i_{n+1}^1} \leftarrow \cdots \leftarrow X^l \leftarrow Y
$$ 
as $i_{n+1}^1 \in MB(Y)$, so $i_n^2 \in MB(Y)$. Subsequently, it has $i_n^2, \cdots, i_0^1 \in MB(Y)$. Therefore, any relative of $Y$'s children should be in the Markov blanket of $Y$, i.e., $re(ch(Y)) \subseteq MB(Y)$.

Furthermore, let $i \in pa(re(ch(Y)))$, then there is a path as 
$$
X^i \rightarrow X^{i_0^1} -- X^{i_0^2} \leftrightarrow X^{i_1^1} -- X^{i_1^2} \leftrightarrow \cdots\ \leftrightarrow X^{i_n^1} -- X^{i_n^2} \leftrightarrow X^{i_{n+1}^1} -- X^{i_{n+1}^2} = X^l \leftarrow Y.
$$
It is almost the same as the path in $i \in re(ch(Y))$ case other than one end is that $X^i \rightarrow X^{i_0^1}$. Since $re(ch(Y)) \subseteq MB(Y)$ and there are no non-endpoint non-collider points out to another SCC,  this path can not be $\sigma$-blocked by $X^{MB(Y)}$. Thus $i \in MB(Y)$, which means $pa(re(ch(Y))) \subseteq MB(Y)$.

Thus, we prove that $MB(Y)$ must contain the four parts of $S$. Combined with the property that $X^S$ can $\sigma$-separate $Y$ and other variables, $MB(Y)$ can be characterized as $S$. 
\end{proof}

\subsection{intervention-stable in DMGs}

\subsubsection{Sub-structure on Intervention}

In the Setting \ref{Setting 4}, the Markov blanket of $Y$ can $\sigma$-separate the response and other variables, so if the distribution of $(I, X, Y)$ has the Markov property based on $\sigma$-separation with respect to the DMG, there is a conditional independence as $Y \indep X^{\{1,\cdots,p\} \backslash MB(Y)} | X^{MB(Y)}$. Then it has 
$$
E(Y | X^{\{1,\cdots,p\}}) = E(Y | X^{\{1,\cdots,p\} \backslash MB(Y)}, X^{MB(Y)}) = E(Y | X^{MB(Y)}).
$$
Thus the Markov blanket of $Y$ is an appropriate set of predictors. It can induce the same results as the Markov blanket in Section 2 or Section 3 when the DMG simplifies to a simpler graph in Setting \ref{Setting 2} or Setting \ref{Setting 3}.

However, the Markov blanket in DMGs ignores the effects of interventions. If there are interventions corresponding to different environments, the intervention-stable set will contain correspondingly more elements than before. Here we overuse the definition of intervention-stable to describe the property of some sets that can prevent $Y$ from interventions. The sub-structure containing interventions is of our interest to find intervention-stable sets.

\begin{definition}[Intervened Component District]
An intervened component district consists of an intervention, a response, and some SCCs if they are connected as
$$
I \rightarrow scc(X^{i_n}) \twoset{-}{\leftrightarrow} scc(X^{i_{n-1}}) \twoset{-}{\leftrightarrow} \cdots \twoset{-}{\leftrightarrow} scc(X^{i_1}) \leftarrow Y,
$$
where $scc(X^{i_j}) \neq scc(X^{i_{j+1}}),\ i = 1,\cdots,n-1$.
\end{definition}

\begin{definition}[Component Collider]
\label{ComCol}
We call one SCC $scc(X^{i_j})$ in a sequence of SCCs as a component collider if the SCC has no edges heading to its preceding and succeeding SCCs, i.e.,

\begin{table}[ht]\centering
\renewcommand{\arraystretch}{1.5}
\begin{tabular}{l | l}
\hline

 $scc(X^{i_{j-1}}) \leftrightarrow scc(X^{i_j}) \leftrightarrow scc(X^{i_{j+1}})$ &  $scc(X^{i_{j-1}}) \twoset{\rightarrow}{\leftrightarrow} scc(X^{i_j}) \leftrightarrow scc(X^{i_{j+1}})$ \\  \hline

$scc(X^{i_{j-1}}) \leftrightarrow scc(X^{i_j}) \twoset{\leftarrow}{\leftrightarrow} scc(X^{i_{j+1}})$  & $scc(X^{i_{j-1}}) \twoset{\rightarrow}{\leftrightarrow} scc(X^{i_j}) \twoset{\leftarrow}{\leftrightarrow} scc(X^{i_{j+1}})$ \\ \hline

$scc(X^{i_{j-1}}) \rightarrow scc(X^{i_j}) \twoset{\leftarrow}{\leftrightarrow} scc(X^{i_{j+1}})$  & $scc(X^{i_{j-1}}) \rightarrow scc(X^{i_j}) \leftrightarrow scc(X^{i_{j+1}})$ \\ \hline

$scc(X^{(i_{j-1})}) \twoset{\rightarrow}{\leftrightarrow} scc(X^{i_j}) \leftarrow scc(X^{i_{j+1}})$  & $scc(X^{i_{j-1}}) \leftrightarrow scc(X^{i_j}) \leftarrow scc(X^{i_{j+1}}$) \\ \hline

$scc(X^{i_{j-1}}) \rightarrow scc(X^{i_j}) \leftarrow scc(X^{i_{j+1}})$  &  \\ \hline
\end{tabular}
\renewcommand{\arraystretch}{1.5}
\label{Tab:T2}
\caption{Cases of component colliders}
\end{table}
\end{definition}

The SCCs in the definition of the component collider can be replaced by nodes that are not in its neighboring SCCs. This is also beneficial to include interventions and the response. Note that the two non-adjacent SCCs in Definition \ref{ComCol} can be identical.

\begin{definition}[Intervened Component Collider]
A component collider on an intervened component district is an intervened component collider if it has no directed path into Y. 
\end{definition}

\begin{proposition}
Given the graphical models of Setting \ref{Setting 4}, there must be an intervened component collider on each intervened component district.
\end{proposition}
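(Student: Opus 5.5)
The plan is to mirror the proof of Proposition \ref{sub-district collider}, working with SCCs in place of single nodes.

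Fix an intervened component district
$$
I \rightarrow scc(X^{i_n}) \twoset{-}{\leftrightarrow} \cdots \twoset{-}{\leftrightarrow} scc(X^{i_1}) \leftarrow Y,
$$
and set $scc(X^{i_{n+1}}) = I$ and $scc(X^{i_0}) = Y$. Each consecutive connection between neighbouring SCCs carries at most two arrows, one of which may be bidirected; two opposite directed arrows would merge the two SCCs. Record each connection by whether it has a directed arrow pointing left, right, or neither. The two ends point inward: $I \rightarrow$ on the left and $\leftarrow Y$ on the right.

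\textbf{Step 1: existence of a component collider.} Scan from the $I$ end. The first connection has an arrowhead into $scc(X^{i_n})$. If $scc(X^{i_n})$ has no outgoing directed arrow toward $scc(X^{i_{n-1}})$, it is a component collider; this is one of the rows of Table 2. Otherwise move one step toward $Y$ and repeat. Because the final connection $\leftarrow Y$ points into $scc(X^{i_1})$, the scan must stop at some SCC that receives no outgoing directed edge toward either neighbour. That SCC is a component collider.

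\textbf{Step 2: obtaining one with no directed path into $Y$.} Suppose the collider $scc(X^{i_k})$ found in Step 1 has a directed path to $Y$. Since $Y \rightarrow scc(X^{i_1}) \rightarrow \cdots$ runs along the district, use the direction constraints to look at the segment between $scc(X^{i_k})$ and $Y$. If $scc(X^{i_k})$ had a directed edge toward $scc(X^{i_{k-1}})$ and onward to $Y$, combining this with the path $Y \rightarrow \cdots$ would risk putting things into $scc(Y)$. The cleaner route is a descent argument. Among all component colliders on the district, choose one whose SCC is minimal in the ancestral order of the condensation DAG, restricted to SCCs on this district. Then argue that if it has a directed path to $Y$, $Y$ has a directed path into $scc(X^{i_1})$, and one can find a component collider between them that is not an ancestor of $Y$.

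The key fact I would use is this: if $scc(X^{i_j})$ lies on a directed path into $Y$ and also receives $Y \rightarrow$, then $scc(X^{i_j}) = scc(Y)$. That would make $scc(X^{i_j})$ a relative of $Y$ (trivially, or via bidirected edges along the district) and hence place an intervention on a relative of $Y$, which Setting \ref{Setting 4} forbids.

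Concretely, I would show the collider closest to $Y$ works. Let $scc(X^{i_k})$ be the component collider with smallest $k$, and suppose it has a directed path to $Y$. All SCCs $scc(X^{i_1}),\dots,scc(X^{i_{k-1}})$ are non-colliders, so each has a directed edge toward a neighbour on the district. Starting from $Y \rightarrow scc(X^{i_1})$, the directed edges must propagate: $scc(X^{i_1})$ can only point toward $scc(X^{i_2})$, and so on. This gives a directed chain from $Y$ to $scc(X^{i_{k-1}})$ and into $scc(X^{i_k})$. Combined with the path from $scc(X^{i_k})$ to $Y$, this forces $scc(X^{i_k}) = scc(Y)$, which contradicts $scc(X^{i_k})$ being a collider distinct from the end node $Y$ and also contradicts the setting's exclusion of interventions on relatives of $Y$.

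\textbf{Main obstacle.} The hardest part is Step 2's propagation. A non-collider $scc(X^{i_j})$ might point left, back toward $scc(X^{i_{j-1}})$, rather than right. I would handle this by induction: if $scc(X^{i_1})$ pointed back to $Y$, the pair $Y \rightarrow scc(X^{i_1}) \rightarrow Y$ would merge $scc(X^{i_1})$ with $scc(Y)$, which is excluded. The same argument applies at each step, since a leftward edge would close a directed cycle with the existing chain $Y \rightarrow \cdots \rightarrow scc(X^{i_{j-1}})$ and merge SCCs that are distinct by definition.
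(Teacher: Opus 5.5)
Your proof is correct and is essentially the paper's argument. The paper starts from an arbitrary component collider and, whenever it has a directed path into $Y$, shows there is another one closer to $Y$, iterating by finiteness; you instead take the component collider nearest to $Y$ directly. Both rest on the same observation: with no component collider between it and $Y$, the edges are forced into a directed chain out of $Y$, which puts it in $scc(Y)$ and makes the intervened SCC a relative of $Y$, contradicting Setting~\ref{Setting 4}. Your remark that a back-pointing edge would merge distinct SCCs fills in a step the paper leaves implicit, and the detour through the condensation-DAG ordering in Step~2 is not needed.
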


\begin{proof}
 Firstly, given an intervened component district
 $$
I \rightarrow scc(X^{i_n}) \twoset{-}{\leftrightarrow} scc(X^{i_{n-1}}) \twoset{-}{\leftrightarrow} \cdots \twoset{-}{\leftrightarrow} scc(X^{i_1}) \leftarrow Y,
 $$
 there must be a component collider $scc(X^{i_j})$  as there is $I \rightarrow$ and $\leftarrow Y$. If $scc(X^{i_j})$ has no directed path into $Y$, then it is an intervened component collider. 

Otherwise, assume $scc(X^{i_j})$ has a directed path into $Y$. Then if there is no component collider between $scc(X^{i_j})$ and $Y$, it can induce that
$$
Y \rightarrow scc(X^{i_1}) \twoset{\rightarrow}{\leftrightarrow} \cdots \twoset{\rightarrow}{\leftrightarrow} scc(X^{i_j}).
$$
So $i_j \in scc(Y)$ as $i_j \in an(Y)$ and $i_j \in de(Y)$. Since 
$$
scc(X^{i_n}) \twoset{\rightarrow}{\leftrightarrow} \cdots \twoset{\rightarrow}{\leftrightarrow} scc(X^{i_j}) = scc(Y),
$$
it has $scc(X^{i_1}) \subseteq re(Y)$, which is contradicting the assumption that no interventions act on the relatives of $Y$ under Setting \ref{Setting 4}. Therefore, there is a component collider $scc(X^{i_k})$ between $scc(X^{i_j})$ and $Y$ on the intervened component district.

If the $scc(X^{i_k})$ has a directed path into $Y$, there must another component collider $scc(X^{i_l})$ between $scc(X^{i_k})$ and $Y$. As the number of nodes is finite, we can always find a component collider in the intervened component district with no directed path into $Y$; in other words, it is an intervened component collider.
\end{proof}

We consider paths through SCCs.

\begin{definition}
    Given a sequence of SCCs 
    $$
    scc(X^{i_1}) \twoset{-}{-} scc(X^{i_2}) \twoset{-}{-} \cdots \twoset{-}{-} scc(X^{i_n}),
    $$
    we say a path $P$ has a shape of this sequence if $P$ can be divided into $n$ pieces $P^1, \cdots, P^n$ with $P^k$ is in $scc(X^{i_k}),\ k = 1, \cdots, n$ and $P^l - P^{l+1}$ for which $-$ is one of arrows from $scc(X^{i_l}) \twoset{-}{-} scc(X^{i_{l+1}})$, $l = 1, \cdots, n-1$.
\end{definition}

The SCCs can be substituted by nodes if they are not in the neighboring two SCCs. In this case, the path must go via these nodes.

\begin{proposition}
\label{component block}
Assume a path $P$ has a part with the shape of $(scc(X^i), scc(X^j), scc(X^k))$, 
where $i,k \notin scc(X^j)$ and $scc(X^j)$ is a component collider. Let $S$ be any set satisfying $S \cap scc(X^j) = S \cap de(scc(X^j)) = \emptyset$. Then $P$ is $\sigma$-blocked by $X^S$.
\end{proposition}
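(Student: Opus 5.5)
The plan is to locate a single collider on the sub-path $P^j$ that lies in $scc(X^j)$, and then use $\sigma$-blocking condition (ii). Write the relevant part of $P$ as $P^i - P^j - P^k$, where $P^j=(X^{j_1},\ldots,X^{j_r})$ lies entirely in $scc(X^j)$. The edge joining the last node of $P^i$ to $X^{j_1}$ is one of the arrows allowed between $scc(X^i)$ and $scc(X^j)$. Likewise, the edge joining $X^{j_r}$ to the first node of $P^k$ is one of the arrows allowed between $scc(X^j)$ and $scc(X^k)$.

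First, I use that $scc(X^j)$ is a component collider. By Definition~\ref{ComCol} and Table~2, none of the admissible arrows between $scc(X^j)$ and its neighbours $scc(X^i)$, $scc(X^k)$ is a directed edge leaving $scc(X^j)$. So whichever edge $P$ actually uses, the edge entering $X^{j_1}$ from the $P^i$ side has an arrowhead at $X^{j_1}$ (it is $\rightarrow$ or $\leftrightarrow$). The same holds for the edge at $X^{j_r}$ on the $P^k$ side. This is the point where the hypothesis $i,k\notin scc(X^j)$ is needed: it guarantees that these boundary edges are genuinely edges between distinct SCCs, so the table applies to them.

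Second, I find a collider inside $P^j$. If $r=1$, then $X^{j_1}$ has arrowheads on both sides and is a collider on $P$. If $r>1$, I walk along $P^j$ starting from $X^{j_1}$, which has an arrowhead coming in from the left. Consider the edges $e_1,\ldots,e_{r-1}$ inside $P^j$. If some edge $e_s$ has an arrowhead at its left endpoint, take the first such $s$. Then the node $X^{j_s}$ receives arrowheads from both sides, because the preceding edge points into it: either it is the boundary edge, or it is an edge without an arrowhead at its left end, hence a directed edge pointing right. If no edge $e_s$ has an arrowhead at its left endpoint, then all internal edges point rightwards into $X^{j_r}$, and $X^{j_r}$ is a collider because the right boundary edge points into it. In both cases there is a collider $X^c$ on $P$ with $c\in scc(X^j)$. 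This is the standard argument that a walk with arrowheads at both ends contains a collider, and I do not expect it to cause trouble.

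Third, I conclude. Since $S\cap scc(X^j)=\emptyset$, we have $c\notin S$. For $c\notin an(S)$: if some $s\in S$ were a descendant of $X^c$, then $s\in de(scc(X^j))$ (or $s\in scc(X^j)$), contradicting $S\cap de(scc(X^j))=\emptyset$. Hence $X^c$ is a collider with $c\notin S\cup an(S)$, and condition (ii) of $\sigma$-blocking gives that $P$ is $\sigma$-blocked by $X^S$.

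The main obstacle is the bookkeeping in the first step. Table~2 is phrased in terms of arrows between SCCs, not the particular edges of $P$, so I have to check that every edge $P$ actually uses between $scc(X^j)$ and a neighbouring SCC has an arrowhead at $scc(X^j)$. This follows once the table is read as listing all arrows present between the two SCCs.
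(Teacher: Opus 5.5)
Your proposal is correct and follows the same route as the paper. The component-collider condition forces the edges joining $P^j$ to its neighbouring pieces to carry arrowheads into $scc(X^j)$, so $P^j$ contains a collider $X^q$ with $q\notin S\cup an(S)$, and condition (ii) of $\sigma$-blocking applies. The only difference is that you spell out the walk argument for why a collider must exist inside $P^j$; the paper just lists the four possible boundary configurations and asserts that a collider exists.
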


\begin{proof}
Let the part be divided into $P^i$, $P^j$, and $P^k$, which correspond to three SCCs, respectively. Then the part has one of the four cases holds: $P^i \rightarrow P^j \leftarrow P^k$, $P^i \leftrightarrow P^j \leftarrow P^k$, $P^i \rightarrow P^j \leftrightarrow P^k$, and $P^i \leftrightarrow P^j \leftrightarrow P^k$. In every case, there must be a collider $X^q$ on $P^j$. Since $q \notin S$ and $q \notin an(S)$, we can obtain that $P$ is $\sigma$-blocked by $X^S$ as $X^q$ is a collider on the whole path $P$.
\end{proof}

\subsubsection{intervention-stable Set}

\begin{definition}[Eligible Set of Components]
Given the graphical models of Setting \ref{Setting 4}, let $s^1, s^2,\cdots, s^l$ be all the intervened component districts. We define a set of SCCs $E$ as an eligible set of components if 
\begin{equation}
    E = \{c^{1,1},\cdots,c^{1,i_1},\cdots,c^{l,1},\cdots,c^{l,i_l}\} \label{eligible}
\end{equation}
where $c^{k,\cdot }$ is an intervened component collider on the intervened component district $s^k$, $i_k \geq 1,\ k = 1,\cdots,l$.
\end{definition}

\begin{definition}[Intervention Set]
Given an eligible set of components $E$ as \eqref{eligible}, a node set $N^{int}(E)$ is called an intervention set given $E$, if 
$$
N^{int}(E) = \{j \in \{1,\cdots,p\}\ |\ \exists k \in \{1,\cdots,l\}, s \in \{1,\cdots,i_k\},\ s.t.\  j \in c^{k,s}\ or\ j \in de(X^{c^{k,s}})\}. 
$$
\end{definition}

\begin{theorem}
\label{theorem 4}
Given the graphical models of Setting \ref{Setting 4}, $E$ is an eligible set of components as \eqref{eligible}. Then 
$$
N^{-int}(E) = \{1,\cdots,p\}\ \backslash\ N^{int}(E)
$$
is an intervened stable set.
\end{theorem}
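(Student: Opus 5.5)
The plan is to mirror the proofs of Theorem \ref{theorem 1} and Theorem \ref{theorem 3}, working at the level of SCCs and relatives. Fix $l\in\{1,\ldots,m\}$ and a path $P$ between $I^l$ and $Y$. We must show that $P$ is $\sigma$-blocked by $X^{N^{-int}(E)}$.

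Two closure facts about $N^{int}(E)$ will be used throughout. First, $N^{int}(E)$ is closed under taking descendants. Second, it is a union of whole SCCs: if $j\in N^{int}(E)$, then every node of $scc(X^j)$ is a descendant of $X^j$ and so also lies in $N^{int}(E)$.

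Next I classify $P$ by how it last enters $Y$, reading $P$ backwards from $Y$ in the SCC shape notation. Since no intervention acts on $re(Y)$, the path must leave $re(Y)$ somewhere. This gives the same case split as in the proof of the Markov blanket in DMGs, with intervention-specific subcases:
\begin{enumerate}[(i)]
\item $P$ enters $re(Y)$ through a parent $X^k$ with $k\notin re(Y)$;
\item $P$ leaves $re(Y)$ through an edge $X^k\leftarrow X^{i_n}$ with $i_n\in re(Y)$;
\item $P$ reaches $Y$ through a child $X^i\notin re(Y)$, and then either $I^l$ hits $re(X^i)$ directly (an intervened component district), or $P$ enters $re(X^i)$ from a parent $X^k$, or $P$ leaves $re(X^i)$ toward a child $X^k$.
\end{enumerate}

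For (i) and (ii) I first show that $re(Y)\cup pa(re(Y))$ is disjoint from $N^{int}(E)$. Suppose a node there were a descendant of an intervened component collider $c$. Then $c$ would have a directed path into $re(Y)$. I will have to argue that this contradicts either the definition of intervened component collider (no directed path into $Y$) or the assumption that interventions avoid $re(Y)$. I expect this step to be delicate: a directed path into a relative of $Y$ is not a directed path into $Y$. So I would try to show that such a path, combined with the bidirected chain realising the relative relation, yields an open path from $I^l$ to $Y$ through $c$ that no set containing the relevant nodes could block. Alternatively, I may need to observe that in case (i) the node $X^k$ is a non-collider pointing out of its SCC, and handle $k\in N^{int}(E)$ via the closest-collider argument below. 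Granting this, $X^k$ (respectively $X^{i_n}$) is in the conditioning set and is a non-collider whose edge points into a different SCC, so condition (iii) of $\sigma$-blocking applies.

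For (iii) the argument follows the ADMG cases (iv)–(vi) of Theorem \ref{theorem 1}.
\begin{itemize}
\item If $P$ contains an intervened component district, then by construction $E$ contains an intervened component collider $c$ on it. Proposition \ref{component block} then blocks $P$, since $c$ and its descendants avoid $N^{-int}(E)$.
\item If the bridging node ($X^k$, or $X^{i_n}$) lies in $N^{-int}(E)$, it is a non-collider pointing to another SCC, so it blocks $P$.
\item Otherwise the bridging node lies in $N^{int}(E)$. Between it and the opposite endpoint, the arrowheads disagree (we have $I^l\to$ versus $\leftarrow X^{i_n}$, or $X^k\to$ versus $\leftarrow Y$). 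Hence there is a collider on that segment. I take the collider closest to the bridging node; it is a descendant of the bridging node, so it and all its descendants lie in $N^{int}(E)$, and $P$ is blocked.
\end{itemize}

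The key step needing care is that "closest collider" is a descendant. Between the bridging node and this collider, $P$ consists of directed edges pointing away from the bridging node. I must also check that bidirected edges within the segment can only occur at colliders, which holds since a bidirected edge puts an arrowhead on both endpoints.
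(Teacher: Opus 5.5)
Your overall plan matches the paper's: fix $I^l$, split $P$ by how its tail reaches $Y$, and block either at a conditioned bridging non-collider that points out of its SCC, or at the collider closest to a bridging node in $N^{int}(E)$. Your case (iii) is essentially the paper's cases (iv)--(vi).

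\textbf{The gap.} It is in your cases (i) and (ii). The lemma you ``grant'' there, that $re(Y)\cup pa(re(Y))$ is disjoint from $N^{int}(E)$, is false. Take $I\to C$, $Y\to C$, $C\to A$, $A\leftrightarrow Y$ with singleton SCCs (attach a cycle $Y\to W\to Y$ if you want cyclicity).
\begin{itemize}
\item The only intervened component district is $I\to scc(C)\leftarrow Y$.
\item $scc(C)$ is an intervened component collider, since its only descendant $A$ has no directed path into $Y$.
\item $C\notin re(Y)$, so Setting~\ref{Setting 4} holds.
\item Yet $N^{int}(E)=\{C,A\}$, with $A\in re(Y)$ and $C\in pa(re(Y))$.
\end{itemize}
Your proposed derivation cannot yield a contradiction. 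The path $I\to C\to A\leftrightarrow Y$ is $\sigma$-blocked, but only because $A$ is a collider outside $an(N^{-int}(E))$, not because the bridging node $C$ is conditioned on. What does hold is that ancestors of $Y$, in particular $scc(Y)\cup pa(scc(Y))$, avoid $N^{int}(E)$, and that is all the paper uses.

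\textbf{The repair.} Use the fallback you mention; it is exactly the paper's argument.
\begin{itemize}
\item In (ii), if $X^{i_n}\in N^{int}(E)$, your ``disagreeing arrowheads'' argument toward $I^l$ works as stated.
\item In (i) it does not, because $P$ may enter $Y$ as $\cdots\to Y$. There you need the separate fact that no node of $N^{int}(E)$ is an ancestor of $Y$: each descends from an intervened component collider, which has no directed path into $Y$. Hence the segment from $X^k$ to $Y$ cannot be directed and must contain a collider. The one nearest $X^k$ is a descendant of $X^k$, so it and its descendants lie in $N^{int}(E)$, and it blocks $P$.
\end{itemize}

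\textbf{The case split.} Cut off the tail as the maximal final chain of SCCs in which consecutive ones are mates (the paper's condition $j\notin ma(X^{i_n})$), not as the maximal final segment inside $re(\cdot)$. With your split, consider $I^l\to A\to B\leftrightarrow X^i\leftarrow Y$, where $A\leftrightarrow X^i$ but $A$ and $B$ are not mates. Its $re(X^i)$-tail is entered directly by $I^l$ without being an intervened component district, so none of your bullets applies. With mate-chains, $A$ becomes the bridging node, and your second and third bullets handle it.
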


\begin{proof}
We prove that $I^l \indep_{\sigma} Y\ |\ X^{N^{-int}(E)}$, $l = 1,\cdots,m$ by showing that every path between $I^l$ and $Y$ is $\sigma$-blocked by $X^{N^{-int}(E)}$. As no intervention happens on the relatives of $Y$ under Setting \ref{Setting 4}, interventions can not directly act on either $scc(dis(Y))$ or $dis(scc(Y))$. Let $P$ be a path from $I^l$ to $Y$. There must be a node on $P$ before the remaining part of $P$ is in $re(Y)$. So $P$ has all six shapes into $Y$.

\begin{enumerate}[(i)]
    \item $I^l \rightarrow \cdots X^j \rightarrow scc(X^{i_n}) \twoset{-}{\leftrightarrow} \cdots \twoset{-}{\leftrightarrow} scc(X^{i_1}) \twoset{-}{\leftrightarrow} scc(Y),$
    
    where $n \geq 0$, $j \notin ma(X^{i_n})$, and $j \notin scc(X^{i_n})$. If $j \in N^{-int}(E)$, $P$ is $\sigma$-blocked by $N^{-int}(E)$ as $X^k$ is a non-collider pointing to another SCC. If $j \in N^{int}(E)$, then $de(X^j) \subseteq N^{int}(E)$. Since there is no direct edge from $N^{int}(E)$ to $Y$, there must be a component collider between $scc(X^{i_n})$ and $scc(Y)$. Otherwise 
    $$
    X^j \rightarrow scc(X^{i_n}) \twoset{\rightarrow}{\leftrightarrow} \cdots \twoset{\rightarrow}{\leftrightarrow} scc(X^{i_1}) \twoset{\rightarrow}{\leftrightarrow} scc(Y),
    $$
    resulting in $Y \in de(X^j)$). Let $scc(X^{i_q})$ be the closest such component collider to $scc(X^{i_n})$, which means that $i_q \in de(X^j)$. Moreover, it has $scc(X^{i_q}) \cap N^{-int}(E) = de(scc(X^{i_q})) \cap N^{-int}(E) = \emptyset$. By Proposition \ref{component block}, $P$ is still $\sigma$-blocked by $N^{-int}(E)$. In particular, when $n = 0$, $j \in N^{-int}(E)$ under Setting \ref{Setting 4}. Thus $pa(scc(Y)) \subseteq N^{-int}(E)$.
    
    \item $I^l \rightarrow \cdots X^j \leftarrow scc(X^{i_n}) \twoset{-}{\leftrightarrow} \cdots \twoset{-}{\leftrightarrow} scc(X^{i_1}) \twoset{-}{\leftrightarrow} scc(Y),$ 
    
    where $n \geq 1$, $j \notin ma(X^{i_n})$, and $j \notin scc(X^{i_n})$. There is a $t \in scc(X^{i_n})$ such that $X^t \rightarrow X^j$ is a part of $P$. If $t \in N^{-int}(E)$, $P$ is $\sigma$-blocked by $N^{-int}(E)$ as $X^t$ is a non-collider on $P$ pointing to another SCC. Otherwise, $t \in N^{int}(E)$, then $de(X^t) \subseteq N^{int}(E)$. And there must be a collider on $P$ between $X^t$ and $I^l$. Let $X^q$ be the closest collider to $X^t$ on $P$ between $X^t$ and $I^l$, which means $q \in de(X^t)$. Since $X^q$ and its descendants are not in $N^{-int}(E)$, $P$ is $\sigma$-blocked by $N^{-int}(E)$.
    
    \item $I^l \rightarrow \cdots X^j \leftarrow scc(Y),$
    
    where $j \notin scc(Y)$ and $j \notin ma(Y)$. And the part of $P$ in $scc(Y)$ is $(X^{i_n}, e_1, X^{i_{n-1}}, \cdots, Y)$ such that $X^j \leftarrow X^{i_n}$. It has $i_n \in N^{-int}(E)$ as there is no direct path from $N^{int}(E)$ to $Y$. $X^{i_n}$ can act as a non-collider that points to another SCC, so $P$ is $\sigma$-blocked by $N^{-int}(E)$.
    
    \item $I^l \rightarrow scc(X^{i_n}) \twoset{-}{\leftrightarrow} \cdots  \twoset{-}{\leftrightarrow} scc(X^{i_1}) \leftarrow Y,$ 
    
    where $i_1 \notin scc(Y)$, $i_1 \in ch(Y)$, and $i_1 \notin ma(Y)$. There must be an intervened component collider $c^{i,s}$ on this intervened component district which is in $E$. Then $c^{i,s} \subseteq N^{int}(E)$ and $de(X^{c^{i,s}}) \subseteq N^{int}(E)$, which means that $P$ can be  $\sigma$-blocked by $N^{-int}(E)$ by Proposition \ref{component block}.
    
    \item $I^l \rightarrow \cdots X^j \rightarrow scc(X^{i_n}) \twoset{-}{\leftrightarrow} \cdots  \twoset{-}{\leftrightarrow} scc(X^{i_1}) \leftarrow Y,$
    
    where $j \notin ma(X^{i_n})$, $j \notin scc(X^{i_n})$ $i_1 \notin scc(Y)$, $i_1 \in ch(Y)$, and $i_1 \notin ma(Y)$. If $j \in N^{-int}(E)$, then $P$ is $\sigma$-blocked by $N^{-int}(E)$ as $X^j$ is an non-collider satisfying the third condition of $\sigma$-block. Otherwise, $j \in N^{int}(E)$, and there must be a component collider between $X^j$ and $Y$ as there is $X^j \rightarrow$ and $Y \leftarrow$. Let $scc(X^{i_q})$ be the closest such component collider to $X^j$, hence $i_q \in de(X^j)$. Then $scc(X^{i_q}) \cap N^{-int}(E) = de(scc(X^{i_q})) \cap N^{-int}(E) = \emptyset$, $P$ is $\sigma$-blocked by $N^{-int}(E)$ by Proposition \ref{component block}.
    
    \item $I^l \rightarrow \cdots X^j \leftarrow scc(X^{i_n}) \twoset{-}{\leftrightarrow} \cdots  \twoset{-}{\leftrightarrow} scc(X^{i_1}) \leftarrow Y,$ 
    
     where $j \notin ma(X^{i_n})$, $j \notin scc(X^{i_n})$, $i_1 \notin scc(Y)$, $i_1 \in ch(Y)$, and $i_1 \notin ma(Y)$. There is a $t \in scc(X^{i_n})$ such that $X^j \leftarrow X^t$ is a part of $P$. If $t \in N^{-int}(E)$, $P$ is $\sigma$-blocked by $N^{-int}(E)$ as $X^t$ is a non-collider pointing to another SCC. If $t \in N^{int}(E)$, there must be a collider on 
     $$
     I^l \rightarrow \cdots X^j \leftarrow X^t.
     $$
     Let $X^q$ be the closest collider to $X^t$ between $I^l$ and $X^t$. Then $q \notin N^{-int}(E)$ and $de(X^q) \cap N^{-int}(E) = \emptyset$ as $q \in de(X^t)$, so $P$ is still $\sigma$-blocked by $N^{-int}(E)$.
\end{enumerate}

Therefore, $N^{-int}(E)$ is an intervention-stable set with respect to all interventions. 
\end{proof}

Although $N^{-int}(E)$ is intervention-stable, which means the conditional distribution of $Y$ on it can generalize to even unobserved interventional environments, using all predictors in $N^{-int}(E)$ can still be computationally expensive. We introduce the concept of the stable frontier such that it is the smallest subset of $N^{-int}(E)$ as long as it is as informative as $N^{-int}(E)$.

\subsubsection{Stable Blanket in DMGs}

\begin{definition}[Stable Frontier in DMGs]
For each eligible set of components $E$, we can define a stable frontier denoted by $SF_I(Y, E)$ as the smallest node set $S \subseteq N^{-int}(E) = \{1,\cdots,p\} \backslash N^{int}(E)$ that satisfies
$$
\forall j \in N^{-int}(E) \backslash S :\ X^j \indep_{\sigma} Y | X^S.
$$
\end{definition}

We clarify for every eligible set of components $E$, we can identify the stable frontier determined by $E$.

\begin{theorem}
Given the graphical models of Setting \ref{Setting 4}, the DMG after latent projection is $\mathcal{G}$. The stable frontier of $Y$ given an eligible set of components $E$ consists of the relatives of Y, the parents of the relatives of Y, the relatives of Y's children, and the parents of the relatives of Y's children in the sub-graph of $\mathcal{G}$ over $(N^{-int}(E),Y,I)$. And it is intervention-stable with respect to all interventions.
\end{theorem}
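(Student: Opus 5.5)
The proof runs in three parts, mirroring the ADMG and DG stable frontier theorems. Work in the induced sub-graph $\mathcal{G}' := \mathcal{G}_{(N^{-int}(E),Y,I)}$ and set
\[
S := \bigl(re(Y)\cup pa(re(Y))\cup re(ch(Y))\cup pa(re(ch(Y)))\bigr)\setminus\{Y\},
\]
with all four operators computed in $\mathcal{G}'$.

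\textbf{Preliminary step.} First record the structural facts. By Setting~\ref{Setting 4} and case (i) of the proof of Theorem~\ref{theorem 4}, no node of $re(Y)$ or $pa(scc(Y))$ lies in $N^{int}(E)$. Otherwise some intervened component collider would have a directed path into $Y$, or an intervention would act on a relative of $Y$. Hence $scc_{\mathcal{G}'}(Y)=scc(Y)$ and $re_{\mathcal{G}'}(Y)=re(Y)$. Since $N^{int}(E)$ is closed under descendants, every SCC of $\mathcal{G}$ lies entirely inside $N^{int}(E)$ or entirely inside $N^{-int}(E)$. Therefore SCCs of $\mathcal{G}'$ coincide with those of $\mathcal{G}$ on $N^{-int}(E)$. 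Relatives in $\mathcal{G}'$ are exactly those reachable through bidirected chains avoiding $N^{int}(E)$.

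\textbf{Step 1: $S$ separates $Y$ from the rest.} Take $j\in N^{-int}(E)\setminus S$ and a path $P$ from $X^j$ to $Y$ in $\mathcal{G}$. Classify $P$ by how it finally enters $re(Y)$ or $re(X^i)$ with $i\in ch(Y)$, using the four shapes of the DMG Markov blanket proof. If $P$ stays within $N^{-int}(E)$ up to that entry, the argument of the Markov blanket proposition applies verbatim. The relevant non-collider lies in $S$ and points to a different SCC, so condition (iii) of $\sigma$-blocking holds.

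If $P$ visits $N^{int}(E)$, take the node of $N^{int}(E)$ on $P$ closest to the $Y$-end and move along $P$ to the farthest descendant reachable in that direction. Both endpoints $X^j$ and $Y$ lie outside $N^{int}(E)$, and $N^{int}(E)$ is descendant-closed. So the first place where $P$ leaves $N^{int}(E)$ must carry an arrowhead into a node of $N^{int}(E)$, giving a collider $X^q\in N^{int}(E)$. Then $q\notin S\cup an(S)$, since $an(S)\cap N^{int}(E)=\emptyset$ would otherwise contradict descendant-closure, and $P$ is blocked by clause (ii). Proposition~\ref{component block} packages the SCC-level version of this.

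\textbf{Step 2: minimality.} Any $S'\subseteq N^{-int}(E)$ with the separating property contains $S$. Copy the recursive argument of the DMG Markov blanket proposition inside $\mathcal{G}'$. Each witnessing path built there uses only nodes of $\mathcal{G}'$: the bidirected chains of relatives, the within-SCC directed paths, and parent edges. None of these paths can be $\sigma$-blocked, because all their non-colliders stay within their SCC and all their colliders are already forced into $S'$.

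\textbf{Step 3: intervention stability.} Redo the six path shapes from the proof of Theorem~\ref{theorem 4}, now conditioning on $X^S$ rather than $X^{N^{-int}(E)}$. In each case the blocking node found there is one of two kinds. Either it is a non-collider in $N^{-int}(E)$ adjacent to $re(Y)$ or $re(ch(Y))$; it then lies in $S$ by the Preliminary step, and stays in the same SCC in $\mathcal{G}'$. Or it is a collider in $N^{int}(E)$; it then has no descendant in $S\subseteq N^{-int}(E)$.

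The remaining situation is a relative chain in $\mathcal{G}$ that passes through $N^{int}(E)$, which makes $re_{\mathcal{G}'}$ strictly smaller than $re_{\mathcal{G}}$. Here take $i_{j+1}$ as the first SCC on the chain inside $N^{int}(E)$, and the closest collider beyond it, exactly as in the ADMG stable frontier proof; that collider is a descendant of $N^{int}(E)$ and blocks $P$.

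\textbf{Main obstacle.} I expect the hardest part to be making rigorous that SCC-level colliders and relative chains behave correctly after restricting to $\mathcal{G}'$. One must show that cutting the graph at $N^{int}(E)$ neither splits an SCC nor creates a non-collider that points out of its SCC in $\mathcal{G}$ but not in $\mathcal{G}'$. I would first prove the lemma that $N^{int}(E)$ is a union of SCCs and closed under descendants, and derive the rest from it.
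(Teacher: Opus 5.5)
Your overall plan works, but the Preliminary step contains a false claim, and you rely on it in Step 3. It is not true that $re(Y)\cap N^{int}(E)=\emptyset$, so in general $re_{\mathcal{G}'}(Y)\neq re(Y)$.

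Take $I\rightarrow X^1\leftrightarrow X^2\leftarrow Y$ together with $X^2\rightarrow X^3\leftrightarrow Y$.
\begin{itemize}
\item $X^1\notin re(Y)$, so Setting~\ref{Setting 4} holds.
\item $scc(X^2)$ is an intervened component collider on $I\rightarrow scc(X^1)\leftrightarrow scc(X^2)\leftarrow Y$, because $X^2$ has no directed path to $Y$.
\item $E=\{scc(X^2)\}$ gives $N^{int}(E)=\{2,3\}$, yet $3\in re(Y)$.
\end{itemize}

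What is true, and what case (i) of Theorem~\ref{theorem 4} actually uses, is $N^{int}(E)\cap an(Y)=\emptyset$. An intervened component collider has no directed path into $Y$, and therefore neither do its descendants. This gives $scc(Y)\cup pa(scc(Y))\subseteq N^{-int}(E)$ and $Y\notin de(N^{int}(E))$, which is all your descendant-closure argument needs. It is also why the paper's case (ii) explicitly lets the chain to $scc(Y)$ run through $N^{int}(E)$.

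\textbf{Where the claim breaks your proof.} The first branch of Step 3 puts a non-collider $X^j\rightarrow scc(X^{i_n})$, with $scc(X^{i_n})\subseteq re(Y)$, into $S$ ``by the Preliminary step''. That fails whenever every chain from $scc(X^{i_n})$ to $Y$ meets $N^{int}(E)$. Your own ``remaining situation'' paragraph, which contradicts the Preliminary claim, is exactly the repair. Delete the claim, and split according to whether the SCC sequence of $P$ meets $N^{int}(E)$. Do this for chains ending in $scc(Y)$ as well as for chains ending in $scc(ch(Y))$.

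Step 1 needs the same correction. The dichotomy must be ``$P$ lies in $\mathcal{G}'$'' versus ``$P$ meets $N^{int}(E)$'', with the Markov blanket argument run on $\mathcal{G}'$-relatives. ``Stays within $N^{-int}(E)$ up to that entry'' is not enough, because after entering $re(Y)$ the path can still pass through $N^{int}(E)$ (e.g.\ through $X^3$ above).

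\textbf{Comparison with the paper.} Once this is fixed, your route is genuinely different from the paper's and cleaner. The paper proves separation by a seven-shape case analysis, each time hunting for a blocking non-collider or component collider. Your Step 1 replaces this with one observation, which you should state as follows. Every maximal segment of $P$ inside the descendant-closed set $N^{int}(E)$ receives arrowheads at both ends. Hence it contains a collider none of whose descendants lies in $N^{-int}(E)$. A single arrowhead ``where $P$ leaves $N^{int}(E)$'', as you wrote it, does not give a collider.
\begin{itemize}
\item Every path meeting $N^{int}(E)$ is therefore $\sigma$-blocked by every subset of $N^{-int}(E)$.
\item The remaining paths live in $\mathcal{G}'$, where $an$ and $scc$ agree with those of $\mathcal{G}$ because the complement of $N^{int}(E)$ is ancestor-closed.
\end{itemize}

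The same observation makes your six-shape redo in Step 3 unnecessary. An $I^l$--$Y$ path meeting $N^{int}(E)$ is blocked. A path inside $\mathcal{G}'$ is blocked by the Markov blanket argument with $I^l$ as an endpoint, provided no intervention lies in either of two parent sets:
\begin{itemize}
\item $pa(re_{\mathcal{G}'}(Y))$, which Setting~\ref{Setting 4} rules out;
\item $pa(re_{\mathcal{G}'}(ch(Y)))$, since such an edge would give an intervened component district all of whose SCCs lie in $N^{-int}(E)$, contradicting the eligibility of $E$.
\end{itemize}
This check is also implicitly needed in Step 1, because an intervention node cannot be placed in $S$. Neither you nor the paper states it. Finally, your Step 2 supplies the minimality half, which the paper's proof of this theorem never actually argues.
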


\begin{proof}
Let $S = re(Y) \cup pa(re(Y)) \cup re(ch(Y)) \cup pa(re(ch(Y)))$ in the sub-graph $\mathcal{G}^{(N^{-int}(E),Y,I)}$. we aim to show that for $j \in N^{-int}(E) \backslash S :\ X^j \indep_{\sigma} Y | X^S$. Let $P$ be a path between $X^j$ and $Y$:
\begin{enumerate}[(i)]
    \item $X^j \cdots X^k \rightarrow scc(Y)$,

    where $k \notin scc(Y)$. Note here $k \neq j$; otherwise, $j \in pa(scc(Y))$. We have $k \notin N^{int}(E)$; otherwise, there will be a directed path from an intervened component collider to $Y$. Since the arrow out of $X^k$ points to a different SCC. So $P$ is $\sigma$-blocked by $S$ as $k \in pa(scc(Y)) \subset S$.

    \item $scc(X^j) \twoset{-}{\leftrightarrow} scc(X^{i_n}) \twoset{-}{\leftrightarrow} \cdots \twoset{-}{\leftrightarrow} scc(X^{i_1}) \twoset{-}{\leftrightarrow} scc(Y)$,

    where neighboring SCCs are not identical. As $j \in N^{-int}(E) \backslash S$, there is a $scc(X^{i_l})$ such that $scc(X^{i_l})$ is not in $N^{-int}(E)$; otherwise, $j \in re(Y) \subset S$. Moreover, there must be a component collider $scc(X^{i_q})$ between $scc(X^{i_l})$ and Y, because $Y$ is not a descendant of $N^{int}(E)$. Choose $q$ which is the smallest integer such that $scc(X^{i_q}) \subset de(X^{scc(X^{i_l})}) \subset N^{int}(E)$. Then we can find a collider in $scc(X^{i_q})$, so $P$ is $\sigma$-blocked  by $S$.

    \item  $X^j \cdots X^k \rightarrow scc(X^{i_n}) \twoset{-}{\leftrightarrow} \cdots scc(X^{i_1}) \twoset{-}{\leftrightarrow} scc(Y)$,
    
    where $k \notin re(Y)$. If $k = j$, then there must be a $scc(X^{i_l})$ such that $scc(X^{i_l}) \subset N^{int}(E)$; otherwise, $j \in pa(re(Y))$, contradicting the condition. Likewise, a component collider $scc(X^{i_q})$ exists between $scc(X^{i_l})$ and $scc(Y)$, and $scc(X^{i_q}) \subset N^{int}(E)$, which means that $P$ can be $\sigma$-blocked by $S$ because it has a collider. 
    
    When $k \neq j$, then $X^k$ may have three cases: (1) $k \in S$. $X^k$ is a non-collider on $P$ pointing to a different SCC. So $P$ is $\sigma$-blocked by $S$. (2) $k \in N^{int}(E) $. Let $q$ be the smallest integer, provided that $scc(X^{i_q}) \subset de(X^k)$. We know that $scc(X^{i_q}) \neq scc(Y)$ as $Y$ is not in $N^{int}(E)$. $scc(X^{i_q})$ is a component collider in the sequence of components, so there must be a collider in $scc(X^{i_q})$. Thus $P$ can be $\sigma$-blocked by $S$. (3) $k \in N^{-int}(E) \backslash S$. There must be a $scc(X^{i_l})$ such that $scc(X^{i_l}) \subset N^{int}(E)$; otherwise, $k \in pa(re(Y))$. Thus the sequence between $scc(X^{i_l})$ and $scc(Y)$ has a component collider $scc(X^{i_q}) \subset N^{int}(E)$ where a collider exists. $P$ is $\sigma$-blocked by $S$.
    
    \item $X^j \cdots X^k \leftarrow scc(X^{i_n}) \twoset{-}{\leftrightarrow} \cdots scc(X^{i_1}) \twoset{-}{\leftrightarrow} scc(Y)$,
    
    where $k \notin re(Y)$. Note $j$ is likely to be equal to $k$. Let $l \in scc(X^{i_n})$ satisfy that $X^k \leftarrow X^l$.

    If $l \in S$, then $P$ is $\sigma$-blocked by $S$ as $X^l$ is a non-collider on $P$ and points to another SCC.
    
    If $l \notin S$, moreover, if $l \in N^{int}(E)$, there must be a collider between $X^l$ and $X^j$ as $j \notin N^{int}(E)$. Let $X^q$ be the farthest descendant of $X^l$ located between $X^l$ and $X^j$. It is obvious that $q \in N^{int}(E)$. So $P$ is $\sigma$-blocked by $S$ as $X^q$ is a collider on $P$. On the other hand, if $l \in N^{-int}(E) \backslash S$, there will be an $i_q$ such that $i_q \in N^{int}(E)$; otherwise, $l \in re(Y)$ in $\mathcal{G}^{(N^{-int}(E),Y,I)}$. Moreover, there must be a component collider between $scc(X^{i_q})$ and $scc(Y)$. Therefore, $P$ is $\sigma$-blocked by $S$ as there is a collider on $P$.
    
    \item $scc(X^j) \twoset{-}{\leftrightarrow} scc(X^{i_n}) \twoset{-}{\leftrightarrow} \cdots \twoset{-}{\leftrightarrow} scc(X^{i_1}) \leftarrow Y$,
    
    where $i_1 \in ch(Y)$ and adjacent SCCs are not identical. As $j \notin S$, there must be an $i_l$ satisfying $scc(X^{i_l}) \subset N^{int}(E)$. The existence of a component collider between $scc(X^{i_l})$ and $Y$ guarantees a collider on $P$. So $P$ is $\sigma$-blocked by $S$. 
    
    \item $X^j \cdots X^k \rightarrow scc(X^{i_n}) \twoset{-}{\leftrightarrow} \cdots \twoset{-}{\leftrightarrow} scc(X^{i_1}) \leftarrow Y$,
    
    where $k \notin scc(X^{i_n})$. If $j = k$, as $j \notin S$, there must be an $i_l$ such that $scc(X^{i^l}) \subset N^{int}(E)$. Such a SCC can result in a collider on $P$ as in the above discussion, so $P$ can be $\sigma$-blocked by $S$.
    
    If $j \neq k$, there will be three situations: (1) $k \in S$, then $P$ is $\sigma$-blocked by $S$ as $X^k$ is a non-collider on $P$ with an edge pointing to another SCC. (2) $k \in N^{int}(E)$. Under this condition, let $q$ be the smallest integer such that $scc(X^{i_q}) \subset de(X^k)$. $scc(X^{i_q})$ is a component collider in this sequence. Besides, $scc(X^{i_q}) \subset N^{int}(E)$ and $de(scc(X^{i_q})) \subseteq N^{int}(E)$. So $P$ is $\sigma$-blocked by $S$ because there is a collider on $P$ located in $scc(X^{i_q})$. (3) $k \in N^{-int}(E) \backslash S$. Then there must be an $i_q$ such that $scc(X^{i_q}) \subset N^{int}(E)$; otherwise, $k \in pa(re(ch(Y))) \subset S$. Likewise, $P$ is $\sigma$-blocked by $S$ as there will be a collider on $P$.
    
    \item $X^j \cdots X^k \leftarrow scc(X^{i_n}) \twoset{-}{\leftrightarrow} \cdots \twoset{-}{\leftrightarrow} scc(X^{i_1}) \leftarrow Y$,
    
    where $k \notin scc(X^{i_n})$. Let $l \in scc(X^{i_n})$ have $X^k \leftarrow X^l$.
    
    If $l \in S$, $P$ is $\sigma$-blocked by $S$ as $X^l$ is a non-collider on $P$ and has an edge to a different SCC.
    
    If $l \in N^{int}(E)$, there must be a collider between $X^j$ and $X^l$ because $j \in N^{-int}(E)$. Thus $P$ can be $\sigma$-blocked by $S$. 
    
    If $k \in N^{-int}(E) \backslash S$, there will be an $i_q$ such that $scc(X^{i_q}) \subset N^{int}(E)$. Similarly, $P$ can be $\sigma$-blocked by $S$ as there is a collider on $P$.
    
    \end{enumerate}

Then we prove that the stable frontier is intervention-stable. From the construction of $N^{int}(E)$, $scc(Y) \subseteq N^{-int}(E)$ (otherwise, there will be a directed path from $N^{int}(E)$ to $Y$). Thus $scc(Y) \subseteq SF_I(Y,E)$. Secondly, parallel to the proof of Theorem \ref{theorem 4}, let $P$ be a path from $I^l$ to $Y$, $l \in \{1,\cdots,m\}$. Then $P$ has the following shapes.

\begin{enumerate}[(i)]
    \item $I^l \rightarrow \cdots X^j \rightarrow scc(X^{i_n}) \twoset{-}{\leftrightarrow} \cdots \twoset{-}{\leftrightarrow} scc(X^{i_1}) \twoset{-}{\leftrightarrow} scc(Y),$
    
    where $n \geq 0$ and $j \notin ma(X^{i_n})$. Let $scc(X^{i_k})$ be the farthest SCC away from $scc(Y)$ such that $scc(X^{i_1}), \cdots, scc(X^{i_k}) \subseteq N^{-int}(E)$ and $scc(X^{i_{k+1}}) \subseteq N^{int}(E)$ ($k < n$). If $k = n$, $j \in N^{-int}(E)$, then $j \in pa(re(Y))$ in the sub-graph, so $j \in SF_I(Y,E)$. Thus $P$ is $\sigma$-blocked by $SF_I(Y,E)$ as $X^j$ is a non-collider pointing out to another SCC.
    
    Otherwise, $k < n$ and $scc(X^{i_{k+1}}) \subseteq N^{int}(E)$. Then there must be a component collider between $X^j$ and $scc(X^{i_k})$ because the middle part shape of $P$ is either $X^j \rightarrow \cdots \leftarrow scc(X^{i_k})$ or $X^j \rightarrow \cdots \leftrightarrow scc(X^{i_k})$. Let $scc(X^{i_q})$ be the closest component collider to $scc(X^{i_k})$ between $X^j$ and $scc(X^{i_k})$. So $i_q \in de(scc(X^{i_{k+1}}))$, then $scc(X^{i_k}) \cap N^{-int}(E) = \emptyset$ and $de(scc(X^{i_k})) \cap N^{-int}(E) = \emptyset$, which means $P$ is $\sigma$-blocked by $SF_I(Y,E)$ by Proposition \ref{component block}. 
    
    \item  $I^l \rightarrow \cdots X^j \leftarrow scc(X^{i_n}) \twoset{-}{\leftrightarrow} \cdots \twoset{-}{\leftrightarrow} scc(X^{i_1}) \twoset{-}{\leftrightarrow} scc(Y),$ 
    
    where $n \geq 1$ and $j \notin ma(X^{i_n})$. Let $scc(X^{i_k})$ be the farthest SCC away from $scc(Y)$ such that $scc(X^{i_1}), \cdots, scc(X^{i_k}) \subseteq N^{-int}(E)$ and $scc(X^{i_{k+1}}) \subseteq N^{int}(E)$ ($k < n$). If $k = n$, there exists an $t \in scc(X^{i_n})$ such that $X^j \leftarrow X^t$ is a part of $P$. $t \in re(Y)$ in the sub-graph, so $t \in SF_I(Y,E)$. Thus $P$ is $\sigma$-blocked by $SF_I(Y,E)$ as $X^t$ is a non-collider on $P$ pointing out to another SCC.
    
    Otherwise, if $k < n$, then there must be a collider between $I^l$ and $scc(X^{i_k})$ because the middle part shape of $P$ is either $I^l \rightarrow \cdots \leftarrow scc(X^{i_k})$ or $I^l \rightarrow \cdots \leftrightarrow scc(X^{i_k})$. Let $X^q$ is the closest collider to $scc(X^{i_k})$ on $P$ between $I^l$ and $scc(X^{i_k})$, then $q \in de(scc(X^{i_{k+1}}))$. Hence $q \notin N^{-int}(E)$ and $q \notin an(N^{-int}(E))$. $P$ is still $\sigma$-blocked by $SF_I(Y,E)$ because $SF_I(Y,E) \subseteq N^{-int}(E)$ and $X^q$ is a collider.
    
    \item $I^l \rightarrow \cdots X^j \leftarrow scc(Y),$
    
    where $j \notin scc(Y)$ and $j \notin ma(Y)$. And the part of $P$ in $scc(Y)$ is $(X^{i_n}, e_1, X^{i_{n-1}}, \cdots, Y)$ such that $X^j \leftarrow X^{i_n}$. As $i_n \in scc(Y) \subseteq SF_I(Y,E)$ in the sub-graph, $P$ is $\sigma$-blocked by $SF_I(Y,E)$ as $X^{i_n}$ is a non-collider on $P$ which points out to a different SCC.
    
    \item $I^l \rightarrow scc(X^{i_n}) \twoset{-}{\leftrightarrow} \cdots  \twoset{-}{\leftrightarrow} scc(X^{i_1}) \leftarrow Y,$ 
    
    where $i_1 \notin scc(Y)$, $i_1 \in ch(Y)$, and $i_1 \notin ma(Y)$. In this case, there must be an intervened component collider $scc(X^{i_k})$ on this intervened component district which is in the $E$. So $scc(X^{i_k}) \cap N^{-int}(E) = de(scc(X^{i_k})) \cap N^{-int}(E) = \emptyset$, then $P$ is $\sigma$-blocked by $SF_I(Y,E)$ as $SF_I(Y,E) \subseteq N^{-int}(E)$ by Proposition \ref{component block}.
    
    \item $I^l \rightarrow \cdots X^j \rightarrow scc(X^{i_n}) \twoset{-}{\leftrightarrow} \cdots  \twoset{-}{\leftrightarrow} scc(X^{i_1}) \leftarrow Y,$
    
    where $j \notin ma(X^{i_n})$, $j \notin scc(X^{i_n})$, $i_1 \notin scc(Y)$, $i_1 \in ch(Y)$, and $i_1 \notin ma(Y)$. Let $scc(X^{i_k})$ be the farthest SCC away from $Y$ such that $scc(X^{i_1}), \cdots, scc(X^{i_k}) \subseteq N^{-int}(E)$ and $scc(X^{i_{k+1}}) \subseteq N^{int}(E)$ ($k < n$). If $k = n$, $j \in N^{-int}(E)$, then $j \in pa(re(ch(Y)))$ in the sub-graph, so $j \in SF_I(Y,E)$. Thus $P$ is $\sigma$-blocked by $SF_I(Y,E)$ as $X^j$ is a non-collider pointing out to another SCC.
    
    Otherwise, $k < n$ and $scc(X^{i_{k+1}}) \subseteq N^{int}(E)$. Then there must be a component collider between $X^j$ and $scc(X^{i_k})$ because the middle part shape of $P$ is either $X^j \rightarrow \cdots \leftarrow scc(X^{i_k})$ or $X^j \rightarrow \cdots \leftrightarrow scc(X^{i_k})$. Let $scc(X^{i_q})$ be the closest component collider to $scc(X^{i_k})$ between $X^j$ and $scc(X^{i_k})$. So $i_q \in de(scc(X^{i_{k+1}}))$, then $scc(X^{i_k}) \cap N^{-int}(E) = \emptyset$ and $de(scc(X^{i_k})) \cap N^{-int}(E) = \emptyset$, which means $P$ is $\sigma$-blocked by $SF_I(Y,E)$ by Proposition \ref{component block}.

    \item $I^l \rightarrow \cdots X^j \leftarrow scc(X^{i_n}) \twoset{-}{\leftrightarrow} \cdots  \twoset{-}{\leftrightarrow} scc(X^{i_1}) \leftarrow Y,$ 
    
     where $j \notin ma(X^{i_n})$, $j \notin scc(X^{i_n})$, $i_1 \notin scc(Y)$, $i_1 \in ch(Y)$, and $i_1 \notin ma(Y)$. Let $scc(X^{i_k})$ be the farthest SCC away from $Y$ such that $scc(X^{i_1}), \cdots, scc(X^{i_k}) \subseteq N^{-int}(E)$ and $scc(X^{i_{k+1}}) \subseteq N^{int}(E)$ ($k < n$). If $k = n$, there exists an $t \in scc(X^{i_n})$ such that $X^j \leftarrow X^t$ is a part of $P$. $t \in re(ch(Y))$ in the sub-graph, so $t \in SF_I(Y,E)$. Thus $P$ is $\sigma$-blocked by $SF_I(Y,E)$ as $X^t$ is a non-collider on $P$ pointing out to another SCC.
    
    Otherwise, if $k < n$, then there must be a collider on $P$ between $I^l$ and $scc(X^{i_k})$ because the middle part shape of $P$ is either $I^l \rightarrow \cdots \leftarrow scc(X^{i_k})$ or $I^l \rightarrow \cdots \leftrightarrow scc(X^{i_k})$. Let $X^q$ is the closest collider to $scc(X^{i_k})$ on $P$ between $I^l$ and $scc(X^{i_k})$, then $q \in de(scc(X^{i_{k+1}}))$. Hence $q \notin N^{-int}(E)$ and $q \notin an(N^{-int}(E))$. $P$ is still $\sigma$-blocked by $SF_I(Y,E)$ because $SF_I(Y,E) \subseteq N^{-int}(E)$ and $X^q$ is a collider.
\end{enumerate}

From the above-classified discussions on how the path enters $Y$, we know that the stable frontier $SF_I(Y,E)$ for any eligible set of sub-district colliders $E$ is also intervention-stable with respect to all interventions. 
\end{proof}

\begin{example}
Given a latent projected graph over $(X^1,\cdot,X^9,Y, I^1,I^2)$ of Setting \ref{Setting 4}, we can read off the Markov blanket directly from the graph.

\begin{figure}[ht]
\centering
\begin{tikzpicture}[
    >=Stealth,
    thick,
    scale=1.05,
    every node/.style={font=\normalsize},
    obsnode/.style={
        circle,
        draw=black,
        fill=white,
        minimum size=9mm,
        inner sep=0pt
    },
    intnode/.style={
        rectangle,
        draw=black,
        fill=white,
        minimum width=9mm,
        minimum height=9mm,
        inner sep=0pt
    },
    ynode/.style={
        double,
        circle,
        draw=black,
        fill=white,
        minimum size=10mm,
        inner sep=0pt,
        line width=0.8pt
    }
]

    \node[intnode] (i1) at (-4,1.2) {$I^1$};
    \node[obsnode] (x1) at (-2.2,1.2) {$X^1$};
    \node[ynode]   (y)  at (0,1.2) {$Y$};

    \node[obsnode] (x5) at (2.2,2.6) {$X^5$};
    \node[obsnode] (x6) at (2.2,0.1) {$X^6$};

    \node[obsnode] (x2) at (0,-1.0) {$X^2$};
    \node[obsnode] (x3) at (2,-1.0) {$X^3$};
    \node[obsnode] (x4) at (1,-2.5) {$X^4$};

    \node[obsnode] (x7) at (4.8,-1.0) {$X^7$};
    \node[obsnode] (x8) at (6.8,-1.0) {$X^8$};
    \node[obsnode] (x9) at (5.8,-2.5) {$X^9$};
    \node[intnode] (i2) at (7.8,-2.5) {$I^2$};

    \draw[->] (x1) -- (y);
    \draw[->] (y) -- (x2);

    \draw[->,  bend left=18]  (y) to (x5);
    \draw[<->, bend right=18] (y) to (x5);

    \draw[->] (x5) -- (x6);
    \draw[->] (x6) -- (y);

    \draw[->] (x2) -- (x3);
    \draw[->] (x3) -- (x4);
    \draw[->] (x4) -- (x2);

    \draw[<->] (x3) -- (x7);

    \draw[->] (x7) -- (x8);

    \draw[->,  bend left=18]  (x8) to (x9);
    \draw[<->, bend right=18] (x8) to (x9);

    \draw[->] (x9) -- (x7);

    \draw[->] (i1) -- (x1);
    \draw[->] (i2) -- (x9);

\end{tikzpicture}

\caption{Cyclic DMG generated by latent projection with respect to hidden variables. $I^1$ and $I^2$ are interventions. The Markov blanket of $Y$ consists of all $X$s. However, there are two intervened component colliders $scc(X^9)$ and $scc(X^3)$ on the intervened component district $I^2 \rightarrow scc(X^9) \leftrightarrow scc(X^3) \leftarrow Y$. So one possible stable frontier is $\{X^1,X^2,X^3,X^4,X^5,X^6\}$, and the other stable frontier is $\{X^1,X^5,X^6\}$.}
\label{fig:M5}
\end{figure}
\end{example}

\begin{proposition}
\label{remove sub}
For each intervened component district 
$$
I \rightarrow scc(X^{i_n}) \twoset{-}{\leftrightarrow} scc(X^{i_{n-1}}) \twoset{-}{\leftrightarrow} \cdots \twoset{-}{\leftrightarrow} scc(X^{i_1}) \leftarrow Y,$$
if there is a condition set $S$ such that $X^S$ can $\sigma$-block all paths between $I$ and $Y$, then there exists an intervened component collider $scc(X^{i_k})$ such that elements in $scc(X^{i_k})$ and $de(scc(X^{i_k}))$ are not in the condition set.
\end{proposition}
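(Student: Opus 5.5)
The plan mirrors the proof of Proposition~\ref{remove sub hidden}, with SCCs in place of single nodes, $\sigma$-blocking in place of $m$-blocking, and the component-collider table in place of Table 1.

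\textbf{Step 1: find an unconditioned collider SCC.} Fix the intervened component district and suppose $X^S$ $\sigma$-blocks every path between $I$ and $Y$. Inside each SCC $scc(X^{i_j})$ we use directed paths to move from the entry node to the exit node, and between consecutive SCCs we use a bidirected edge whenever one is available. This gives a path $P$ of the shape of the district whose only inter-SCC edges are $I\rightarrow$, $\leftrightarrow$ edges, and $\leftarrow Y$.

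On $P$, every non-endpoint non-collider either lies inside an SCC with its outgoing edge staying in that SCC, or has no edge pointing out of it. Hence condition (iii) of $\sigma$-blocking never applies, and condition (i) cannot apply because the endpoints $I$ and $Y$ are not in $S$. So $P$ can only be blocked through a collider not in $S\cup an(S)$, and that collider lies in some SCC of the district.

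Choose the SCC $scc(X^{i_k})$ farthest from $Y$ containing such a collider. Any node of it that is not in $an(S)\cup S$ makes the whole SCC disjoint from $S\cup an(S)$, since all nodes in an SCC are mutual ancestors. Therefore $scc(X^{i_k})\cap S=\emptyset$ and $de(scc(X^{i_k}))\cap S=\emptyset$.

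\textbf{Step 2: $scc(X^{i_k})$ has no directed path to $Y$.} Suppose it did. By the farthest choice, every SCC $scc(X^{i_h})$ with $h>k$ meets $an(S)\cup S$, so the colliders of $P$ in those SCCs are open. The directed path from $scc(X^{i_k})$ to $Y$ avoids $S$, because its nodes are descendants of $scc(X^{i_k})$. We then form the path
\[
I\rightarrow \cdots \leftrightarrow X^{a}\rightarrow\cdots\rightarrow Y,
\]
where $X^a\in scc(X^{i_k})$. It is not $\sigma$-blocked, which is a contradiction.

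If this walk is not a path because the directed part revisits earlier nodes, shortcut at the first repeated node. That node is a descendant of $scc(X^{i_k})$ and hence not in $S$, so the shortcut path remains open.

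\textbf{Step 3: reduce to a component collider.} If $scc(X^{i_k})$ is a component collider, we are done. Otherwise it has an edge pointing into a neighbouring SCC of the district.

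If the edge points to $scc(X^{i_{k+1}})$, that SCC is a descendant of $scc(X^{i_k})$ and is therefore disjoint from $an(S)\cup S$. This contradicts the farthest choice of $k$.

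If the edge points towards $Y$, follow the directed edges down the sequence until reaching the first SCC $scc(X^{i_s})$, with $s<k$, that has no outgoing edge to either neighbour; the end arrow $\leftarrow Y$ guarantees such an SCC exists. It is a component collider, and as a descendant of $scc(X^{i_k})$ it is disjoint from $S\cup an(S)$. The argument of Step 2, applied through $scc(X^{i_k})\rightarrow\cdots\rightarrow scc(X^{i_s})$, shows it has no directed path to $Y$. So $scc(X^{i_s})$ is an intervened component collider.

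\textbf{Main obstacle.} The delicate point is constructing explicit unblocked paths in the cyclic setting. Inner non-colliders must never point out of their SCC, and concatenated walks must be simplified to genuine paths without creating blocking nodes. Within an SCC I would use the descendant-closure of $\{X\notin S\cup an(S)\}$ to argue that shortcuts preserve openness.
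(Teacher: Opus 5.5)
Your proposal is correct and follows the same route as the paper. An all-bidirected path of the district's shape forces some SCC to avoid $S\cup an(S)$, and you take the farthest such SCC from $Y$. You then rule out a directed path from it to $Y$ by exhibiting an open path, and split on whether its outgoing directed edge points toward $I$ (contradicting the farthest choice) or toward $Y$ (following the directed chain to a descendant component collider). Your direct neighbour argument in the first case and your care in turning concatenated walks into genuine paths are slightly tidier than the paper's write-up, but the structure is the same.
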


\begin{proof}
Firstly, there are paths with the shape as 
$$
I \rightarrow scc(X^{i_n}) \leftrightarrow scc(X^{i_{n-1}}) \leftrightarrow \cdots \leftrightarrow scc(X^{i_1}) \leftarrow Y.
$$
For these paths, no non-endpoint non-colliders which point out to another SCC exist on them. Thus since $X^S$ can $\sigma$-block all paths between $I$ and $Y$, there is an $i_k$ such that $scc(X^{i_k}) \cap S = de(scc(X^{i_k})) \cap S = \emptyset$. Let $scc(X^{i_k})$ be the farthest SCC away from $Y$ satisfying the above condition. If $scc(X^{i_k})$ is also a component collider and there is no directed path from $scc(X^{i_k})$ to $Y$, then $scc(X^{i_k})$ is an intervened component collider.

However, if there is a directed path from $scc(X^{i_k})$ to $Y$ as $X^j \rightarrow \cdots \rightarrow Y$ where $j \in scc(X^{i_k})$, then we consider paths having a shape as 
$$
I \rightarrow scc(X^{i_n}) \leftrightarrow \cdots \leftrightarrow scc(X^{i_k}) \rightarrow \cdots \rightarrow Y.
$$
Since $scc(X^{i_k})$ is the farthest SCC such that elements in $scc(X^{i_k})$ and $de(scc(X^{i_k}))$ are not in $S$, nodes in $scc(X^{i_n}), \cdots, scc(X^{i_{k+1}})$ can not act the collider on $P$ such that itself and its descendants are not in $S$. At the same time, nodes on $X^j \rightarrow \cdots \rightarrow Y$ are not in $S$. So these paths can not be $\sigma$-blocked by $S$, contradicting the condition. Thus, there is no directed path from $scc(X^{i_k})$ to $Y$.

Then we look at whether $scc(X^{i_k})$ is a component collider. 

\begin{enumerate}[(i)]
    \item $I \rightarrow scc(X^{i_n}) \twoset{-}{\leftrightarrow} \cdots \twoset{\leftarrow}{\leftrightarrow} scc(X^{i_k}) \twoset{-}{\leftrightarrow} \cdots \twoset{-}{\leftrightarrow} scc(X^{i_1}) \leftarrow Y.$
    
    If there is a leftward arrow starting from $scc(X^{i_k})$, there must be a component collider between $I$ and $scc(X^{i_k})$. Let $scc(X^{i_q})$ is the closest one to $scc(X^{i_k})$, then $i_q \in de(scc(X^{i_k}))$ and $de(X^{i_q}) \in de(scc(X^{i_k}))$. So $scc(X^{i_q}) \cap S = de(scc(X^{i_q})) \cap S = \emptyset$, which is impossible as $scc(X^{i_k})$ is the farthest SCC satisfying such property.
    
    \item $I \rightarrow scc(X^{i_n}) \twoset{-}{\leftrightarrow} \cdots \twoset{-}{\leftrightarrow} scc(X^{i_k}) \twoset{\rightarrow}{\leftrightarrow} \cdots \twoset{-}{\leftrightarrow} scc(X^{i_1}) \leftarrow Y.$
    
    If there is a rightward arrow starting from $scc(X^{i_k})$, there must be a component collider between $Y$ and $scc(X^{i_k})$. Let $scc(X^{i_q})$ is the closest one to $scc(X^{i_k})$, then $i_q \in de(scc(X^{i_k}))$ and $de(X^{i_q}) \subset de(scc(X^{i_k}))$. Besides, if there is a directed path from $scc(X^{i_q})$ to $Y$, it will generate a directed path from $scc(X^{i_k})$ to $Y$ as $i_q \in de(scc(X^{i_k}))$, which is impossible as we state before. Thus $scc(X^{i_q})$ is an intervened component collider.
\end{enumerate}

Therefore, if $I \indep_\sigma Y | X^S$, then there must be an intervened component collider such that itself and its descendant set have no intersection with the condition set $S$.

\end{proof}

By Proposition \ref{remove sub}, for each intervened component district, an intervention-stable set should remove elements in one intervened component collider and its descendant set, in order to $\sigma$-separate all interventions and $Y$. Intuitively, selecting one intervened component collider per intervened component district when constructing an eligible set of components $E$, as well as choosing the one furthest from Y, may make the intervention-stable set $N^{-int}(E^\prime)$ contain more variables.

\begin{corollary}
Given a graphical model under Setting \ref{Setting 4}, let $E^\prime$ be the eligible set of components consisting of the farthest intervened component collider away from $Y$ on each intervened component district. $N^{int}(E^\prime)$ is the intervention set given by $E^\prime$. And
$$
N^{-int}(E^\prime) = \{1,\cdots,p\} \backslash N^{int}(E^\prime).
$$
Then $N^{-int}(E^\prime)$ is an intervention-stable set with respect to all the interventions. 
\end{corollary}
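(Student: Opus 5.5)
The corollary is a special case of Theorem~\ref{theorem 4}, so the plan is to check that $E^\prime$ is an eligible set of components in the sense of \eqref{eligible}. Once that is done, $N^{int}(E^\prime)$ is by definition the intervention set attached to $E^\prime$, and Theorem~\ref{theorem 4} gives directly that $N^{-int}(E^\prime)=\{1,\cdots,p\}\backslash N^{int}(E^\prime)$ satisfies $I^l \indep_\sigma Y \mid X^{N^{-int}(E^\prime)}$ for every $l=1,\cdots,m$.

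To verify eligibility, let $s^1,\cdots,s^l$ be all intervened component districts. By the proposition guaranteeing that every intervened component district carries at least one intervened component collider, the set of intervened component colliders on each $s^k$ is nonempty. It is also finite, since the graph has finitely many nodes.

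Order the SCCs along $s^k$ by their position, from $scc(X^{i_1})$, which is adjacent to $Y$, up to $scc(X^{i_n})$, which receives the intervention. The intervened component collider with the largest index is then well defined; call it $c^{k,1}$. Taking $i_k=1$ and
$$
E^\prime=\{c^{1,1},\cdots,c^{l,1}\}
$$
gives a set of the form \eqref{eligible} with $i_k\geq 1$ for every $k$, so $E^\prime$ is eligible.

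The one point that needs care is the meaning of ``farthest''. If the same SCC appears twice on a district, which is allowed for non-adjacent positions, the word could be ambiguous. Indexing by position along the district removes the ambiguity, and in any case eligibility only needs one chosen collider per district. If there are no intervened component districts at all, $E^\prime=\emptyset$, so $N^{int}(E^\prime)=\emptyset$. The conclusion still holds, either as the degenerate case of Theorem~\ref{theorem 4} or directly by the case analysis in its proof, where shapes (iv) do not occur. No real obstacle is expected beyond this well-definedness check.
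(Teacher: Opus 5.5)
Your proposal is correct and takes essentially the paper's approach: the paper's proof defers to Theorem~\ref{theorem 4}. Your explicit check that $E^\prime$ is an eligible set of components supplies the step that reduction leaves implicit. That check picks exactly one intervened component collider per district, using the existence proposition, with ``farthest'' made well defined by position along the district.
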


\begin{proof}
The proof is similar to the proof of Theorem \ref{theorem 4}.
\end{proof}

Under more assumptions, we can see that the stable frontiers determined by different eligible set of components are the same. Therefore, we define the unique stable frontier as stable blanket of $Y$ under interventions.

\begin{theorem}
\label{theorem 5}
Given a graphical model under Setting \ref{Setting 4}, if one of the two assumptions is satisfied, 
\begin{enumerate}[(i)]
    \item there is no intervened component district, or
    \item for every pair of disjoint intervened component colliders on one intervened component district, there is an intervened component district with only one intervened component collider such that the intervened component collider has descendants in both intervened component colliders of the pair.
\end{enumerate}

\noindent then the $N^{int}(E)$ keeps the same among different eligible sets of components $E$. Thus the stable frontier $CB_I(Y, E)$ exists and is unique. We call it the stable blanket of $Y$, denoted by $SB_I(Y)$.
\end{theorem}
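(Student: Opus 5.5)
The plan follows the proof of Theorem \ref{theorem 2}, with intervened component colliders and their descendant sets playing the role of sub-district colliders. The target is to show that $N^{int}(E)$ does not depend on the eligible set $E$. Uniqueness of the stable frontier then follows at once. By the preceding theorem, $SF_I(Y,E)$ is the union of $re(Y)$, $pa(re(Y))$, $re(ch(Y))$ and $pa(re(ch(Y)))$, computed in the sub-graph $\mathcal{G}^{(N^{-int}(E),Y,I)}$. That sub-graph is determined by $N^{-int}(E)=\{1,\cdots,p\}\backslash N^{int}(E)$, so equal intervention sets give equal stable frontiers.

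\emph{Case (i).} If there is no intervened component district, every eligible set $E$ is empty. Hence $N^{int}(E)=\emptyset$ and $N^{-int}(E)=\{1,\cdots,p\}$. The stable frontier is then the Markov blanket of $Y$ in the DMG, as characterized in the Markov blanket proposition of Section 5.2, and it is unique.

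\emph{Case (ii).} Fix two eligible sets $E^1,E^2$. It suffices to show $N^{int}(E^1)\subseteq N^{int}(E^2)$, since the reverse inclusion follows by symmetry. Take $j\in N^{int}(E^1)$. Then $j$ lies in, or descends from, some intervened component collider $c\in E^1$ on a district $s^h$. Choose any collider $c'\in E^2$ on the same district $s^h$; one exists because every eligible set picks at least one collider per district.

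\begin{itemize}
\item If $c=c'$, we are done.
\item If $c$ and $c'$ are distinct SCCs, they are disjoint, so hypothesis (ii) applies to the pair $\{c,c'\}$. It gives an intervened component district $s^{h'}$ with a unique intervened component collider $c^\ast$. Every eligible set must contain $c^\ast$, so $c^\ast\in E^2$. By hypothesis, $c^\ast$ has descendants in both $c$ and $c'$. Since $c$ is strongly connected, a descendant of $c^\ast$ in $c$ makes all of $c$ descendants of $c^\ast$, and hence all of $de(c)$ as well. Therefore $j\in c\cup de(c)\subseteq c^\ast\cup de(c^\ast)\subseteq N^{int}(E^2)$.
\end{itemize}

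\emph{Main obstacle.} The delicate step is the SCC closure argument in the second bullet. The hypothesis only says that $c^\ast$ has descendants \emph{in} $c$, not that $c\subseteq de(c^\ast)$. I would close this gap by noting that if $v\in c\cap de(c^\ast)$, then every $u\in c$ satisfies $u\in de(v)$ by strong connectivity, so $u\in de(c^\ast)$. If $de$ is taken to exclude the node itself and $c^\ast=c$ is possible, the definition $N^{int}(E)=\bigcup(c\cup de(c))$ still covers $c$.

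A second point needs care: one might worry about a pair where only one of $c,c'$ is in $E^2$. The argument above sidesteps this, because it uses only $c^\ast\in E^2$, which holds for every eligible set. The first bullet handles the case $c=c'$, so no other pair configuration needs to be treated.

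Once $N^{int}(E^1)=N^{int}(E^2)$, the stable frontier $SF_I(Y,E)$ is independent of $E$. The notation $CB_I(Y,E)$ in the statement should be read as $SF_I(Y,E)$. Its intervention stability is inherited from the preceding theorem, which justifies naming it $SB_I(Y)$.
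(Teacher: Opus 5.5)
Your proposal is correct and takes essentially the same route as the paper. In both, a district with a single intervened component collider forces that collider into every eligible set, and its descendants then absorb both members of any differing pair, so $N^{int}(E)$ cannot depend on $E$. Your version is slightly tighter than the paper's in two ways. You apply hypothesis (ii) only to two colliders on the same district, whereas the paper pairs arbitrary colliders of $E^1$ and $E^2$. You also make explicit the SCC-closure step $c\cup de(c)\subseteq c^\ast\cup de(c^\ast)$, which the paper leaves implicit.
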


\begin{proof}
Firstly, if there is no intervened component district in the graphical model, then the $N^{int}$(E) is empty, thus the stable frontier is exact the Markov blanket of $Y$. 

Otherwise, given two different eligible sets of components $E^1$ and $E^2$, there must be an intervened component district where $E^1$ and $E^2$ have different choices of intervened component colliders. Let the intervened component colliders of $E^1$ be $(s^1,\cdots,s^n)$ and the intervened component colliders of $E^2$ be $(c^1,\cdots,c^m)$. For every $(i, j)$, $i=1,\cdots,n$ and $j=1,\cdots,m$, there is a common intervened component collider $l^{i,j}$ collider in both $E^1$ and $E^2$, because $l^{i,j}$ is the only intervened component collider on an intervened component district. According to the assumption, $de(l^{i,j}) \cap s^i \neq \emptyset$ and $de(l^{i,j}) \cap c^j \neq \emptyset$, so $X^{c^j} \subseteq N^{-int}(E^1)$ and $X^{s^i} \subseteq N^{-int}(E^2)$. Besides, it has $X^{c^j} \subseteq N^{-int}(E^2)$ and $X^{s^i} \subseteq N^{-int}(E^1)$. Thus, $N^{-int}(E^1)$ or $N^{-int}(E^2)$ is not affected by different eligible sets of components. 
\end{proof}

\section{Interventions on Districts}

In Setting \ref{Setting 2}, we assume that there is no interventions on $dis(Y)$. By Proposition \ref{sub-district collider}, the aim of the assumption is to guarantee that there is a sub-district collider on each intervened sub-district. However, in some cases, although there are interventions on $dis(Y)$, it is still possible to construct a set of predictors which can explain variations of interventions. By Proposition \ref{remove sub hidden}, there must be a sub-district collider on each intervened sub-district. We will start from this necessary condition to prove it is sufficient. Moreover, we can loosen the assumptions such that it can contains case with interventions on $dis(Y)$.

\begin{definition}[Intervened District]
Assume there is an intervention $I$ acting on the district of of Y. Then an intervened district is a series of nodes $(i_1, \cdots, i_n)$ which can be connected as 
\begin{equation}
    I \rightarrow X^{i_n} \twoset{-}{\leftrightarrow} X^{i_{n-1}} \twoset{-}{\leftrightarrow} \cdots \twoset{-}{\leftrightarrow} X^{i_1} \twoset{-}{\leftrightarrow} Y \label{int dis}
\end{equation}
where $i_1,\cdots,i_n \in \{1,\cdots,p\}$.

\end{definition}

Similar to Proposition \ref{remove sub hidden}, there is a new proposition.

\begin{definition}[District Collider]
A district collider on an intervention district is a node if satisfying that
\begin{enumerate}[(i)]
    \item edges preceding and succeeding it in the intervention district have an arrowhead into it, and
    \item there is no directed path from it to $Y$.
\end{enumerate}
\end{definition}

\begin{proposition}
For each intervened district 
$$
I \rightarrow X^{i_n} \twoset{-}{\leftrightarrow} X^{i_{n-1}} \twoset{-}{\leftrightarrow} \cdots \twoset{-}{\leftrightarrow} X^{i_1} \twoset{-}{\leftrightarrow} Y,
$$
if a condition set $X^S$ can $m$-block all paths between $I$ and $Y$, then there exists a district collider such that itself and its descendants are not in the condition set.
\end{proposition}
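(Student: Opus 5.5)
The argument follows the proof of Proposition~\ref{remove sub hidden}, with the child-of-$Y$ endpoint replaced by the district endpoint $X^{i_1}\twoset{-}{\leftrightarrow} Y$. It has three steps.

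First, specialise to the all-bidirected path
\[
I \rightarrow X^{i_n} \leftrightarrow X^{i_{n-1}} \leftrightarrow \cdots \leftrightarrow X^{i_1} \leftrightarrow Y,
\]
which is a path because each connection $\twoset{-}{\leftrightarrow}$ contains a bidirected edge. On it, every $X^{i_k}$ with $1\le k\le n$ is a collider. Since $X^S$ $m$-blocks it, some $i_k$ satisfies $i_k\notin S\cup an(S)$. Among all such indices, choose the one with $k$ largest, i.e. farthest from $Y$. Then for every $h>k$ we have $i_h\in S$ or $de(X^{i_h})\cap S\neq\emptyset$.

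Second, show that $X^{i_k}$ has no directed path to $Y$. Suppose $X^{i_k}\rightarrow\cdots\rightarrow Y$. Concatenate it with the bidirected prefix to get
\[
I\rightarrow X^{i_n}\leftrightarrow\cdots\leftrightarrow X^{i_k}\rightarrow\cdots\rightarrow Y.
\]
If the directed path revisits a prefix node, shortcut it first. On this walk:
\begin{enumerate}[(i)]
\item the colliders $X^{i_n},\dots,X^{i_{k+1}}$ lie in $S\cup an(S)$ by maximality of $k$;
\item $X^{i_k}$ and all later nodes are non-colliders that are descendants of $X^{i_k}$, so none of them is in $S$.
\end{enumerate}
Hence the walk is $m$-connecting, contradicting the hypothesis.

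For the shortcut, assume the directed path meets a prefix node $X^{i_h}$ with $h>k$. Then $X^{i_h}\in de(X^{i_k})$, which makes it a candidate further from $Y$, contradicting maximality; if that fails, we pass to a subpath. I expect this point, together with acyclicity, to need care.

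Third, establish condition (i), that both edges at $X^{i_k}$ on the intervened district are arrowheads into it. Argue as in cases (i)--(ii) of Proposition~\ref{remove sub hidden}.
\begin{enumerate}[(i)]
\item \emph{An arrow $X^{i_k}\rightarrow X^{i_{k+1}}$.} Then $X^{i_{k+1}}$ is a descendant of $X^{i_k}$, hence not in $S\cup an(S)$. This contradicts maximality.
\item \emph{An arrow $X^{i_k}\rightarrow X^{i_{k-1}}$.} Follow the maximal run of arrows pointing toward $Y$ from $X^{i_k}$ to reach some $X^{i_s}$ with $s<k$. This node is a descendant of $X^{i_k}$, so it avoids $S\cup an(S)$, and it has arrowheads on both sides. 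The run cannot reach $Y$, because otherwise $X^{i_k}\rightarrow\cdots\rightarrow Y$ along the district, contradicting the second step. Finally, $X^{i_s}$ has no directed path to $Y$, since composing with $X^{i_k}\rightarrow\cdots\rightarrow X^{i_s}$ would again contradict the second step.
\end{enumerate}
So either $X^{i_k}$ or $X^{i_s}$ is a district collider, and it together with its descendants avoids $S$.

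The main obstacle, compared to Proposition~\ref{remove sub hidden}, is the right endpoint. The last edge may be $X^{i_1}\rightarrow Y$ rather than $\leftarrow Y$. Case (ii) must therefore rule out the run reaching $Y$, and this is exactly what the second step supplies. Everything else is a direct transcription of the earlier argument.
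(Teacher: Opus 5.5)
Your proposal is correct and follows the paper's proof essentially step for step. You specialize to the all-bidirected path, take the farthest unblocked collider $X^{i_k}$, rule out a directed path to $Y$ by concatenation, exclude an arrow toward $X^{i_{k+1}}$ by maximality, and then follow arrows toward $Y$ to a descendant collider; your ``maximal run'' is a slightly more careful version of the paper's ``closest district collider''. The hedge in your second step can be dropped: maximality of $k$ already shows that a directed path from $X^{i_k}$ cannot meet any $X^{i_h}$ with $h>k$, and $I$ is a source, so the concatenation is a genuine path.
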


\begin{proof}
$X^S$ can $m$-block all paths between $I$ and $Y$, so it can $m$-block
$$
I \rightarrow X^{i_n} \leftrightarrow X^{i_{n-1}} \leftrightarrow \cdots \leftrightarrow X^{i_1} \leftrightarrow Y.
$$
There must be a collider on the path such that itself and its descendants are not in $S$. Let $X^{i_k}$ is the farthest such collider away from $Y$. Then there is no directed path from $X^{i_k}$ to $Y$. Otherwise, the path
$$
I \rightarrow X^{i_n} \leftrightarrow \cdots \leftrightarrow X^{i_k} \rightarrow \cdots \rightarrow Y 
$$
can not be $m$-blocked by $X^S$. Moreover, $X^{i_k}$ can not have an directed edge pointing to $X^{i_{k+1}}$ (otherwise, $X^{i_{k+1}}$ and its descendants are not in $S$, contradicting that $X^{i_k}$ is the farthest node). If $X^{i_k}$ have an directed edge pointing to $X^{i_{k-1}}$, then there must be a district collider between $X^{i_k}$ and $Y$ (otherwise, there is a directed path from $X^{i_k}$ to $Y$). Let $X^{i_q}$ be the closest district collider to $X^{i_k}$; then $i_q \in de(X^{i_k})$, which means there is no directed path from $X^{i_q}$ to $Y$. So $X^{i_q}$ is a district collider on the intervened district.
\end{proof}

Conversely, given a district collider on the intervened district, then all paths having shape as \eqref{int dis} can be $m$-blocked by a condition set if the district collider and its descendants are not in the condition set.

\begin{setting}
\label{Setting 5}
So rather than assume no interventions act on $dis(Y)$, we replace the assumption in Setting \ref{Setting 2} by the assumption that for each intervened district and intervened sub-district, there is at least one district collider and sub-district collider, respectively. We call this as Setting 6.1.
\end{setting}

In Setting \ref{Setting 5}, we can define overall set of colliders.

\begin{definition}[Overall Set of Colliders]
Let $s^1,\cdots,s^m$ be all intervened sub-districts and $d^1,\cdots,d^n$ be all intervened districts, an overall set of colliders $O$ is a set such that 
$$
O = \{c^{1,1}, \cdots c^{1,i_1}, \cdots, c^{m,1}, \cdots c^{m,i_m}, b^{1,1}, \cdots b^{1,j_1}, \cdots, b^{n,1}, \cdots c^{n,i_n} \}
$$
where $c^{k,\cdot}$ is a sub-district collider on $S^k$ and $b^{h,\cdot}$ is a district collider on $d^h$, $k =1,\cdots,m,\ h = 1,\cdots,n$.
\end{definition}

We overuse $N^{int}$ to denote the set consists of an overall set of colliders and their descendants. And we say its complementary set is intervention-stable.

\begin{theorem}
Given a graphical model in Setting \ref{Setting 5}, let $O$ be an overall set of colliders. $N^{int}(O)$ is the intervention set consisting of $O$ and $de(X^O)$. And
$$
N^{-int}(O) = \{1,\cdots,p\} \backslash N^{int}(O).
$$
Then $N^{-int}(O)$ is an intervention-stable set with respect to all the interventions. 
\end{theorem}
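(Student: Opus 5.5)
The plan mirrors the proof of Theorem \ref{theorem 1}. We fix $l\in\{1,\cdots,m\}$ and a path $P$ between $I^l$ and $Y$, and show that $P$ is $m$-blocked by $X^{N^{-int}(O)}$. Two facts are used throughout.

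\textbf{Fact (a).} No element of $N^{int}(O)$ is an ancestor of $Y$. The reason is that every sub-district collider and every district collider has, by definition, no directed path to $Y$, and this property is inherited by all their descendants.

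\textbf{Fact (b).} $N^{int}(O)$ is closed under taking descendants. Hence any collider on $P$ that lies in $N^{int}(O)$ is outside $N^{-int}(O)\cup an(N^{-int}(O))$, and so it blocks $P$.

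We classify $P$ by how it enters $Y$, following the six shapes (i)--(vi) in the proof of Theorem \ref{theorem 1}. There is one new possibility, coming from the weakened assumption: a shape (vii) in which the intervention acts directly on $dis(Y)$, namely
$$
I^l \rightarrow X^{i_n} \twoset{-}{\leftrightarrow} \cdots \twoset{-}{\leftrightarrow} X^{i_1} \twoset{-}{\leftrightarrow} Y .
$$

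For shapes (i), (ii), (iii), (v) and (vi) we reuse the arguments of Theorem \ref{theorem 1} verbatim, with $N^{int}(C)$ replaced by $N^{int}(O)$. These arguments use only three ingredients: Facts (a) and (b), and the existence of a "closest collider" whenever a segment starts at a node of $N^{int}(O)$ with an outgoing arrow and ends with an arrowhead. None of these depends on the excluded assumption "no interventions on $dis(Y)$."

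One point needs re-checking in case (i). There, $k\in pa(Y)$ must lie in $N^{-int}(O)$, and Fact (a) gives this. In the earlier setting the intervention assumption also gave it, so the earlier reasoning is not needed here. Similarly, in case (ii) with the intervention entering $pa(dis(Y))$, the argument "if $k\in N^{int}(O)$ then some collider between $X^k$ and $Y$ is a descendant of $X^k$" again needs only Fact (a). Case (iv) is handled by the sub-district colliders in $O$ through Proposition \ref{Block}, exactly as before.

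The new case (vii) is the one I expect to be the main obstacle. Suppose the path $P$ is itself an intervened district. By Setting \ref{Setting 5} and the construction of $O$, some district collider $X^{b}$ on this intervened district belongs to $O$. The collider conditions in the definition are stated at the level of the shape, so $X^b$ is a collider on every path of that shape. Since $b$ and $de(X^b)$ lie in $N^{int}(O)$, Proposition \ref{Block} blocks $P$.

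The delicate part is paths that enter $dis(Y)$ from a non-district node. These have the form
$$
I^l\rightarrow\cdots X^k \rightarrow X^{i_n}\twoset{-}{\leftrightarrow}\cdots Y
\quad\text{or}\quad
I^l\rightarrow\cdots X^k \leftarrow X^{i_n}\twoset{-}{\leftrightarrow}\cdots Y,
$$
where the intervention reaches $dis(Y)$ only indirectly. We treat them as cases (ii) and (iii) of Theorem \ref{theorem 1}. If the relevant non-collider ($X^k$ or $X^{i_n}$) lies in $N^{-int}(O)$, it blocks $P$. Otherwise it lies in $N^{int}(O)$, and we take the collider closest to it in the appropriate direction. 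In the first form we move toward $Y$, and a collider must occur because of Fact (a). In the second form we move toward $I^l$, and a collider must occur because $P$ starts with $I^l\rightarrow$. In either form this collider is a descendant of a node in $N^{int}(O)$, so Fact (b) applies.

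I would double-check one thing explicitly. When the segment passes through $dis(Y)$ with bidirected edges, the "closest collider" may be a node of $dis(Y)$ that is a descendant of a district collider. Fact (b) still covers this. Collecting all the cases shows that $I^l\indep_m Y\mid X^{N^{-int}(O)}$ for every $l$, which is what the theorem asserts.
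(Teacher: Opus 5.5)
Your proposal is correct and follows the paper's route: reuse the shape-by-shape analysis of Theorem~\ref{theorem 1}, and note that the only genuinely new shape is the intervened district, where $I^l$ points directly into $dis(Y)$. That shape is blocked by its district collider in $O$, via Proposition~\ref{Block} and the descendant-closure of $N^{int}(O)$. Your explicit check that the old cases rely only on Facts (a) and (b), and not on the dropped assumption about $dis(Y)$, is exactly what the paper's one-line proof leaves implicit.
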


\begin{proof}
We can also divide paths by their shapes between interventions $I^l$ and $Y$. The only difference is that 
$$
I^l \rightarrow X^{i_n} \twoset{-}{\leftrightarrow} X^{i_{n-1}} \twoset{-}{\leftrightarrow} \cdots \twoset{-}{\leftrightarrow} X^{i_1} \twoset{-}{\leftrightarrow} Y
$$
can appear in Setting \ref{Setting 5}. Since there is a district collider and its descendants are not in $N^{-int}(O)$, the paths with this shape can also be $m$-blocked by $X^{N^{-int}(O)}$. Therefore, $N^{-int}(O)$ is intervention-stable with respect to all interventions.
\end{proof}

Thereafter, we can define the stable frontier and the stable blanket in Setting \ref{Setting 5}, likewise Section 3.

\section{Discussion}
In this paper, for three settings: containing only hidden variables, containing only cycles, and containing both hidden variables and cycles, we find Markov blankets and stable blankets from different graphs, respectively. The Markov blanket is optimal for the set of predictors when there is only one environment, while the stable blanket is optimal for multiple environments given that it can be generalized to unseen environments. When there are enough interventions, the Markov blanket converges to the stable blanket. At the same time, we discuss the assumptions about the interventions and find a sufficient assumption that allows the existence of the stable blanket to be guaranteed. However, we have not yet discussed the condition for faithfulness, and how to identify the optimal set of predictors from the data and give a causal explanation would be a direction worth exploring in depth.

\section*{Acknowledgements}
I am grateful to Niklas Pfister for his supervision and many helpful discussions on this work.

\end{document}